\def\p{\mathrm{P}}
\def\rnk{\mathrm{rank}}
\def\G{\Gamma}
\newcommand{\B}{\sum_{(k,i)\in E} (e^{M}_i \otimes I_d){x_{k,i}}e_{ki}^T  }
\newcommand{\D}{\sum_{(k,i)\in E}(e^{M}_i \otimes I_d) x_{k,i}{x_{k,i}}^T (e^{M}_i \otimes I_d)^T }
\newcommand{\Tr}{\mathrm{Tr}}
\newcommand{\Zs}{Z^\star}
\newcommand{\Ws}{W^\star}
\newcommand{\Os}{O^{\star}}
\newcommand{\Zss}{Z^{\star \star}}
\newcommand{\Oss}{O^{\star \star}}
\newcommand{\Gs}{G^\star}
\newcommand{\defn}{\stackrel{\text{def}}{=}}
\newcommand{\vertiii}[1]{{\left\vert\kern-0.25ex\left\vert\kern-0.25ex\left\vert #1
\right\vert\kern-0.25ex\right\vert\kern-0.25ex\right\vert}}
\newtheorem{theorem}{Theorem}
\newtheorem{definition}[theorem]{Definition}
\newtheorem{observation}[theorem]{Observation}
\newtheorem{proposition}[theorem]{Proposition}
\newtheorem{corollary}[theorem]{Corollary}
\newtheorem{lemma}[theorem]{Lemma}
\title[Global registration using semidefinite programming]{Global registration of multiple point clouds using semidefinite programming}
 \author{Kunal N. Chaudhury}
 \address{Department of Electrical Engineering, Indian Institute of Science, Bangalore 560012}
 \email{kunal@ee.iisc.ernet.in}
 \author{Yuehaw Khoo}
 \address{Department of Physics, Princeton University, Princeton, NJ 08540}
 \email{ykhoo@princeton.edu}
 \author{Amit Singer}
 \address{Department of Mathematics and PACM, Princeton University, Princeton, NJ 08540}
 \email{amits@math.princeton.edu}
\begin{document}
\maketitle
\begin{abstract} Consider $N$ points in $\mathbb{R}^d$ and $M$ local coordinate systems that are related through unknown rigid transforms.
For each point we are given (possibly noisy) measurements of its local coordinates in some of the coordinate systems.
Alternatively, for each coordinate system, we observe the coordinates of a subset of the points.
The problem of estimating the global coordinates of the $N$ points (up to a rigid transform) from such measurements comes up in distributed approaches
to molecular conformation and sensor network localization, and also in computer vision and graphics.

The least-squares formulation of this problem, though non-convex, has a well known closed-form solution when $M=2$ (based on the singular value decomposition). However, no closed form solution is known for $M\geq 3$.

In this paper, we demonstrate how the least-squares formulation can be relaxed into a convex program, namely a semidefinite program (SDP).
By setting up connections between the uniqueness of this SDP and results from rigidity theory, we prove conditions for exact and stable recovery for the SDP relaxation.
In particular, we prove that the SDP relaxation can guarantee recovery under more adversarial conditions compared to earlier proposed spectral relaxations, and derive error bounds for the registration error
incurred by the SDP relaxation.

We also present results of numerical experiments on simulated data to confirm the theoretical findings. We empirically demonstrate that (a) unlike the spectral relaxation, the relaxation gap is mostly zero for the semidefinite
program (i.e., we are able to solve the original non-convex least-squares problem) up to a certain noise threshold, and (b) the semidefinite program performs significantly better than spectral and manifold-optimization methods, particularly
at large noise levels.

\end{abstract}

\textbf{Keywords}: 
Global registration, rigid transforms, rigidity theory, spectral relaxation, spectral gap, convex relaxation, semidefinite program (SDP), exact recovery, noise stability.

\section{Introduction}

The problem of point-cloud registration comes up in computer vision and graphics \cite{Sharp,tzeneva2011global,859274}, and in distributed approaches to molecular conformation \cite{fang2013disco,cucuringu2013asap} and
sensor network localization \cite{cucuringu2012sensor,biswas2008distributed}. The registration problem in question is one of determining the coordinates of a point cloud
$P$ from the knowledge of (possibly noisy) coordinates  of smaller point cloud subsets (called \textit{patches}) $P_1,\ldots,P_M$ that are derived from $P$ through some general transformation.
In certain applications  \cite{mitra2004registration,tzeneva2011global,GlobalRegKrishnan}, one is often interested in finding the optimal transforms (one for each patch)
that consistently align $P_1,\ldots,P_M$. This can be seen as a sub-problem in the determination of the coordinates of $P$ \cite{cucuringu2012sensor,rusinkiewicz2001efficient}.

In this paper, we consider the problem of \textit{rigid registration} in which the points within a given $P_i$ are (ideally) obtained from $P$ through an unknown rigid transform. Moreover, we assume that the correspondence
between the local patches and the original point cloud is known, that is, we know beforehand as to which points from $P$ are contained in a given $P_i$. In fact, one has a control on the
correspondence in distributed approaches to molecular conformation \cite{cucuringu2013asap} and sensor network localization \cite{biswas2008distributed,zhang2010arap,cucuringu2012sensor}.
While this correspondence is not directly available for certain graphics and vision problems, such as multiview registration \cite{pottmann2006geometry}, it is in principle
possible to estimate the correspondence by aligning pairs of patches, e.g., using the ICP (Iterative Closest Point) algorithm \cite{besl1992method,rusinkiewicz2001efficient,huang2013consistent}.

\subsection{Two-patch Registration} The particular problem of two-patch registration has been well-studied \cite{faugeras1986representation,horn1987closed,arun1987svd}. In the noiseless setting, we are
given two point clouds $\{x_1,\ldots,x_N\}$ and $\{y_1,\ldots,y_N\}$ in $\mathbb{R}^d$, where the latter is obtained through some rigid transform of the former. Namely,
\begin{equation}
\label{model_2patch}
y_k = O x_k + t \qquad (k = 1,\ldots,N),
\end{equation}
where $O$ is some unknown $d\times d$ orthogonal matrix  (that satisfies $O^TO=I_d$)  and $t \in \mathbb{R}^d$ is some unknown translation.

The problem is to infer $O$ and $t$ from the above equations. To uniquely determine $O$ and $t$, one must have at least $N \geq d+1$ non-degenerate
points\footnote{By non-degenerate, we mean that the affine span of the points is $d$ dimensional.}. In this case, $O$ can be determined
simply by fixing the first equation in \eqref{model_2patch} and
subtracting (to eliminate $t$) any of the remaining $d$ equations from it. Say, we subtract the next $d$ equations:
\begin{equation*}
 [y_2 - y_1 \ \cdots \ y_{d+1} - y_1 ] = O  [x_2 - x_1 \ \cdots \ x_{d+1} - x_1 ].
\end{equation*}
By the non-degeneracy assumption, the matrix on the right of $O$ is invertible, and this gives us $O$. Plugging $O$ into any of the equations in \eqref{model_2patch}, we get $t$.

In practical settings, \eqref{model_2patch} would hold only approximately, say, due to noise or model imperfections.
A particular approach then would be to determine the optimal $O$ and $t$ by considering  the following least-squares program:
\begin{equation}
\label{costArun}
\min_{O \in \mathbb{O}(d), \ t \in \mathbb{R}^d} \quad  \sum_{k=1}^N \ \lVert  y_k - O x_k - t  \rVert_2^2.
\end{equation}
Note that the problem looks difficult a priori since the domain of optimization is $\mathbb{O}(d) \times  \mathbb{R}^d$, which is non-convex. Remarkably, the global minimizer of this
non-convex problem can be found exactly, and has a simple closed-form expression \cite{fan1955some,keller1975closest,higham1986,faugeras1986representation,horn1987closed,arun1987svd}.
More precisely, the optimal $\Os$ is given by $VU^T$, where $U\Sigma V^T$ is the singular value decomposition (SVD) of
\begin{equation*}
\sum_{k=1}^N  (x_k - x_c) (y_k - y_c)^T,
\end{equation*}
in which $x_c=(x_1 + \cdots + x_N)/N $ and $y_c=(y_1 + \cdots + y_N)/N$ are the centroids of the respective point clouds. The optimal translation is $t^{\star} = y_c - \Os x_c$.

The fact that two-patch registration has a closed-form solution is used in the so-called incremental (sequential) approaches for registering multiple patches \cite{besl1992method}.
The most well-known method is the ICP algorithm \cite{rusinkiewicz2001efficient} (note that ICP uses other heuristics and refinements besides registering  corresponding points). Roughly, the idea in sequential registration  is
to register two overlapping patches at a time, and then integrate the estimated pairwise transforms using some means. The integration can be achieved either locally (on a patch-by-patch basis),
or using global cycle-based methods such as synchronization \cite{Sharp,howard2010estimation,singer2011angular,tzeneva2011global,LUDLanhui}. More recently, it was demonstrated that, by locally registering overlapping patches and then integrating the pairwise transforms
using synchronization, one can design efficient and robust methods for distributed sensor network localization \cite{cucuringu2012sensor} and molecular conformation \cite{cucuringu2013asap}.
 Note that, while the registration phase is local, the synchronization method integrates the local transforms in a globally consistent manner. This makes it robust to error propagation that often plague
 local integration methods \cite{howard2010estimation,LUDLanhui}.

\subsection{Multi-patch Registration} To describe the multi-patch registration problem, we first introduce some notations. Suppose $x_1,x_2,\ldots,x_N$ are the unknown global coordinates of a
point cloud in $\mathbb{R}^d$. The point cloud is divided into patches $P_1,P_2,\ldots,P_M$, where each
$P_i$ is a subset of $\{x_1,x_2,\ldots,x_N\}$. The patches are in general overlapping, whereby a given point can belong to multiple patches.
We represent this membership using an undirected bipartite graph $\G=(V_x \cup V_P, E)$. The set of vertices $V_x= \{x_1,\ldots,x_N\}$ represents the point cloud, while $V_P= \{P_1,\ldots,P_M\}$
represents the patches. The edge set $E = E(\G)$ connects $V_x$ and $V_P$, and is given by the requirement that $(k,i) \in E$ if and only if $x_k \in P_i$.
We will henceforth refer to $\G$ as the \textit{membership graph}.

\begin{figure}[here]
\centering
\includegraphics[width=1.0\linewidth]{./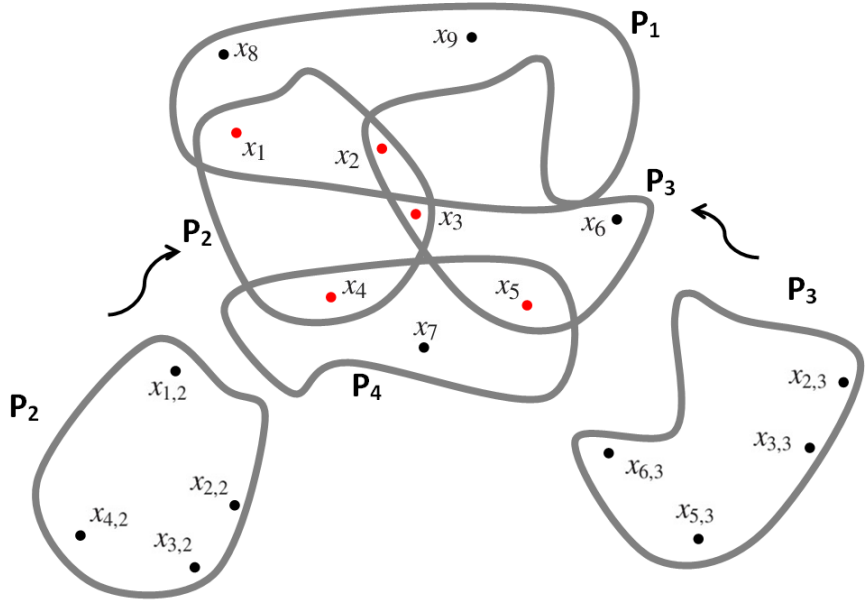}
\caption{The problem of registering $3$ patches on $\mathbb{R}^2$, where one is required to find the global coordinates of the points from the corresponding local patch coordinates.
The local coordinates of the points in patches $P_2$ and $P_3$ are shown (see \eqref{data} for the notation of local coordinates). It is the common points belonging to two or more patches (marked in red) that
contribute to the registration. Note that sequential or pairwise registration would fail in this case. This is because no pair of patches can be registered as they have less than $3$ points in common (at least 3 points are required
to fix rotations, reflections, and translations in $\mathbb{R}^2$).  The SDP-based algorithm proposed in this paper does a global registration, and is able to recover the exact global coordinates
for this example.}
\label{3P}
\end{figure}

In this paper, we assume that the local coordinates of a given patch can (ideally)  be related to the global coordinates through a single rigid transform, that is, through some rotation, reflection, and translation.
More precisely, with every patch $P_i$ we associate some (unknown) orthogonal transform $O_i$ and translation $t_i$. If point $x_k$ belongs to patch $P_i$, then its representation in $P_i$ is given
by (cf. \eqref{model_2patch} and Figure \ref{3P})
\begin{equation}
 x_{k,i}=O_i^T(x_k - t_i) \qquad (k,i) \in E(\G).
\end{equation}
Alternatively, if we fix a particular patch $P_i$, then for every point belonging to that patch,
\begin{equation}
\label{exact}
x_k = O_i x_{k,i} + t_i \qquad (k,i) \in E(\G).
\end{equation}
In particular, a given point can belong to multiple patches, and will have a different representation in the
coordinate system of each  patch.

The premise of this paper is that we are given the membership graph and the local coordinates (referred to as measurements), namely
\begin{equation}
\label{data}
\G  \quad  \text{and} \quad \{x_{k,i}, (k,i) \in E(\G)\},
\end{equation}
and the goal is to recover the coordinates $x_1,\ldots,x_N$, and in the process the unknown rigid transforms $(O_1,t_1),\ldots,(O_M,t_M)$, from \eqref{data}.
Note that the global coordinates are determined up to a global rotation, reflection, and translation. We say that two points clouds (also referred to as \textit{configurations}) are \textit{congruent} if one is obtained through a rigid
transformation of the other. We will always identify two congruent configurations as being a single configuration.

Under appropriate non-degeneracy assumptions on the measurements, one task would be to specify appropriate conditions on $\G$ under which the global coordinates can be uniquely determined.
Intuitively, it is clear that the patches must have enough points in common for the registration problem to have an unique solution.
For example, it is clear that the global coordinates cannot be uniquely recovered if $\G$ is disconnected.

In practical applications, we are confronted with noisy settings where  \eqref{exact} holds only approximately.
In such cases, we would like to determine the global coordinates and the rigid
transforms such that the discrepancy  in \eqref{exact} is minimal. In particular, we consider the following quadratic loss:
\begin{equation}
\label{obj}
\phi=  \sum_{(k,i) \in E(\G)} \ \lVert x_k - O_i x_{k,i} - t_i \rVert^2,
\end{equation}
where $\lVert \cdot \rVert$ is the Euclidean norm on $\mathbb{R}^d$. The optimization problem is to minimize $\phi$ with respect to the following variables:
\begin{equation*}
 x_1,x_2,\ldots,x_N \in \mathbb{R}^d, \quad O_1,\ldots,O_M \in \mathbb{O}(d), \quad t_1,\ldots,t_M \in \mathbb{R}^d.
\end{equation*}
The input to the problem are the measurements in \eqref{data}. Note that our ultimate goal is to determine $x_1,x_2,\ldots,x_N$; the rigid transforms can be seen as latent variables.

The problem of multipatch registration is intrinsically non-convex since one is required to optimize over the non-convex domain of orthogonal transforms.
Different ideas from the optimization literature have been deployed to attack this problem, including Lagrangian optimization and projection
methods. In the Lagrangian setup, the orthogonality constraints are incorporated into the objective; in the projection method, the constraints are forced after every step of the optimization \cite{pottmann2006geometry}.
Following the observation that the registration problem can be viewed as an optimization on the Grassmanian and Stiefel manifolds, researchers have proposed algorithms using ideas from the theory
and practice of manifold optimization \cite{GlobalRegKrishnan}. A detailed review of these methods is beyond the scope of this paper, and instead we refer the interested reader to these excellent reviews \cite{edelman1998geometry,absil2009optimization}.
Manifold-based methods are, however, local in nature, and are not guaranteed to find the global minimizer. Moreover, it is rather difficult to certify the noise stability of such methods.

\subsection{Contributions}

The main contributions of the paper can be organized into the following categories. \newline

\begin{enumerate}
\item \textbf{Algorithm}: We demonstrate how the translations can be factored out of \eqref{obj}, whereby the least-squares problem can be reduced to the following optimization:
\begin{equation}
\label{MC}
 \max_{O_1,\ldots,O_M} \ \sum_{i,j=1}^M \Tr(O_i C_{ij} O_j^T ) \quad \text{ subject to } \quad  O_1,\ldots,O_M \in \mathbb{O}(d),
\end{equation}
where  $C_{ij} \in \mathbb{R}^{d \times d} (1 \leq i,j \leq M)$ are the  $(i,j)$-th sub-blocks of some  positive semidefinite block matrix $C$ of size $Md \times Md$.
Given the  solution of \eqref{MC}, the desired global coordinates can simply be obtained by solving a linear system.
It is virtually impossible to find the global optimum of \eqref{MC} for large-scale problems ($M \gg 1$), since this involves the optimization of a quadratic cost on a huge non-convex parameter space. 
In fact, the simplest case $d=1$ with $C$ as the Laplacian matrix corresponds to the \texttt{MAX-CUT} problem, which is known to be NP-hard. 
The main observation of this paper is that \eqref{MC} can instead be  relaxed into a convex program, namely a semidefinite program, whose global optimum can be approximated to any finite precision in polynomial time using standard off-the-shelf solvers.
This yields a tractable method for global registration described in Algorithm \ref{algo:GRET}.
The corresponding algorithm derived from the spectral relaxation of \eqref{MC} that was already considered in \cite{GlobalRegKrishnan} is described in Algorithm \ref{algo:SPEC} for reference.
\newline

\item \textbf{Exact Recovery}: We present conditions on the coefficient matrix $C$ in \eqref{MC} for exact recovery using Algorithm \ref{algo:GRET}.
In particular, we show that the exact recovery questions about Algorithm \ref{algo:GRET} can be mapped into rigidity theoretic questions that have already been investigated
earlier\footnote{The authors thank the anonymous referees for pointing this out.} in \cite{zha2009spectral,gortler2013affine}.
The contribution of this section is the connection made between the $C$ matrix in \eqref{MC} and various notions of rigidity considered in these papers.
We also present an efficient randomized rank test for $C$ than can be used to certify exact recovery (motivated by the work in \cite{hendrickson1992conditions,gortler2010characterizing,MatCompletionSinger}).
\newline

\item \textbf{Stability Analysis}:  We study the stability of Algorithms \ref{algo:SPEC} and \ref{algo:GRET} for the noise model in which the patch coordinates are perturbed using noise of bounded size (note that the stability of the
spectral relaxation was not investigated in \cite{GlobalRegKrishnan}). Our main result here is Theorem \ref{thm:stability2} which states that, if $C$ satisfies a particular rank condition,
then the registration error for Algorithm  \ref{algo:GRET} is within a constant factor of the noise level. To the best of our knowledge, there is no existing algorithm for multipatch registration that comes with a
similar stability guarantee.
\newline

\item \textbf{Empirical Results}: We present numerical results on simulated data to numerically verify the exact recovery and noise stability properties of Algorithms \ref{algo:SPEC} and \ref{algo:GRET}.
Our main empirical findings are the following: \\~\\
(1) The semidefinite relaxation performs significantly better than
spectral and manifold-based optimization (say, with the spectral solution as initialization) in terms of  the reconstruction quality (cf. first plot in Figure \ref{fig:stability1}).   \\~\\
(2) Up to a certain noise level,  we are actually able to solve the original non-convex problem using the semidefinite relaxation (cf. second plot in Figure \ref{fig:stability1}).

\end{enumerate}

\subsection{Broader Context and Related Work}

The objective  \eqref{obj} is a straightforward extension of the objective for two-patches \cite{fan1955some,faugeras1986representation,horn1987closed,arun1987svd}. In fact, this objective was earlier considered
by Zhang et al. for distributed sensor localization \cite{zhang2010arap}. The present work is also closely tied to the work of Cucuringu et al. on distributed localization
\cite{cucuringu2012sensor,cucuringu2013asap}, where a similar objective is implicitly optimized. The common theme in these works is that some form of optimization is used to
globally register the patches, once their local coordinates have been determined by some means. There is, however, some fundamental differences between the various algorithms used to actually perform the optimization.
Zhang et al. \cite{zhang2010arap} use alternating least-squares to iteratively optimize over the global coordinates and the transforms, which to the best of our knowledge has no convergence guarantee.
On the other hand, Cucuringu et al. \cite{cucuringu2012sensor,cucuringu2013asap} first optimize over the orthogonal transforms (using synchronization \cite{singer2011angular}), and then solve
for the translations (in effect, the global coordinates) using least-squares fitting. In this work, we combine these different ideas into a single framework.
While our objective is similar to the one used in \cite{zhang2010arap}, we jointly optimize the rigid transforms and positions.
In particular, the algorithms considered in Section \ref{Opt} avoid the convergence issues associated with alternating least-squares in \cite{zhang2010arap}, and are
able to register patch systems that cannot be registered using the approach in \cite{cucuringu2012sensor,cucuringu2013asap}.

Another closely related work is the paper by Krishnan et al. on global registration \cite{GlobalRegKrishnan}, where the optimal transforms (rotations to be specific) are computed by
extending the objective in \eqref{costArun} to the multipatch case. The subsequent mathematical formulation has strong resemblance
with our formulation, and, in fact, leads to a subproblem similar to \eqref{MC}.
Krishnan et al. \cite{GlobalRegKrishnan} propose the use of manifold optimization to solve \eqref{MC}, where the manifold is the product manifold of rotations. However, as mentioned earlier, manifold methods generally do
not offer guarantees on convergence (to the global minimum) and stability. Moreover, the manifold in \eqref{MC} is not connected. Therefore, \textit{any} local method will fail to attain the global optimum of \eqref{MC} if the
initial guess is on the wrong component of the manifold.

It is exactly at this point that we depart from \cite{GlobalRegKrishnan}, namely, we propose to relax \eqref{MC} into a tractable semidefinite program (SDP).
This was motivated by a long line of work on the use of  SDP relaxations for non-convex (particularly NP-hard) problems. See,
for example, \cite{lovasz1991cones,goemans1995improved,wolkowicz2002semidefinite,nemirovski2007sums,candes2009exact,lerman2012robust}, and these reviews \cite{vandenberghe1996semidefinite,nesterov1998semidefinite,5447068}.
Note that for $d=1$,  \eqref{MC} is a quadratic Boolean optimization, similar to the \texttt{MAX-CUT} problem.
An SDP-based algorithm with randomized rounding  for solving \texttt{MAX-CUT} was proposed
in the seminal work of Goemans and Williamson \cite{goemans1995improved}. The semidefinite relaxation that we consider in Section \ref{Opt} is motivated by this work.
In connection with the present work, we note that provably stable SDP algorithms have been considered for low rank matrix completion \cite{candes2009exact}, phase retrieval \cite{candes2012phaselift,waldspurger2012phase},
and graph localization \cite{Montanari2012}.

We note that a special case of the registration problem considered here is the so-called generalized Procrustes problem \cite{gower2004procrustes}. Within the point-patch framework just introduced, the goal
in Procrustes analysis is to find $O_1,\ldots,O_M  \in  \mathbb{O}(d)$ that minimize
\begin{equation}
\label{proc}
  \sum_{k=1}^N \sum_{i,j=1}^M \ \lVert O_i x_{k,i} - O_j x_{k,j} \rVert^2.
\end{equation}
In other words, the goal is to achieve the best possible alignment of the $M$ patches through orthogonal transforms. This can be seen as an instance of the global registration problem without the translations ($t_1=\cdots=t_M=0$),
and in which $\Gamma$ is complete. It is not difficult to see that \eqref{proc} can be reduced to  \eqref{MC}. On the other hand, using the analysis in Section \ref{Opt}, it can be shown that \eqref{obj} is equivalent to
\eqref{proc} in this case. While the Procrustes problem is known to be NP-hard, several polynomial-time approximations with  guarantees have been proposed.
In particular, SDP relaxations of \eqref{proc} have been considered in \cite{nemirovski2007sums,so2011moment,naor2013efficient}, and more recently in \cite{Ortho-Cut}.
We use the  relaxation of \eqref{MC} considered in \cite{Ortho-Cut} for reasons to be made precise in Section \ref{Opt}.

\subsection{Notations} We use upper case letters such as $O$ to denote matrices, and lower case letters such as $t$ for vectors. We use $I_d$ to denote the identity matrix of size $d \times d$. We denote the diagonal matrix of size $n \times n$ with diagonal elements $c_1,\ldots,c_n$ as $\mathrm{diag}(c_1,\ldots,c_n)$.
We will frequently use block matrices built from smaller matrices, typically of size $d \times d$, where $d$ is
the dimension of the ambient space. For some block matrix $A$, we will use $A_{ij}$ to denote its $(i,j)$-th block, and $A(p,q)$ to denote its $(p,q)$-th entry. In particular, if each block has size $d \times d$, then
\begin{equation*}
A_{ij}(p,q) = A\big((i-1)d + p, (j-1)d + q \big)  \qquad (1 \leq p,q\leq d).
\end{equation*}
We use $A \succeq 0$ to mean that $A$ is positive semidefinite, that is, $u^T A u \geq 0$ for all $u$.
We use $\mathbb{O}(d)$ to denote the group of orthogonal transforms (matrices) acting on $\mathbb{R}^d$, and $\mathbb{O}(d)^M$ to denote the $M$-fold product of $\mathbb{O}(d)$ with itself.
We will also conveniently identify the matrix $[O_1 \cdots O_M]$ with an element of $\mathbb{O}(d)^M$ where each $O_i \in \mathbb{O}(d)$. We use $\lVert x \rVert$ to denote the
Euclidean norm of $x \in \mathbb{R}^n$ ($n$ will usually be clear from the context, and will be pointed
out if this is not so). We denote the trace of a square matrix $A$ by $\Tr(A)$. The Frobenius  and spectral norms are defined as
\begin{equation*}
\lVert A \rVert_F = \Tr(A^T A)^{1/2} \quad \text{and} \quad \|A\|_{\mathrm{sp}} = \max_{\|x\| \leq 1} \ \|Ax\|.
\end{equation*}
The Kronecker product between matrices $A$ and $B$ is denoted by $A \otimes B$ \cite{Golub}. The all-ones vector is denoted by $e$ (the dimension will be obvious from the context), and
$e^N_i$ denotes the all-zero vector of length $N$ with $1$ at the $i$-th position.

\subsection{Organization} In the next section, we present the semidefinite relaxation of the least-squares registration problem described in the introduction.
For reference, we also present the closely related spectral relaxation that was already considered in \cite{GlobalRegKrishnan,zha2009spectral,gortler2013affine}.
Exact recovery questions are addressed in section \ref{ER}, followed by a randomized test in section \ref{randTest}.
Stability analysis for the spectral and semidefinite relaxations are presented in section \ref{Stab}. Numerical simulations can be found in section \ref{Sim}, and a discussion of
certain open questions in section \ref{Conc}.

\section{Spectral and Semidefinite Relaxations}
\label{Opt}

The minimization of \eqref{obj} involves unconstrained variables (global coordinates and patch translations) and constrained variables (the orthogonal transformations).
We  first solve for the unconstrained variables in terms of the unknown orthogonal transformations, representing the former as linear combinations of the latter.
This reduces \eqref{obj} to a quadratic optimization problem over the orthogonal transforms of the form \eqref{MC}.

In particular, we combine the global coordinates and the translations into a single matrix:
\begin{equation}
\label{Z}
Z = \big[x_1 \ \cdots \ x_N \ t_1 \ \cdots \ t_M \big] \in \mathbb{R}^{d \times (N+M)}.
\end{equation}
Similarly, we combine the orthogonal transforms into a single matrix,
\begin{equation}
\label{O}
O = [O_1 \ \cdots \ O_M] \in \mathbb{R}^{d \times Md}.
\end{equation}
Recall that we will conveniently identify $O$ with an element of $\mathbb{O}(d)^M$.

To  express \eqref{obj} in terms of $Z$ and $O$, we write $x_k - t_i = Ze_{ki}$,
where
\begin{equation*}
e_{ki} = e^{N+M}_k-e^{N+M}_{N+i}.
\end{equation*}
Similarly, we write $O_i = O (e^{M}_i \otimes I_d)$. This gives us
\begin{equation*}
\phi(Z,O) =   \displaystyle \sum_{(k,i) \in E(\Gamma)} \ \lVert Ze_{ki} - O (e^{M}_i \otimes I_d) x_{k,i}  \rVert^2.
\end{equation*}

Using $\lVert x \rVert^2=\Tr(xx^T)$, and properties of the trace, we obtain
\begin{equation}
\label{phiZO}
\phi(Z,O) = \Tr \left([ Z \ \ O] \begin{bmatrix} L& -B^T \\ -B & D \end{bmatrix} \begin{bmatrix} Z^T\\O^T \end{bmatrix} \right),
\end{equation}
where
\begin{eqnarray}
\label{coeff}
L &=&  \sum_{(k,i) \in E} \ e_{ki}e_{ki}^T,  \quad B =  \B, \ \text{and} \\ \nonumber
D &=& \D.
\end{eqnarray}
The matrix $L$ is  the combinatorial graph Laplacian of $\Gamma$ \cite{chung1997spectral}, and is of size $(N+M) \times (N+M)$. The matrix $B$ is of size $Md \times (N+M)$, and
the size of the  block diagonal matrix $D$ is $Md \times Md$.

The optimization program now reads
\begin{equation*}
(\p) \qquad \min_{Z,O}\ \phi(Z,O) \quad \text{subject to} \quad  Z \in \mathbb{R}^{d \times (N+M)}, \ O \in \mathbb{O}(d)^M.
\end{equation*}
The fact that $\mathbb{O}(d)^M$ is non-convex makes $(\p)$ non-convex.
In the next few subsections, we will show how this non-convex program can be approximated by tractable spectral and convex programs.

\subsection{Optimization over translations} Note that we can write $(\p)$ as
\begin{equation*}
 \min_{O \in \mathbb{O}(d)^M} \ \Big[\min_{Z \in \mathbb{R}^{d \times (N+M)}} \ \phi(Z,O) \Big].
\end{equation*}
That is, we first minimize over the free variable $Z$ for some fixed $O \in \mathbb{O}(d)^M$, and then we minimize with respect to $O$.

Fix some arbitrary $O \in \mathbb{O}(d)^M$, and set $\psi(Z)=\phi(Z,O)$. It is clear from \eqref{phiZO} that $\psi(Z)$ is quadratic in $Z$. In particular, the stationary points
$\Zs= \Zs(O)$ of $\psi(Z)$ satisfy
\begin{equation}
\label{minZ}
\nabla \psi(\Zs) = 0 \quad \Rightarrow \quad \Zs L=OB.
\end{equation}
The Hessian of $\psi(Z)$ equals $2L$, and it is clear from \eqref{coeff} that $L \succeq 0$. Therefore, $\Zs$ is a minimizer of $\psi(Z)$. 

If $\G$ is connected, then $e$ is the only vector in the null space of $L$ \cite{chung1997spectral}.
Let $L^{\dagger}$ be the Moore-Penrose pseudo-inverse of $L$, which is again positive semidefinite. It can be verified that
\begin{equation}
\label{L_Linv}
 L L^{\dagger} = L^{\dagger} L = I_{N+M} - (N+M)^{-1} ee^T.
\end{equation}
If we right multiply \eqref{minZ} by $L^{\dagger}$, we get
\begin{equation}
\label{inversion}
\Zs = OBL^{\dagger} + te^T,
\end{equation}
where $t \in \mathbb{R}^d$ is some global translation. Conversely, if we right multiply \eqref{inversion} by $L$ and use the facts that $e^T L=0$ and $Be=0$, we get \eqref{minZ}.
Thus, every solution of \eqref{minZ} is of the form \eqref{inversion}.

Substituting \eqref{inversion} into \eqref{phiZO}, we get
\begin{equation}
\label{penalty3}
\psi(\Zs) = \phi(\Zs,O)= \Tr(C O^TO) = \sum_{i,j=1}^M \Tr(O_i C_{ij} O_j^T),
\end{equation}
where
\begin{equation}
\label{stressC}
C = \begin{bmatrix} BL^{\dagger}&I_{Md} \end{bmatrix} \begin{bmatrix} L & -B^T \\-B & D \end{bmatrix} \begin{bmatrix} L^{\dagger}B^T \\ I_{Md} \end{bmatrix} = D - B L^{\dagger} B^T.
\end{equation}
Note that \eqref{penalty3} has the global translation $t$ taken out. This is not a surprise since $\phi$ is invariant to global translations.
Moreover, note that we have not forced the orthogonal constraints on $O$ as yet. Since $\phi(Z,O) \geq 0$ for any $Z$ and $O$, it necessarily follows from \eqref{penalty3} that $C \succeq 0$.
We will see in the sequel how the spectrum of $C$ dictates the performance of the convex relaxation of \eqref{penalty3}.

In analogy with the notion of stress in rigidity theory \cite{gortler2010characterizing}, we can consider \eqref{obj} as a sum of the ``stress'' between pairs of patches when we
try to register them using rigid transforms. In particular, the $(i,j)$-th term in \eqref{penalty3} can be regarded as the stress between the (centered) $i$-th and $j$-th patches generated by the
orthogonal transforms.  Keeping this analogy in mind, we will henceforth refer to $C$ as the \textit{patch-stress matrix}.

\subsection{Optimization over orthogonal transforms}

The goal now is to optimize \eqref{penalty3} with respect to the orthogonal transforms, that is, we have reduced $(\p)$ to the following problem:
\begin{equation*}
(\p_0) \qquad \min_{O \in \mathbb{R}^{d \times Md}} \ \Tr(C O^TO) \quad \text{subject to} \quad (O^TO)_{ii} = I_d \ (1 \leq i \leq M).
\end{equation*}
This is a non-convex problem since $O$ lives on a non-convex (disconnected) manifold \cite{absil2009optimization}.
We will generally refer to any method which uses manifold optimization to solve $(\p_0)$ and
then computes the coordinates using \eqref{inversion} as ``Global Registration over Euclidean Transforms using Manifold Optimization'' (\texttt{GRET-MANOPT}).

\subsection{Spectral relaxation and rounding}

Following the quadratic nature of the objective in $(\p_0)$, it is possible to relax it into a spectral problem. More precisely, consider the domain
\begin{equation*}
\mathcal{S} = \{ O \in \mathbb{R}^{d \times Md} \ : \ \text{rows of $O$ are orthogonal and each row has norm $\sqrt{M}$}\}.
\end{equation*}
That is, we do not require the $d \times d$ blocks in $O \in \mathcal{S}$ to be orthogonal. Instead, we only require the rows of $O$ to form an orthogonal system, and each row to have the same norm.
It is clear that $\mathcal{S}$ is a larger domain than that determined by the constraints in $(\p_0)$. In particular, we consider the following relaxation of $(\p_0)$:
\begin{equation*}
(\p_1) \qquad \min_{O \in \mathcal{S}} \Tr(C O^TO).
\end{equation*}
This is precisely a spectral problem in that the global minimizers are determined from the spectral decomposition of $C$. More precisely, let $\mu_1 \leq \ldots \leq\mu_{Md}$ be eigenvalues of $C$,
and let  $r_1,\ldots,r_{Md}$ be the corresponding eigenvectors. Define
\begin{equation}
\label{O_spec}
W^{\star}  \defn \sqrt{M} \big[r_1 \cdots r_d  \big]^T \in  \mathbb{R}^{d \times Md} .
\end{equation}
Then
\begin{equation}
\label{obj_SPEC}
\Tr(C {W^{\star}}^T W^{\star} ) = \min_{O \in \mathcal{S}} \Tr(C O^TO) = M(\mu_1+\cdots+\mu_d).
\end{equation}

Due to the relaxation, the blocks of $W^{\star} $ are not guaranteed to be in $\mathbb{O}(d)$. We round each $d \times d$ block of $W^{\star} $ to its ``closest'' orthogonal matrix. More precisely, let $W^{\star}  = [W^{\star} _1 \cdots W^{\star} _M]$.
For every $1 \leq i \leq M$, we find $O_i^{\star} \in \mathbb{O}(d)$ such that
\begin{equation*}
 \lVert   \Os_i - W^{\star} _i \rVert_F = \min_{O \in \mathbb{O}(d)}  \  \lVert  O- W^{\star} _i \rVert_F.
\end{equation*}
As noted earlier, this has a closed-form solution, namely $O_i^{\star}=UV^T,$ where $U \Sigma V^T$ is the SVD of $W^{\star} _i$.
We now put the rounded blocks back into place and define
\begin{equation}
\label{optO}
\Os \defn \big[ \Os_1 \ldots \Os_M \big] \in \mathbb{O}(d)^M.
\end{equation}
In the final step, following \eqref{inversion}, we define
\begin{equation}
\label{optZ}
\Zs \defn \Os B L^{\dagger} \in \mathbb{R}^{d \times (N+M)}.
\end{equation}
The first $N$ columns of $\Zs$ are taken to be the reconstructed global coordinates.

We will refer to this spectral method as the ``Global Registration over Euclidean Transforms using Spectral Relaxation'' (\texttt{GRET-SPEC}). The main steps of \texttt{GRET-SPEC}
are summarized in Algorithm \ref{algo:SPEC}. We note that a similar spectral algorithm was proposed
for angular synchronization by Bandeira et al. \cite{bandeira2012cheeger}, and by Krishnan et al. \cite{GlobalRegKrishnan} for initializing the manifold optimization.

\begin{algorithm}
\caption{\texttt{GRET-SPEC}}
\begin{algorithmic}[1]
\REQUIRE Membership graph $\G$, local coordinates $\{x_{k,i}, (k,i) \in E(\G)\}$, dimension $d$.
\ENSURE Global coordinates $x_1,\ldots,x_N$ in $\mathbb{R}^d$.
\STATE  Build $B,L$ and $D$ in \eqref{coeff} using $\G$.
\STATE  Compute $L^{\dagger}$ and $C= D - B L^{\dagger} B^T $.
\STATE  Compute bottom $d$ eigenvectors of $C$, and set $W^{\star} $ as in \eqref{O_spec}.
\FOR {$i = 1$ to $M$}
\STATE $\Os_i \gets U_iV_i^T$.
\ENDFOR
\STATE $\Os \gets  \big[\Os_1  \cdots  \Os_M \big]$
\STATE $\Zs \gets  \Os B L^{\dagger}$.
\STATE Return first $N$ columns of $\Zs$.
\end{algorithmic}
\label{algo:SPEC}
\end{algorithm}

The question at this point is how are the quantities $\Os$ and $\Zs$ obtained from  \texttt{GRET-SPEC} related to the original problem $(\p)$?
Since $(\p_1)$ is obtained by relaxing the block-orthogonality constraint in $(\p_0)$, it is clear  that if the blocks of $W^{\star}$ are orthogonal, then $\Os$ and $\Zs$ are solutions of $(\p)$, that is,
\begin{equation*}
\phi(\Zs, \Os) \leq  \phi(Z,O) \qquad  \text{for all } Z \in \mathbb{R}^{d \times (N+M)}, \ O \in \mathbb{O}(d)^M.
\end{equation*}
We have actually found the global minimizer of the original non-convex problem $(\p)$ in this case.
\begin{observation}[Tight relaxation using \texttt{GRET-SPEC}]
\label{obs:tight_SPEC}
If the $d \times d$ blocks of the solution of $(\p_1)$ are orthogonal, then the coordinates and transforms computed by \texttt{GRET-SPEC} are the global minimizers of $(\p)$.
\end{observation}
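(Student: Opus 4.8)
The plan is to run a short ``sandwich'' argument through the relaxation hierarchy $(\p) \to (\p_0) \to (\p_1)$ already set up, using the fact that on orthogonal blocks the SVD rounding step acts as the identity. First I would pin down the direction of the relaxation at the level of feasible sets. If $O = [O_1 \cdots O_M]$ satisfies $(O^TO)_{ii} = I_d$ for all $i$, then
\[
OO^T = \sum_{i=1}^M O_iO_i^T = \sum_{i=1}^M I_d = M I_d,
\]
so the rows of $O$ are orthogonal and each has norm $\sqrt{M}$, i.e.\ $O \in \mathcal{S}$. Hence the feasible set of $(\p_0)$ is contained in $\mathcal{S}$, and therefore
\[
\min_{O \in \mathcal{S}} \Tr(C O^TO) \ \le \ \min_{(O^TO)_{ii}=I_d} \Tr(C O^TO).
\]

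Next I would invoke the hypothesis. By \eqref{O_spec}--\eqref{obj_SPEC}, $W^\star$ attains the left-hand minimum over $\mathcal{S}$; if in addition its blocks $W^\star_1,\ldots,W^\star_M$ are orthogonal, then $W^\star$ is itself feasible for $(\p_0)$. Combining this with the inequality above gives
\[
\Tr(C {W^\star}^T W^\star) = \min_{O \in \mathcal{S}} \Tr(C O^TO) \le \min_{(O^TO)_{ii}=I_d} \Tr(C O^TO) \le \Tr(C {W^\star}^T W^\star),
\]
forcing all inequalities to be equalities, so $W^\star$ solves $(\p_0)$. I would then note that an orthogonal $W^\star_i$ has all singular values equal to $1$, so its SVD $U_i\Sigma V_i^T$ has $\Sigma = I_d$ and the rounding returns $U_iV_i^T = W^\star_i$; hence the rounded matrix $\Os$ of \eqref{optO} equals $W^\star$ and is a minimizer of $(\p_0)$.

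Finally I would lift this optimum back to $(\p)$. The reduction producing \eqref{penalty3} shows $\min_Z \phi(Z,O) = \Tr(C O^TO)$ for every $O \in \mathbb{O}(d)^M$, with the minimizing $Z$ given by \eqref{inversion}; taking $\Zs = \Os B L^{\dagger}$ as in \eqref{optZ} yields $\phi(\Zs,\Os) = \Tr(C {\Os}^T\Os)$. The chain of equalities
\[
\phi(\Zs,\Os) = \Tr(C {\Os}^T\Os) = \min_{(O^TO)_{ii}=I_d} \Tr(C O^TO) = \min_{O \in \mathbb{O}(d)^M} \min_Z \phi(Z,O)
\]
(using that $(O^TO)_{ii}=I_d$ is exactly $O \in \mathbb{O}(d)^M$) identifies $\phi(\Zs,\Os)$ with the optimal value of $(\p)$, so $(\Zs,\Os)$ is a global minimizer.

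I do not expect a serious obstacle here: the statement is essentially a formal consequence of the relaxation structure established earlier. The only points that demand care are getting the direction of the feasible-set inclusion right (that $\mathcal{S}$ genuinely \emph{contains} the block-orthogonal matrices, so $(\p_1)$ relaxes $(\p_0)$), and verifying that the rounding is trivial \emph{precisely} when the blocks are already orthogonal, so that the algorithm's output $\Os$ equals the spectral minimizer $W^\star$ rather than a perturbation of it. Everything else is the bookkeeping of the equality chain.
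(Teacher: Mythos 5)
Your proof is correct and takes essentially the same route as the paper, which only sketches the argument in the sentence preceding Observation \ref{obs:tight_SPEC}: $(\p_1)$ relaxes $(\p_0)$, so a minimizer of $(\p_1)$ that happens to be feasible for $(\p_0)$ solves $(\p_0)$, the SVD rounding acts as the identity on orthogonal blocks so $\Os = W^{\star}$, and the translation elimination \eqref{penalty3}--\eqref{inversion} lifts optimality back to $(\p)$. Your explicit verifications --- that $(O^TO)_{ii}=I_d$ gives $OO^T = M I_d$, placing the feasible set of $(\p_0)$ inside $\mathcal{S}$, and that an orthogonal $W^{\star}_i$ has $\Sigma_i = I_d$ so $U_iV_i^T = W^{\star}_i$ --- are precisely the bookkeeping the paper leaves implicit.
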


If some the blocks are not orthogonal, the rounded quantities $\Os$ and $\Zs$ are only an approximation of the solution of $(\p)$.

\subsection{Semidefinite relaxation and rounding}

We now explain how we can obtain a tighter relaxation of $(\p_0)$ using a semidefinite program, for which the global
minimizer can be computed efficiently. Our semidefinite program was motivated by the line of works on the semidefinite relaxation
of non-convex problems \cite{lovasz1991cones,goemans1995improved,vandenberghe1996semidefinite,candes2009exact}.

Consider the domain
\begin{equation*}
\mathcal{C} = \{ O \in \mathbb{R}^{Md \times Md} \ : \  (O^TO)_{11}=\cdots=(O^TO)_{MM}=I_d\}.
\end{equation*}
That is, while we require the columns of each $Md \times d$ block of $O \in \mathcal{C}$ to be orthogonal, we do not force the non-convex rank constraint $\mathrm{rank}(O)=d$.
This gives us the following relaxation
\begin{equation}
\label{p2}
\min_{O \in \mathcal{C}} \Tr(C O^TO).
\end{equation}
Introducing the variable $G = O^TO$, \eqref{p2} is equivalent to
\begin{equation*}
(\p_2) \qquad \min_{G \in \mathbb{R}^{Md \times Md}} \Tr(CG) \quad \text{ subject to} \quad G \succeq 0, \ G_{ii} =  I_d \ (1 \leq i \leq M).
 \end{equation*}
This is a standard semidefinite program \cite{vandenberghe1996semidefinite} which can be solved using software packages such as \texttt{SDPT3} \cite{toh1999sdpt3} and \texttt{CVX} \cite{grant2008cvx}.
We provide details about SDP solvers and their computational complexity later in Section \ref{comp}.

Let us denote the solution of $(\p_2)$ by $\Gs$, that is,
\begin{equation}
\label{optG}
 \Tr(C\Gs) = \min_{G \in \mathbb{R}^{Md \times Md}} \  \{ \Tr(CG) : G \succeq 0, \ G_{11} = \cdots = G_{MM} = I_d  \}.
\end{equation}
By the linear constraints in $(\p_2)$, it follows that $\rnk(\Gs) \geq d$. If $\rnk(\Gs)>d$, we need to round (approximate) it by a rank-$d$ matrix.
That is, we need to project it onto
the domain of $(\p_0)$. One possibility would be to use random rounding that come with approximation guarantees; for example, see \cite{goemans1995improved,Ortho-Cut}.
In this work, we use deterministic rounding, namely the eigenvector rounding which retains the top $d$ eigenvalues and discards the remaining.
In particular, let $\lambda_1 \geq \lambda_2 \geq \cdots  \geq \lambda_{Md}$ be the eigenvalues of $\Gs$, and $q_1,\ldots,q_{Md}$ be the corresponding eigenvectors. Let
\begin{equation}
\label{rank_enforce}
W^{\star} \defn \big[ \sqrt{\lambda_1} q_1 \  \cdots \ \sqrt{\lambda_d}  q_d \big]^T \in \mathbb{R}^{d \times Md}.
\end{equation}

We now proceed as in the \text{GRET-SPEC}, namely, we define $\Os$ and $\Zs$ from $W^{\star}$ as in \eqref{optO} and \eqref{optZ}.
We refer to the complete algorithm as ``Global Registration over Euclidean Transforms using SDP'' (\texttt{GRET-SDP}).
The main steps of \texttt{GRET-SDP} are summarized in Algorithm \ref{algo:GRET}.

\begin{algorithm}
\caption{\texttt{GRET-SDP}}
\begin{algorithmic}[1]
\REQUIRE Membership graph $\G$, local coordinates $\{x_{k,i}, (k,i) \in E(\G)\}$, dimension $d$.
\ENSURE  Global coordinates $x_1,\ldots,x_N$ in $\mathbb{R}^d$.
\STATE   Build $B,L$ and $D$ in \eqref{coeff} using $\G$.
\STATE  Compute $L^{\dagger}$ and $C= D - B L^{\dagger} B^T $.
\STATE  $\Gs \gets$ Solve the SDP $(\p_2)$ using $C$.
\STATE Compute top $d$ eigenvectors of $\Gs$, and set $W^{\star}$ using \eqref{rank_enforce}.
\FOR {$i = 1$ to $M$}
\STATE Compute $W_i^{\star}= U_i \Sigma_i V_i^T$.
\STATE $\Os_i \gets U_iV_i^T$.
\ENDFOR
\STATE $\Os \gets  \big[\Os_1  \cdots  \Os_M \big]$
\STATE $\Zs \gets \Os B L^{\dagger}$.
\STATE Return first $N$ columns of $\Zs$.
\end{algorithmic}
\label{algo:GRET}
\end{algorithm}

Similar to Observation \ref{obs:tight_SPEC}, we note the following for \texttt{GRET-SDP}.
\begin{observation}[Tight relaxation using \texttt{GRET-SDP}]
\label{obs:tight_SDP}
If the rank of the solution of $(\p_2)$ is exactly $d$, then the coordinates and transforms computed by \texttt{GRET-SDP} are the global minimizers of $(\p)$.
\end{observation}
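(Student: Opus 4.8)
The plan is to sandwich the optimal value of $(\p)$ between the successive relaxations and then show that, under the rank-$d$ hypothesis, the rounding step of \texttt{GRET-SDP} is lossless, so that the relaxed optimum is in fact attained by a genuinely feasible point of $(\p)$. First I would record the easy direction, that every step in the derivation of $(\p_2)$ only enlarges the feasible set. The translation elimination \eqref{penalty3} identifies $\mathrm{val}(\p)$ with $\mathrm{val}(\p_0)$, since for fixed $O$ the inner minimum over $Z$ equals $\Tr(CO^TO)$ and the constraint $(O^TO)_{ii}=I_d$ on a $d\times Md$ matrix is exactly the statement $O\in\mathbb{O}(d)^M$. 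Appending $(M-1)d$ zero rows to a feasible $O$ of $(\p_0)$ embeds it into the relaxation \eqref{p2} without changing $O^TO$, hence without changing either the objective or the diagonal-block constraints; together with the substitution $G=O^TO$ (which makes $(\p_2)$ and \eqref{p2} share the same value, as noted after \eqref{p2}) this gives
\[
\mathrm{val}(\p_2) \;\le\; \mathrm{val}(\p_0) \;=\; \mathrm{val}(\p).
\]

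The heart of the argument is an exact factorization of $\Gs$ under the rank hypothesis. Since $\Gs\succeq 0$, its eigenvalues satisfy $\lambda_1\ge\cdots\ge\lambda_{Md}\ge 0$; if $\rnk(\Gs)=d$ then $\lambda_{d+1}=\cdots=\lambda_{Md}=0$, so the matrix $W^{\star}$ of \eqref{rank_enforce} obeys ${W^{\star}}^T W^{\star}=\sum_{l=1}^{d}\lambda_l q_l q_l^T=\Gs$ with no truncation error. Reading off the $i$-th diagonal block of this identity gives $(W^{\star}_i)^T W^{\star}_i=(\Gs)_{ii}=I_d$, so each block $W^{\star}_i$ is \emph{already} orthogonal. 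Consequently the SVD rounding in \eqref{optO} is a no-op: an orthogonal matrix has all singular values equal to one, so $W^{\star}_i=U_i V_i^T=\Os_i$, whence $\Os=W^{\star}\in\mathbb{O}(d)^M$. Thus $\Os$ is feasible for $(\p_0)$ and attains
\[
\mathrm{val}(\p_0) \;\le\; \Tr(C\Os^T\Os) \;=\; \Tr(C\Gs) \;=\; \mathrm{val}(\p_2) \;\le\; \mathrm{val}(\p_0),
\]
which forces equality throughout and shows $\Os$ is a global minimizer of $(\p_0)$.

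It then remains to lift this back to $(\p)$. \texttt{GRET-SDP} sets $\Zs=\Os B L^{\dagger}$ as in \eqref{optZ}, which by \eqref{inversion} is a minimizer of $\phi(\cdot,\Os)$, so $\phi(\Zs,\Os)=\Tr(C\Os^T\Os)=\mathrm{val}(\p)$ by \eqref{penalty3}. Since $(\Zs,\Os)$ is feasible for $(\p)$ and meets the optimal value, it is a global minimizer of $(\p)$, as claimed. The step I expect to carry the real content is the exact factorization: everything reduces to the observation that when $\rnk(\Gs)=d$ the eigenvector rounding produces an exact square root of $\Gs$ whose diagonal blocks inherit the constraint $G_{ii}=I_d$ and are therefore orthogonal, so the generally lossy rounding becomes exact and the SDP value is achieved inside the original feasible set. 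The surrounding inequalities are routine bookkeeping about how the domain is successively enlarged, and mirror the reasoning already used for Observation \ref{obs:tight_SPEC}.
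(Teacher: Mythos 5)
Your proof is correct and follows exactly the reasoning the paper leaves implicit: Observation \ref{obs:tight_SDP} is stated without proof, in analogy with Observation \ref{obs:tight_SPEC}, and the intended argument is precisely your chain $\mathrm{val}(\p_2)\le\mathrm{val}(\p_0)=\mathrm{val}(\p)$ combined with the fact that $\rnk(\Gs)=d$ makes ${W^{\star}}^T W^{\star}=\Gs$ exact, forces each block $W^{\star}_i$ orthogonal via $G_{ii}=I_d$, and renders the SVD rounding a no-op. You have simply filled in the bookkeeping the authors took for granted, so there is nothing to compare or correct.
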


If $\rnk(G^{\star})>d$, the output of \texttt{GRET-SDP} can only be considered as an approximation of the solution of $(\p)$.
The quality of the approximation for $(\p_2)$ can be quantified using, for example, the randomized rounding in \cite{Ortho-Cut}. More precisely, note that since $D$ is block-diagonal,
\eqref{p2} is equivalent (up to a constant term) to
\begin{equation*}
\max_{O \in \mathcal{C}} \ \Tr(Q O^T O)
\end{equation*}
where $Q = BL^{\dagger}B^T  \succeq 0$. Bandeira et al. \cite{Ortho-Cut} show that the orthogonal transforms (which we continue to denote by $\Os$) obtained by a certain random rounding of $\Gs$ satisfy
\begin{equation*}
 \mathbb{E} \left[ \Tr(Q  \ {\Os}^T \Os) \right] \geq \alpha_d^2 \cdot \mathrm{OPT},
\end{equation*}
where $\mathrm{OPT}$ is the optimum of the unrelaxed problem \eqref{MC} with $Q=BL^{\dagger}B^T $, and $\alpha_d$ is the expected average of the singular values of a $d \times d$ random matrix with
entries iid $\mathcal{N}(0,1/d)$. It was conjectured in \cite{Ortho-Cut} that $\alpha_d$ is monotonically increasing, and the boundary values were computed
to be $\alpha_1=\sqrt{2/\pi}$ ($\alpha_1$ was also reported here \cite{nesterov1998semidefinite}) and $\alpha_{\infty}=8/3\pi$.  We refer the reader to \cite{Ortho-Cut} for further details on the rounding procedure, and its relation to previous work in terms of the approximation ratio. Empirical results, however, suggest that the difference between deterministic and randomized rounding is
small as far as the final reconstruction is concerned. We will therefore simply use the deterministic rounding.

\subsection{Computational complexity}
\label{comp}

The main computations in \texttt{GRET-SPEC} are the Laplacian inversion, the eigenvector computation, and the orthogonal rounding.
The cost of inverting $L$ when $\Gamma$ is dense is $O((N+M)^3)$. However, for most practical applications, we expect $\Gamma$ to be sparse since every point would typically
be contained in a small number of patches. In this case, it is known that the linear system $Lx=b$ can be solved in time almost linear in the number of edges in $\Gamma$ \cite{spielman2004nearly,Vishnoi2012}.
Applied to \eqref{L_Linv}, this means that
we can compute $L^{\dagger}$
in $O(|E(\Gamma)|)$ time (up to logarithmic factors). Note that, even if $L$ is dense, it is still possible to speed up the inversion (say, compared to a direct Gaussian elimination)
using the formula \cite{ho2005pseudo,ranjan2013incremental}:
\begin{equation*}
 L^{\dagger} = [L + (N+M)^{-1} ee^T]^{-1} -  (N+M)^{-1} ee^T.
\end{equation*}
The speed up in this case is however in terms of the absolute run time. The overall complexity is still $O((N+M)^3)$, but with smaller constants. We note that it is also possible to speed up the inversion by
exploiting the bipartite nature of $\Gamma$ \cite{ho2005pseudo}, although we have not used this in our implementation.

The complexity of the eigenvector computation is $O(M^3d^3)$, while that of the orthogonal rounding is $O(Md^3)$. The total complexity of  \texttt{GRET-SPEC}, say, using a linear-time
Laplacian inversion, is (up to logarithmic factors)
\begin{equation*}
O\left(|E(\Gamma)|+ (Md)^3 \right).
\end{equation*}

The main computational blocks in \texttt{GRET-SDP} are identical to that in \texttt{GRET-SPEC}, plus the SDP computation. The SDP solution can be computed in polynomial time using interior-point programming \cite{SDP_handbook}.
In particular, the complexity of computing an $\varepsilon$-accurate solution using interior-point solvers
such as \texttt{SDPT3} \cite{toh1999sdpt3} is $O((Md)^{4.5} \log(1/\varepsilon))$. It is possible to lower this complexity by exploiting the particular structure of $(\p_2)$. For example, notice that the constraint
matrices in $(\p_2)$ have at most one non-zero coefficient. Using the algorithm in \cite{helmberg1996interior}, one can then bring down the complexity of the SDP to $O((Md)^{3.5} \log(1/\varepsilon))$.
By considering a penalized version of the SDP, we can use first-order solvers such as \texttt{TFOCS} \cite{becker2011templates} to further cut
down the dependence on $M$ and $d$ to $O((Md)^3 \varepsilon^{-1})$, but at the cost of a stronger dependence on the accuracy.  The quest for efficient SDP solvers is currently an active area of research.
Fast SDP solvers have been proposed that exploit either the low-rank structure of the SDP solution \cite{burer2003nonlinear,journee2010low} or the simple form of the linearity constraints in $(\p_2)$  \cite{wen2012block}.
More recently, a sublinear time approximation algorithm for SDP was proposed in \cite{garber2012almost}. The complexity of \texttt{GRET-SDP} using a linear-time
Laplacian inversion and an interior-point SDP solver is thus
\begin{equation*}
O\left(|E(\Gamma)|+ (Md)^{4.5} \log(1/\varepsilon) + (Md)^3 \right).
\end{equation*}
For problems where the size of the SDP variable is within $150$, we can solve $(\p_2)$ in reasonable time on a standard PC using \texttt{SDPT3} \cite{toh1999sdpt3} or \texttt{CVX} \cite{grant2008cvx}.
We use \texttt{CVX} for the numerical experiments in Section \ref{Sim} that involve small-to-moderate sized SDP variables. For larger SDP variables, one can use the low-rank structure of $(\p_2)$ to speed up the computation. In particular, we were able to solve for SDP variables of size up to $2000 \times 2000$ using \texttt{SDPLR} \cite{burer2003nonlinear} that exploits this low-rank structure .

\section{Exact Recovery}
\label{ER}

We now examine conditions on the membership graph under which the proposed spectral and convex relaxations can recover the global coordinates from the knowledge of the clean local coordinates (and the membership graph). More precisely, let $\bar x_1,\ldots, \bar x_N$ be the true coordinates of a point cloud in $\mathbb{R}^d$. Suppose that the point cloud is divided into patches whose membership graph is $\G$, and
that we are provided the measurements
\begin{equation}
\label{clean_data}
x_{k,i} = \bar{O}_i^T (\bar{x}_k - \bar{t}_i) \qquad (k,i) \in E(\G),
\end{equation}
for some $\bar{O}_i \in \mathbb{O}(d)$ and $\bar{t}_i \in \mathbb{R}^d$.  The patch-stress matrix $C$ is constructed from $\Gamma$ and the clean measurements \eqref{clean_data}.
The question is under what conditions on $\Gamma$ can $\bar {x}_1,\ldots,\bar x_N$ be  recovered by our algorithm?  We will refer to this as \textit{exact recovery}.

To express \textit{exact recovery} in the matrix notation introduced earlier, define
\begin{equation*}
\bar{Z} = \big[\bar{x}_1 \ \cdots \ \bar{x}_N \ \bar{t}_1 \ \cdots \ \bar{t}_M \big] \in \mathbb{R}^{d \times (N+M)},
\end{equation*}
and
\begin{equation*}
\bar{O} = [ \bar{O}_1 \ \cdots \ \bar{O}_M] \in \mathbb{R}^{d \times Md}.
\end{equation*}
Then, exact recovery means that  for some $\Omega \in \mathbb{O}(d)$ and $t \in \mathbb{R}^d$,
\begin{equation}
\label{exact_rec}
\Zs = \Omega \bar{Z} + te^T.
\end{equation}
Henceforth, we will always assume that $\Gamma$ is connected (clearly one cannot have exact recovery otherwise).

Conditions for exact recovery have previously been examined by Zha and Zhang \cite{zha2009spectral} in the context of tangent-space alignment in manifold learning, and later by Gortler et al. \cite{gortler2013affine} from the perspective of rigidity theory. In particular, they show that the so-called notion of \textit{affine rigidity} is sufficient for exact recovery using the spectral method. Moreover, the authors in \cite{zha2009spectral,gortler2013affine} relate this notion of rigidity to other standard notions of rigidity, and provide conditions on a certain hypergraph constructed from the patch system that can guarantee affine rigidity.  The purpose of this section is to briefly introduce the rigidity results in \cite{zha2009spectral,gortler2013affine} and relate these to the properties of the membership graph $\G$ (and the patch-stress matrix $C$). We note that the authors in \cite{zha2009spectral,gortler2013affine} directly examine the uniqueness of the global coordinates, while we are concerned with the uniqueness of the
patch transforms obtained by solving $(\p_1)$ and $(\p_2)$. The uniqueness of the global coordinates is then immediate:
\begin{proposition}[Uniqueness and Exact Recovery]
\label{uniqueness_exact}
If $(\p_1)$ and $(\p_2)$ have unique solutions, then \eqref{exact_rec} holds for both \texttt{GRET-SPEC} and \texttt{GRET-SDP}.
\end{proposition}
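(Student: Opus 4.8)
The plan is to exploit the fact that, for clean data, the true transforms already achieve the global minimum of the objective, and then to use the uniqueness hypotheses to force the computed solutions to agree with the truth up to the inherent global symmetry. First I would note that substituting the clean measurements \eqref{clean_data} into \eqref{obj} gives $\phi(\bar Z,\bar O)=0$, since each summand $\bar x_k-\bar O_i x_{k,i}-\bar t_i=\bar x_k-\bar O_i\bar O_i^T(\bar x_k-\bar t_i)-\bar t_i$ vanishes. Because each $\bar O_i\in\mathbb{O}(d)$, the matrix $\bar O$ is feasible for $(\p_1)$ (one checks $\bar O\bar O^T=\sum_i \bar O_i\bar O_i^T=M I_d$, so $\bar O\in\mathcal{S}$) and $\bar G\defn\bar O^T\bar O$ is feasible for $(\p_2)$. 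By \eqref{penalty3}, together with the fact that $\bar Z$ minimizes $\phi(\cdot,\bar O)$, we obtain $\Tr(C\bar O^T\bar O)=\min_Z\phi(Z,\bar O)=0$. Since $C\succeq 0$, every feasible point of $(\p_1)$ and of $(\p_2)$ has nonnegative objective, so $\bar O$ and $\bar G$ are in fact optimal, both attaining the value $0$.

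Next I would invoke the uniqueness assumptions. For $(\p_2)$, uniqueness of the minimizer gives directly $\Gs=\bar O^T\bar O$, which has rank exactly $d$. For $(\p_1)$ the minimizer can only be unique modulo a global left orthogonal factor (if $W$ solves $(\p_1)$ then so does $\Omega W$ for any $\Omega\in\mathbb{O}(d)$, because $(\Omega W)^T(\Omega W)=W^TW$ and $\Omega W\in\mathcal{S}$); interpreting uniqueness in this quotient sense, every minimizer $W^{\star}$ satisfies ${W^{\star}}^T W^{\star}=\bar O^T\bar O$. Thus in both algorithms the rank-$d$ factor $W^{\star}$ extracted in \eqref{O_spec}/\eqref{rank_enforce} (whose rows are orthogonal and nonzero, hence of full row rank $d$) obeys ${W^{\star}}^T W^{\star}=\bar O^T\bar O$.

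The key algebraic step is then: if $X,Y\in\mathbb{R}^{d\times Md}$ both have full row rank $d$ and $X^TX=Y^TY$, then $X=\Omega Y$ for a unique $\Omega\in\mathbb{O}(d)$. I would prove this by checking that the assignment $Yv\mapsto Xv$ is a well-defined linear isometry of $\mathbb{R}^d$ (well-definedness and norm preservation both follow from $X^TX=Y^TY$, and surjectivity from full row rank), which therefore extends to an orthogonal $\Omega$ with $\Omega Y=X$. Applying this with $X=W^{\star}$ and $Y=\bar O$ yields $W^{\star}=\Omega\bar O$, so each block $W^{\star}_i=\Omega\bar O_i$ is already orthogonal; consequently the rounding in \eqref{optO} returns $\Os_i=W^{\star}_i$, and $\Os=\Omega\bar O$.

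Finally I would translate this back to coordinates. Since $\phi(\bar Z,\bar O)=0$ makes $\bar Z$ a minimizer of $\phi(\cdot,\bar O)$, the stationarity analysis \eqref{minZ}--\eqref{inversion} gives $\bar Z=\bar O B L^{\dagger}+\bar t e^T$ for some $\bar t$, i.e.\ $\bar O B L^{\dagger}=\bar Z-\bar t e^T$. Substituting into $\Zs=\Os B L^{\dagger}$ from \eqref{optZ},
\begin{equation*}
\Zs=\Omega\bar O B L^{\dagger}=\Omega(\bar Z-\bar t e^T)=\Omega\bar Z+t e^T,\qquad t=-\Omega\bar t,
\end{equation*}
which is exactly \eqref{exact_rec}. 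I expect the main difficulty to be conceptual rather than computational: correctly formulating ``unique solution'' for $(\p_1)$ modulo the unavoidable global orthogonal symmetry, and verifying that passing from the (unique) Gram matrix $\bar O^T\bar O$ through the eigenvector extraction and orthogonal rounding introduces no discrepancy beyond the global $\Omega$. This is precisely where the equal-Gram lemma and the rank-$d$ tightness recorded in Observations \ref{obs:tight_SPEC}--\ref{obs:tight_SDP} do the essential work.
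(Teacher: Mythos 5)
Your proof is correct and takes essentially the same route as the paper, which only sketches the argument: the paper's basis for the proposition is precisely your first step, namely that $\bar O$ and $\bar G=\bar O^T\bar O$ are zero-objective solutions of $(\p_1)$ and $(\p_2)$ (the computation recorded in \eqref{zeroCost} and \eqref{stationary}), combined with the tightness of the rounding as in Observations \ref{obs:tight_SPEC}--\ref{obs:tight_SDP}. Your explicit equal-Gram isometry lemma and the quotient reading of uniqueness for $(\p_1)$ correctly supply details the paper leaves implicit.
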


At this point, we  note that if a patch has less than $d+1$ points, then even when $\bar x_1,\ldots, \bar x_N$ are the unique set of coordinates that satisfy \ref{clean_data}, we cannot guarantee  $\bar O_1,\ldots, \bar O_M$ and $\bar t_1,\ldots,\bar t_M$ to be unique. Therefore, we will work under the mild assumption that each patch has at least $d+1$ non-degenerate points, so that the patch transforms are uniquely determined from the global coordinates.

We now formally define the notion of affine rigidity. Although phrased differently, it is in fact identical to the definitions in \cite{zha2009spectral,gortler2013affine}. Henceforth, by affine transform, we will mean the group of non-singular affine maps on
$\mathbb{R}^d$. Affine rigidity is a property of the patch-graph $\G$ and the local coordinates $(x_{k,i})$.
In keeping with \cite{gortler2013affine}, we will together call these the \textit{patch framework} and denote it by $\Theta=(\G,(x_{k,i}))$.

\begin{definition}[Affine Rigidity]
Let $y_1,\ldots,y_N \in \mathbb{R}^d$ be such that, for affine transforms $A_1,\ldots,A_M$,
\begin{equation*}
y_k = A_i (x_{k,i}) \qquad (k,i) \in E(\G).
\end{equation*}
The patch framework $\Theta=(\G,(x_{k,i}))$ is affinely rigid if $y_1, \ldots, y_N$ is identical to $\bar x_1,\ldots \bar x_N$ up to a global affine transform.
\end{definition}

Since each patch has $d+1$ points, we now give a characterization of affine rigidity that will be useful later on.
\begin{proposition}
\label{iffaffine}
 A patch framework $\Theta=(\G,(x_{k,i}))$ is affinely rigid if and only if for any $F\in R^{d \times Md}$ such that $\Tr(C F^T F) = 0$ we must have  $F = A \bar O$ for some non-singular $A \in R^{d \times d}$.
\end{proposition}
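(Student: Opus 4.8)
The plan is to translate the algebraic condition $\Tr(C F^T F)=0$ into a statement about globally consistent affine gluings of the patches, and then read off affine rigidity. The bridge is the variational identity behind \eqref{penalty3}: the derivation of $C=D-BL^{\dagger}B^T$ in \eqref{stressC} never used orthogonality of the blocks, so for \emph{any} $F=[F_1\ \cdots\ F_M]\in\mathbb{R}^{d\times Md}$ one has $\Tr(C F^T F)=\min_{Z}\sum_{(k,i)\in E}\lVert Z e_{ki}-F(e^M_i\otimes I_d)x_{k,i}\rVert^2$, the minimiser solving $Z^\star L=FB$ exactly as in \eqref{minZ}--\eqref{inversion}. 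Since each summand is nonnegative, the first step is the key lemma: $\Tr(C F^T F)=0$ if and only if there exist $y_1,\dots,y_N\in\mathbb{R}^d$ and $s_1,\dots,s_M\in\mathbb{R}^d$ with $y_k=F_i x_{k,i}+s_i$ for all $(k,i)\in E$. Indeed, for the forward direction take $y_k=x_k^\star$, $s_i=t_i^\star$ from the optimal $Z^\star=[\,x_1^\star\cdots x_N^\star\ t_1^\star\cdots t_M^\star\,]$, and for the converse use $Z=[\,y_1\cdots y_N\ s_1\cdots s_M\,]$. Thus the nullspace $\mathcal N=\{F:\Tr(C F^T F)=0\}$ is exactly the set of $F$ whose blocks realise a globally consistent affine gluing of the local coordinates.

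With the lemma in hand I would record the two facts driving the equivalence. First, for every $A\in\mathbb{R}^{d\times d}$ the matrix $F=A\bar O$ lies in $\mathcal N$: the clean data \eqref{clean_data} give $\bar O_i x_{k,i}=\bar x_k-\bar t_i$, so $y_k=A\bar x_k$, $s_i=A\bar t_i$ realise the gluing; hence $\mathcal A\defn\{A\bar O\}\subseteq\mathcal N$. Second, conversely, if a gluing from $F$ is a global affine image of the truth, $y_k=A\bar x_k+c$, then substituting $\bar x_k=\bar O_i x_{k,i}+\bar t_i$ shows $(A\bar O_i-F_i)x_{k,i}$ is constant over $k\in P_i$; since each patch carries $d+1$ non-degenerate points the $x_{k,i}$ affinely span $\mathbb{R}^d$, forcing $A\bar O_i-F_i=0$, i.e.\ $F=A\bar O$. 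For the direction ``$\mathcal N=\mathcal A$ $\Rightarrow$ affinely rigid'', take non-singular affine maps $A_i(v)=F_i v+s_i$ and a gluing $y_k=A_i(x_{k,i})$; the lemma puts $F\in\mathcal N=\mathcal A$, so $F=A\bar O$ with $A=F_i\bar O_i^{-1}$ non-singular, and since $\G$ is connected the per-patch constants $s_i-A\bar t_i$ agree across shared points and hence coincide throughout, giving $y_k=A\bar x_k+c$.

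The converse direction---affine rigidity forcing $\mathcal N=\mathcal A$---is where the real work sits, and it is the step I expect to be the main obstacle. Given $F\in\mathcal N$ the lemma produces a gluing, but affine rigidity is stated only for \emph{non-singular} affine maps, whereas the blocks $F_i$ may be singular (e.g.\ $F=0\in\mathcal N$). My plan is to exploit that $\mathcal N$ is a linear subspace containing $\mathcal A$ and to reduce to the non-singular case by a shift: for $F_0\in\mathcal N$, pick a real $t$ avoiding the finitely many real eigenvalues of the matrices $\bar O_i^{-1}(F_0)_i$, so that every block of $F_1=F_0+t\bar O\in\mathcal N$ is non-singular; affine rigidity then applies to $F_1$ and, via the non-degeneracy argument above, yields $F_1=A\bar O$, whence $F_0=F_1-t\bar O\in\mathcal A$. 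This gives $\mathcal N=\mathcal A$. The non-singularity qualifier in the statement is precisely what this pins down: gluings coming from honest orthogonal (hence full-rank) transforms correspond to full-rank $F$, for which the recovered $A$ is non-singular, while the remaining rank-deficient elements of $\mathcal N$ are the singular multiples $A\bar O$. The delicate points are the translation bookkeeping through connectivity and the use of the $d+1$ non-degenerate points per patch to pass from a global affine image back to the block identities $F_i=A\bar O_i$.
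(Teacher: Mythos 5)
Your proof is correct and follows essentially the same route as the paper's: the bridge in both arguments is that the derivation of $C = D - BL^{\dagger}B^{T}$ never uses orthogonality of the blocks, so $\Tr(CF^{T}F)=0$ holds exactly when the $F_i$ admit a consistent gluing $y_k = F_i x_{k,i} + t_i$ (the paper's \eqref{affinecon}--\eqref{affinesol}, with the optimal $Z$ given by $FBL^{\dagger}$ up to translation), and the $d+1$ non-degenerate points per patch then upgrade a global affine identity $y_k = A\bar{x}_k + c$ to the block identities $F_i = A\bar{O}_i$. Where you genuinely depart is the shift trick $F_1 = F_0 + t\bar{O}$: the paper's proof applies the affine-rigidity hypothesis directly to the gluing produced by an arbitrary $F$ in the null space, even though its definition of affine rigidity quantifies only over \emph{non-singular} affine maps, and the blocks $F_i$ may well be singular ($F=0$ qualifies). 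Your perturbation making every block non-singular --- legitimate because the set $\mathcal{N}=\{F : \Tr(CF^{T}F)=0\}$ is a linear subspace containing $\bar{O}$ (as $C \succeq 0$, membership is rows lying in the null space of $C$), and only finitely many shifts $t$ are bad --- repairs this gap cleanly. Relatedly, you are right that the proposition's literal conclusion ``$F = A\bar{O}$ with $A$ non-singular'' cannot hold for \emph{every} $F$ with $\Tr(CF^{T}F)=0$ (again $F=0$); what your argument establishes, and what the paper actually uses downstream (see the remark preceding Corollary \ref{rankstress}), is that the null space of $C$ coincides with the row span of $\bar{O}$, i.e., $\mathcal{N} = \{A\bar{O} : A \in \mathbb{R}^{d\times d}\}$. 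So your proposal is not merely correct; on this one point it is more careful than the paper's own proof.
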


Before proceeding to the proof, note that  $\bar{O}$ and $\bar{G}=\bar{O}^T\bar{O}$ are solutions of $(\p_1)$  and $(\p_2)$ (this was the basis of Proposition \ref{uniqueness_exact}), and the objective
 in either case is zero. Indeed, from \eqref{clean_data}, we can write $\bar Z L = \bar O B$. Since $\G$ is connected,
\begin{equation}
\label{stationary}
\bar Z = \bar{O} BL^{\dagger} + te^T \qquad (t \in \mathbb{R}^d).
\end{equation}
Using \eqref{stationary}, it is not difficult to verify that $\phi(\bar{Z}, \bar{O}) = \Tr(C \bar{G})$. Moreover, it follows from \eqref{clean_data} that $\phi(\bar{Z}, \bar{O}) = 0$.
Therefore,
\begin{equation}
\label{zeroCost}
\Tr(C \bar{G}) = \Tr(C {\bar{O}}^T \bar{O}) = 0.
\end{equation}

Using an identical line of reasoning, we also record another fact. Let $F = [F_1,\ldots,F_M]$ where each $F_i \in \mathbb{R}^{d \times d}$. Suppose there exists $y_1,\ldots,y_N \in \mathbb{R}^d$ and $t_1,\ldots,t_M \in \mathbb{R}^d$ such that
\begin{equation}
\label{affinecon}
y_k = F_i x_{k,i} + t_i \qquad (k,i) \in E(\G).
\end{equation}
Then $Y = [y_1,\ldots,y_N,t_1,\ldots,t_M] \in \mathbb{R}^{d\times (N+M)}$ satisfies
\begin{equation}
\label{affinesol}
Y = FBL^\dagger + te^T
\end{equation}
and $\Tr(CF^T F)=0$.

\begin{proof} [of Proposition \ref{iffaffine}]
For any $F$ such that $\Tr( C F^T F) = 0$, letling $$[y_1,\ldots,y_N, t_1,\ldots, t_M] = F B L^\dagger,$$ we have  \eqref{affinecon}. By the affine rigidity assumption, we must then have $y_k = A \bar x_k + t$ for some non-singular $A\in \mathbb{R}^{d\times d}$ and $t\in \mathbb{R}^d$. Since each patch contains $d+1$ non-degenerate points, it follows that $F = A \bar O$.

In the other direction, assume that  $y_1,\ldots,y_N \in \mathbb{R}^d$ satisfy \eqref{affinecon}. We know that $\Tr(C F^T F) = 0$ and hence $F = A \bar O$ for some non-singular $A$. Using \eqref{affinesol}, we immediately have $y_k = A\bar{x}_k + t$.
\end{proof}

Note that $\Tr(C F^T F) = 0$ implies that the rows of $F$ are in the null space of $C$. Therefore, the combined facts that  $\Tr(C F^T F) = 0$ and  $F = A \bar O$ for some non-singular $A \in R^{d \times d}$ is equivalent to saying that null space of $C$ is within the row span of $\bar O$.  The following result then follows as a consequence of \eqref{iffaffine}.
\begin{corollary}
\label{rankstress}
A patch framework $\Theta=(\G,(x_{k,i}))$ is affinely rigid if and only if the rank of $C$ is $(M-1)d$.
\end{corollary}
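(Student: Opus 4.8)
The plan is to reduce the statement to a dimension count on the null space of $C$, leveraging the equivalence established immediately above: namely that $\Theta$ is affinely rigid if and only if the null space of $C$ is contained in the row span of $\bar O$. The rank condition $\rnk(C)=(M-1)d$ is then just the assertion that this null space has dimension exactly $d$, and most of the work is in pinning down the row span of $\bar O$ as a $d$-dimensional subspace of $\ker C$ that is present regardless of rigidity.

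First I would establish the inclusion (row span of $\bar O$)$\,\subseteq \ker C$ together with a dimension count. From \eqref{zeroCost} we have $\Tr(C \bar O^T \bar O)=0$, and by cyclicity $\Tr(C \bar O^T \bar O)=\Tr(\bar O C \bar O^T)=\sum_{r} r C r^T$, summed over the rows $r$ of $\bar O$. Since $C\succeq 0$, each term $r C r^T$ is nonnegative and hence must vanish, which forces $C r^T=0$; thus every row of $\bar O$ lies in $\ker C$. Next, because the leading block $\bar O_1\in\mathbb{O}(d)$ is invertible, the $d$ rows of $\bar O=[\bar O_1\ \cdots\ \bar O_M]$ are linearly independent. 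Consequently the row span of $\bar O$ is a genuinely $d$-dimensional subspace of $\ker C$, so $\dim\ker C\geq d$, equivalently $\rnk(C)\leq (M-1)d$ holds unconditionally.

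With this inclusion in hand I would combine it with the reverse inclusion that characterizes rigidity. By Proposition \ref{iffaffine} together with the observation preceding this corollary, $\Theta$ is affinely rigid precisely when $\ker C\subseteq$ (row span of $\bar O$). Chaining the two inclusions, affine rigidity is equivalent to the equality $\ker C=$ (row span of $\bar O$), i.e.\ to $\dim\ker C=d$. Since $C$ acts on $\mathbb{R}^{Md}$, rank–nullity gives $\rnk(C)=Md-\dim\ker C$, so this is exactly $\rnk(C)=Md-d=(M-1)d$, which is the claim.

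The sum-of-squares argument and the rank–nullity bookkeeping are routine. The one point deserving care is the linear independence of the rows of $\bar O$: without it one only recovers the inequality $\rnk(C)\leq (M-1)d$ with no control on the gap, and the two-sided characterization collapses. This is precisely where the standing non-degeneracy assumption is used, since it guarantees that each $\bar O_i$ is a bona fide orthogonal matrix and in particular that $\bar O_1$ is invertible. Beyond correctly matching the directions of the two inclusions, I expect no substantive obstacle, as the rigidity-to-inclusion equivalence has already been supplied.
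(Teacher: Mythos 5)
Your proposal is correct and follows essentially the same route as the paper: the paper likewise translates Proposition \ref{iffaffine} into the statement that affine rigidity is equivalent to $\ker C$ lying in the row span of $\bar O$ (using that $\Tr(CF^TF)=0$ exactly when the rows of $F$ lie in $\ker C$, since $C\succeq 0$), and then counts dimensions. You merely make explicit the details the paper leaves implicit — the sum-of-squares argument showing the rows of $\bar O$ lie in $\ker C$ via \eqref{zeroCost}, their linear independence from the invertibility of each $\bar O_i$, and the rank--nullity step — so there is nothing to correct.
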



\begin{figure}
\centering
\includegraphics[width=0.7\linewidth]{./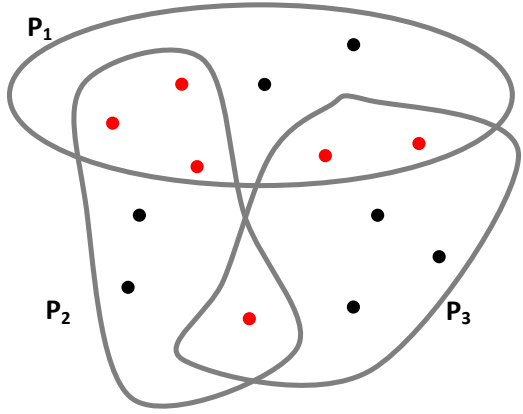}
\caption{Instance of three overlapping patches, where the overlapping points are shown in red. In this case, $P_3$ cannot be registered with either $P_1$ or $P_2$ due to insufficient overlap.
Therefore, the patches cannot be localized in two dimension using, for example, \cite{zhang2010arap,cucuringu2013asap} that work by registering pairs of patches.
The patches can however be registered using \texttt{GRET-SPEC} and \texttt{GRET-SDP} since the ordered patches $P_1,P_2,P_3$ form a
graph lateration in $\mathbb{R}^2$.}
\label{laterated_patches}
\end{figure}

The corollary gives an easy way to check for affine rigidity. However, it is not clear what construction of $\G$ will ensure such property. In \cite{zha2009spectral}, the notion of graph \textit{lateration} was introduced that guarantees affine rigidity. Namely, $\G$ is said to be a graph lateration (or simply laterated) if there exists an reordering of the patch indices such that, for every $i \geq 2$,  $P_i$ and $P_1 \cup \cdots \cup P_{i-1}$ have at least $d+1$ non-degenerate nodes in common. An example of a graph lateration is shown in Figure \ref{laterated_patches}.

\begin{theorem}[\cite{zha2009spectral}]
\label{thm:LaterationRecovery}
If $\G$ is laterated and the local coordinates are non-degenerate then the framework $\Theta$ is affinely rigid.
\end{theorem}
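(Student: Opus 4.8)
The plan is to prove this by induction along the lateration ordering of the patches, building up a single global affine map $\mathcal A$ that simultaneously explains all the $y_k$. Write $\bar A_i$ for the invertible affine map $x \mapsto \bar O_i x + \bar t_i$, so that \eqref{clean_data} reads $\bar x_k = \bar A_i(x_{k,i})$ for every $(k,i) \in E(\G)$. Reorder the patches as $P_1,\ldots,P_M$ so that the lateration property holds, and suppose $y_1,\ldots,y_N$ together with affine maps $A_1,\ldots,A_M$ satisfy $y_k = A_i(x_{k,i})$ for all $(k,i) \in E(\G)$; the goal is to produce one global affine map $\mathcal A$ with $y_k = \mathcal A(\bar x_k)$ for every $k$.

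For the base case I would set $\mathcal A \defn A_1 \circ \bar A_1^{-1}$, which is non-singular as a composite of non-singular affine maps. Since $x_{k,1} = \bar A_1^{-1}(\bar x_k)$ for $x_k \in P_1$, it follows that $y_k = A_1(x_{k,1}) = \mathcal A(\bar x_k)$ on all of $P_1$.

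For the inductive step I would assume $y_k = \mathcal A(\bar x_k)$ on $P_1 \cup \cdots \cup P_{i-1}$ and examine $P_i$. Lateration supplies a set $S$ of at least $d+1$ non-degenerate nodes shared between $P_i$ and $P_1 \cup \cdots \cup P_{i-1}$. For $k \in S$ I would evaluate $y_k$ in two ways: from the induction hypothesis $y_k = \mathcal A(\bar x_k)$, and from membership in $P_i$ with $x_{k,i} = \bar A_i^{-1}(\bar x_k)$, giving $y_k = (A_i \circ \bar A_i^{-1})(\bar x_k)$. Hence the affine maps $\mathcal A$ and $A_i \circ \bar A_i^{-1}$ agree on $\{\bar x_k : k \in S\}$. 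Because these points are non-degenerate they affinely span $\mathbb{R}^d$, and an affine map of $\mathbb{R}^d$ is determined by its values on $d+1$ affinely independent points, so $A_i \circ \bar A_i^{-1} = \mathcal A$, i.e.\ $A_i = \mathcal A \circ \bar A_i$. Then for every $x_k \in P_i$ one gets $y_k = A_i(x_{k,i}) = \mathcal A(\bar A_i(x_{k,i})) = \mathcal A(\bar x_k)$, extending the conclusion to $P_i$. Since $\G$ is connected each point lies in some patch, so the induction certifies $y_k = \mathcal A(\bar x_k)$ for all $k$, which is exactly affine rigidity.

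The hard part, and the only place the hypotheses enter essentially, is the uniqueness of affine maps: two affine transforms of $\mathbb{R}^d$ that agree on $d+1$ affinely independent points must coincide. The lateration hypothesis is precisely what guarantees, at each stage, an overlap of at least $d+1$ non-degenerate (hence affinely spanning) nodes, which lets this uniqueness splice the local map $A_i$ onto the already-fixed global map $\mathcal A$. The subtlety to handle carefully is that ``non-degenerate'' for the shared nodes must refer to their global positions $\bar x_k$ being in affine general position, so that one may conclude equality of the two affine maps rather than mere agreement on a lower-dimensional subset; the non-degeneracy of the local coordinates is what keeps each $\bar A_i$ invertible, so that $\mathcal A$ remains in the group of non-singular affine maps throughout.
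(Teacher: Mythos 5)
Your proof is correct, but note that the paper itself offers no proof of this statement: it is imported verbatim from \cite{zha2009spectral} as a cited result, so there is no in-paper argument to compare against. Your inductive splicing along the lateration order --- anchoring a global affine map $\mathcal{A} = A_1 \circ \bar A_1^{-1}$ on the first patch, then using the $d+1$ non-degenerate shared nodes to force $A_i \circ \bar A_i^{-1} = \mathcal{A}$ at each stage via uniqueness of an affine map on an affinely spanning set --- is the standard and natural argument for lateration results of this type, and every step checks out against the paper's definitions (in particular, the paper's convention that ``affine transform'' means non-singular affine map makes $\mathcal{A}$ non-singular, as required). One small misattribution in your closing remark: the invertibility of each $\bar A_i$ is automatic, since by \eqref{clean_data} these are rigid motions and hence always invertible regardless of degeneracy; the non-degeneracy hypothesis is needed instead exactly where you use it in the inductive step, to guarantee that the overlap points affinely span $\mathbb{R}^d$ so that agreement on them forces equality of the two affine maps (equivalence of local and global non-degeneracy then following because the $\bar A_i$ are affine bijections). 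This does not affect the validity of the argument.
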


Next, we turn to  the exact recovery conditions for $(\p_2)$.
The appropriate notion of rigidity in this case is that of universal rigidity \cite{gortler2009characterizing}.
Just as we defined affine rigidity earlier, we can phrase universal rigidity as follows.

\begin{definition}[Universal Rigidity]
Suppose that \eqref{clean_data} holds. Let $x_1,\ldots,x_N \in \mathbb{R}^s (s \geq d)$ be such that, for some orthogonal $O_i \in \mathbb{R}^{s \times d}$ and $t_i \in \mathbb{R}^s$,
$$x_k = O_i x_{k,i} + t_i \qquad (k,i) \in E.$$
We say that the patch framework $\Theta=(\G,(x_{k,i}))$ is universally rigid in $\mathbb{R}^d$ if for any such $(x_k)$,  we have $ x_k = \Omega \bar x_k$ for some orthogonal $\Omega \in \mathbb{R}^{s \times d}$.
\end{definition}

By orthogonal $\Omega$, we mean that the columns of $\Omega$ are orthogonal and of unit norm (i.e., $\Omega$ can be seen an orthogonal transform in $\mathbb{R}^s$ by identifying $\mathbb{R}^d$ as a subspace of $\mathbb{R}^s$).

Following exactly the same arguments used to establish Proposition \ref{iffaffine}, one can derive the following.
\begin{proposition} The following statements are equivalent: \\
$\mathrm{(a)}$  A patch framework $\Theta=(\G,(x_{k,i}))$ is universally rigid in $\mathbb{R}^d$. \\
$\mathrm{(b)}$  Let $O\in \mathbb{R}^{s \times Md} (s\geq d)$ be such that $O_i^T O_i = I_d$ for all $i$. Then 
\begin{equation*}
\Tr(C O^T O) = 0 \quad \Rightarrow \quad O = \Omega \bar O \text{ for some orthogonal } \Omega \in \mathbb{R}^{s \times d}.
\end{equation*}
\end{proposition}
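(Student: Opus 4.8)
The plan is to mirror the proof of Proposition \ref{iffaffine} line by line, with the single change that the transforms and the reconstructed points now live in $\mathbb{R}^s$ rather than $\mathbb{R}^d$, and that the role of the nonsingular matrix $A$ is played by an orthogonal $\Omega$. The pivotal bridge, exactly as in \eqref{affinecon}--\eqref{affinesol}, is the following equivalence: for any $O=[O_1\cdots O_M]\in\mathbb{R}^{s\times Md}$ whose blocks satisfy $O_i^TO_i=I_d$, one has $\Tr(CO^TO)=0$ if and only if there exist $x_1,\ldots,x_N,t_1,\ldots,t_M\in\mathbb{R}^s$ with $x_k=O_ix_{k,i}+t_i$ for all $(k,i)\in E(\G)$, in which case $X=[x_1\cdots x_N\ t_1\cdots t_M]=OBL^{\dagger}+te^T$. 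First I would record that this bridge needs no new proof: the identity \eqref{phiZO}, the minimization $\min_X\phi(X,O)=\Tr(CO^TO)$ attained at $X=OBL^{\dagger}+te^T$, and \eqref{stressC} are all purely algebraic in the entries of $B,L,D$, which are fixed matrices independent of the number of rows of $O$. Thus the whole derivation carries over verbatim from $d$ rows to $s\geq d$ rows.

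For (a)$\Rightarrow$(b), I would start from an $O$ with orthonormal blocks and $\Tr(CO^TO)=0$, invoke the bridge to produce a configuration $x_k=O_ix_{k,i}+t_i$ in $\mathbb{R}^s$, and observe that this is precisely an admissible configuration in the definition of universal rigidity (the $O_i$ being orthogonal is exactly the block constraint). Hypothesis (a) then yields $x_k=\Omega\bar x_k$ (up to a global translation, which we suppress as elsewhere in the paper) for some orthogonal $\Omega$. Substituting $\bar x_k=\bar O_ix_{k,i}+\bar t_i$ from \eqref{exact} and the definition of $x_k$, I would subtract the relations of two points lying in a common patch to get $(O_i-\Omega\bar O_i)(x_{k,i}-x_{k',i})=0$; since each patch has $d+1$ non-degenerate points, these differences span $\mathbb{R}^d$, forcing $O_i=\Omega\bar O_i$ for every $i$, i.e. $O=\Omega\bar O$.

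For (b)$\Rightarrow$(a), I would take an arbitrary admissible $\mathbb{R}^s$-configuration $x_k=O_ix_{k,i}+t_i$ with orthogonal $O_i$, assemble $O=[O_1\cdots O_M]$, apply the bridge to conclude $\Tr(CO^TO)=0$, and invoke (b) to get $O=\Omega\bar O$ with $\Omega$ orthogonal. Substituting $\bar O_ix_{k,i}=\bar x_k-\bar t_i$ gives $x_k=\Omega\bar x_k+(t_i-\Omega\bar t_i)$ on every edge $(k,i)$. Since the left side depends only on $k$ and the correction only on $i$, connectivity of $\G$ pins $t_i-\Omega\bar t_i$ to a single constant $\tau$, yielding $x_k=\Omega\bar x_k+\tau$, which is the universal-rigidity conclusion up to the usual global translation.

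The argument is essentially routine once Proposition \ref{iffaffine} is in hand, so I do not expect a serious obstacle; the only points demanding care are the orthogonality bookkeeping and the translation. For the former, one checks that the $\Omega$ produced is genuinely orthogonal and not merely nonsingular: from $O_i=\Omega\bar O_i$ with $\bar O_i$ invertible one gets $\Omega=O_i\bar O_i^T$, whence $\Omega^T\Omega=\bar O_iO_i^TO_i\bar O_i^T=\bar O_i\bar O_i^T=I_d$ using $O_i^TO_i=I_d$; this is the one place where the block-orthogonality constraint (rather than mere nonsingularity, as in the affine case) is genuinely used. For the translation, the overlap/connectivity argument above is what replaces the simpler bookkeeping of Proposition \ref{iffaffine}. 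Verifying that the bridge truly survives the passage to $s$ rows is the conceptual crux, but it is immediate because $C$ does not depend on $s$.
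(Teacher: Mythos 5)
Your proposal is correct and follows exactly the route the paper intends: the paper itself gives no separate proof, stating only that the result follows ``by exactly the same arguments'' as Proposition \ref{iffaffine}, and your write-up is precisely that adaptation --- the bridge via $\min_Z \phi(Z,O)=\Tr(CO^TO)$ with $Z = OBL^{\dagger}+te^T$ carried over to $s\geq d$ rows, the $d+1$ non-degenerate points per patch forcing $O_i=\Omega\bar O_i$, and the translation bookkeeping. Your explicit verification that $\Omega$ is orthogonal (rather than merely nonsingular) via $\Omega^T\Omega=\bar O_iO_i^TO_i\bar O_i^T=I_d$ is a useful detail the paper leaves implicit, but it does not constitute a different approach.
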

The question then is under what conditions is the patch framework universally rigid? This was also addressed in \cite{gortler2013affine} using a graph construction derived from $\Gamma$ called the \textit{body graph}. This is given by $\G_B=(V_B, E_B)$, where $V_B = \{1,2,\ldots,N \}$ and $(k,l) \in E_B$ if and only if  $x_k$ and $x_l$ belong to the same patch (cf. Figure \ref{BBG}).
Next, the following distances are associated with $\G_B$:
\begin{equation}
\label{distances}
d_{kl}=  \Vert x_{k,i} - x_{l,i} \rVert  \qquad (k,l) \in E_B,
\end{equation}
where $x_k, x_l \in P_i$, say. Note that the above assignment is independent of the choice of patch. A set of points $(x_k)_{k \in V}$ in $\mathbb{R}^s$ is said to be a \textit{realization} of $\{d_{kl} : (k,l) \in E \}$ in $ \mathbb{R}^s$ if $d_{kl} = ||x_k - x_l||$ for $(k,l) \in E$.

\begin{figure}
\centering
\includegraphics[width=0.7\linewidth]{./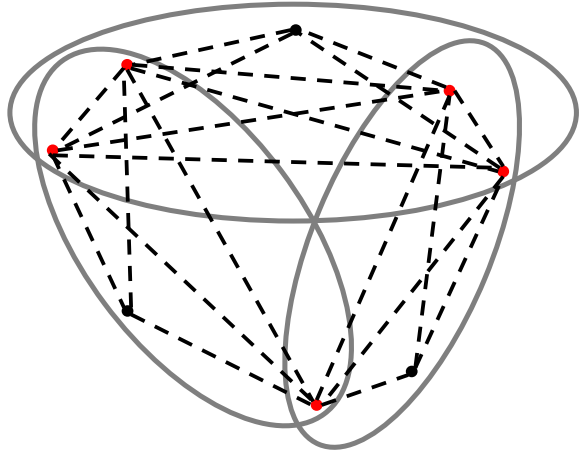}
\caption{This shows the body graph for a $3$-patch system (patches marked with ovals, points marked with dots). The edges of the body graph are obtained by connecting points that belong to the same patch.
The edges within a given patch are marked with the same color. \texttt{GRET-SDP} can successfully register all the patches if the body graph is rigid in a certain sense.}
\label{BBG}
\end{figure}

It was shown in \cite{gortler2013affine} that  $\Theta=(\G,(x_{k,i}))$  is universally rigid if and only if $\G_B$ with distances $\{d_{kl} : (k,l) \in E \}$ has a unique realization in $\mathbb{R}^s$ for all $s \geq d$. Moreover, in such situation, using the distances as the constraints, an SDP relaxation was proposed in \cite{so2007theory,zhu2010universal} for finding the unique realization. We note that although the SDP in \cite{so2007theory} has the same condition for exact recovery as $(\p_2)$, it is computationally more demanding than $(\p_2)$ since the number of variables is  $O(N^2)$ for this SDP, instead of $O(M^2)$ as in $(\p_2)$ (for most applications, $M \ll N$). Moreover, as we will see shortly in Section \ref{Sim}, $(\p_2)$ also enjoys some stability properties which has not been established for the SDP in \cite{so2007theory}.

Finally, we note that universal rigidity is a weaker condition on $\G$ than affine rigidity.

\begin{theorem}[\cite{so2007theory}, Theorem 2]
If a patch framework is affinely rigid, then it is universally rigid.
\end{theorem}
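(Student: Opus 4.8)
The plan is to prove the implication directly using the two algebraic characterizations already established: Corollary \ref{rankstress} for affine rigidity, and the proposition immediately preceding the theorem for universal rigidity. So I would assume $\Theta$ is affinely rigid and take an arbitrary $O \in \mathbb{R}^{s \times Md}$ with $s \geq d$ satisfying $O_i^T O_i = I_d$ for every $i$ together with $\Tr(C O^T O) = 0$; the goal is to produce an orthogonal $\Omega \in \mathbb{R}^{s \times d}$ such that $O = \Omega \bar{O}$.

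First I would pass from the trace condition to a statement about the rows of $O$. Writing the rows of $O$ as $f_1,\ldots,f_s \in \mathbb{R}^{Md}$, we have $O^T O = \sum_{a=1}^s f_a f_a^T$, so that $\Tr(C O^T O) = \sum_{a=1}^s f_a^T C f_a$. Since $C \succeq 0$, each summand is nonnegative, whence $\Tr(C O^T O) = 0$ forces $f_a^T C f_a = 0$ and therefore $C f_a = 0$ for every row. Thus every row of $O$ lies in the null space of $C$.

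The heart of the argument is to identify this null space. By Corollary \ref{rankstress}, affine rigidity gives $\rnk(C) = (M-1)d$, so the null space of $C$ has dimension exactly $d$. On the other hand, \eqref{zeroCost} shows $\Tr(C \bar{O}^T \bar{O}) = 0$, so by the same row-wise argument every row of $\bar{O}$ lies in the null space of $C$; and since each $\bar{O}_i$ is orthogonal we have $\bar{O}\bar{O}^T = \sum_i \bar{O}_i \bar{O}_i^T = M I_d$, so the $d$ rows of $\bar{O}$ are linearly independent. A $d$-dimensional subspace contained in a $d$-dimensional null space must coincide with it, so the null space of $C$ is exactly the row span of $\bar{O}$. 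Consequently each row of $O$ is a linear combination of the rows of $\bar{O}$, i.e. $O = A \bar{O}$ for some $A \in \mathbb{R}^{s \times d}$. To finish, I would use the block constraints: fixing any $i$, the $i$-th block of $O = A \bar{O}$ is $O_i = A \bar{O}_i$, so $O_i^T O_i = I_d$ reads $\bar{O}_i^T (A^T A) \bar{O}_i = I_d$; conjugating by the orthogonal matrix $\bar{O}_i$ yields $A^T A = I_d$, so $A$ has orthonormal columns and serves as the desired $\Omega$, giving $O = \Omega \bar{O}$.

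I expect the only genuine subtlety to be the dimension count identifying the null space of $C$ with the row span of $\bar{O}$: it relies crucially on the \emph{exact} rank value $(M-1)d$ supplied by affine rigidity, since mere containment of the rows of $\bar{O}$ in the null space is automatic from \eqref{zeroCost} and is not by itself enough, and on the non-degeneracy assumption guaranteeing that $\bar{O}$ genuinely has rank $d$. Everything downstream is routine linear algebra.
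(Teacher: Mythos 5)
Your proof is correct, and it takes a route the paper itself does not contain: the paper states this theorem without proof, importing it from \cite{so2007theory} (Theorem 2), whose setting is the SDP approach to graph realization and which connects to patch frameworks only through the body-graph construction discussed via \cite{gortler2013affine}. Your argument is instead self-contained within the paper's own machinery. Each step checks out: from $\Tr(CO^TO)=0$ and $C \succeq 0$ you correctly get $Cf_a=0$ for every row $f_a$ of $O$; Corollary \ref{rankstress} gives $\dim \mathrm{null}(C) = Md-(M-1)d = d$; the rows of $\bar O$ lie in $\mathrm{null}(C)$ by \eqref{zeroCost} and are linearly independent since $\bar O{\bar O}^T = \sum_i \bar O_i\bar O_i^T = MI_d$, so $\mathrm{null}(C)$ coincides with the row span of $\bar O$ and $O = A\bar O$ for some $A \in \mathbb{R}^{s\times d}$; and the block constraint $O_i^TO_i = \bar O_i^T A^TA\, \bar O_i = I_d$, conjugated by the orthogonal $\bar O_i$, forces $A^TA = I_d$, so $A$ is the required orthogonal $\Omega$ and statement (b) of the preceding proposition holds, yielding universal rigidity by the stated equivalence. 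The key identification of $\mathrm{null}(C)$ with the row span of $\bar O$ is in fact already implicit in the paper's remarks between Proposition \ref{iffaffine} and Corollary \ref{rankstress}, so what your route buys is a one-paragraph derivation that avoids the detour through unique realizability of the body graph and the external Theorem 2 of \cite{so2007theory}; the tradeoff is that your proof holds under the paper's standing assumption that every patch has at least $d+1$ non-degenerate points, which Corollary \ref{rankstress} requires, whereas the cited result is established in a more general framework. One small correction of attribution: the rank-$d$ property of $\bar O$ follows from the orthogonality of the blocks $\bar O_i$ alone, exactly as your computation $\bar O{\bar O}^T = MI_d$ shows; the non-degeneracy assumption is not what guarantees this, though it is genuinely needed for the rank characterization you invoke.
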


In \cite{gortler2013affine}, it was also shown that the reverse implication is not true using an counter-example for which the patch framework fails to be affinely rigid, but for which the body graph (a Cauchy polygon) has an unique realization in any dimension \cite{connelly1982rigidity}. This means that $\texttt{GRET-SDP}$ can solve a bigger class of problems than $\texttt{GRET-SPEC}$, which is perhaps not surprising.

\section{Randomized Rank Test}
\label{randTest}

Corollary \ref{rankstress} tells us by checking the rank of the patch stress matrix $C$, we can tell whether a patch framework is affinely rigid. In this regard, the patch-stress matrix serves the same purpose as the so-called alignment matrix in \cite{zha2009spectral} and the affinity matrix in \cite{gortler2013affine}. The only difference is that the kernel of $C$ represents the degree of freedom of the affine transform, whereas kernel of alignment or affinity matrix directly tell us the degree of freedom of the point coordinates. As suggested in \cite{gortler2013affine}, an efficient randomized test for affine rigidity using the concept of affinity matrix can be easily derived. In this section, we describe a randomized test based on patch stress matrix, which parallels the proposal in \cite{gortler2013affine}. This procedure is also similar in spirit to the randomized tests for generic local rigidity by Hendrickson \cite{hendrickson1992conditions}, for generic global rigidity by Gortler et al. \cite{gortler2010characterizing}, and for matrix
completion by Singer and Cucuringu \cite{MatCompletionSinger}.

Let us continue to denote the patch-stress matrix obtained from $\G$ and the measurements \eqref{clean_data} by $C$. We will use $C_0$ to denote the patch-stress matrix obtained
from the same graph $\G$, but using the (unknown) original coordinates as measurements, namely,
\begin{equation}
\label{original_data}
 x_{k,i} = \bar x_k \qquad (k,i) \in \G.
\end{equation}
The advantage of working with $C_0$ over $C$ is that the former can be computed using just the global coordinates, while the latter requires the knowledge of the global coordinates as well as the clean transforms.
In particular, this only requires us to simulate the global coordinates. Since the coordinates of points in a given patch are determined up to a rigid transform, we claim the following (cf. Section \ref{proof:Rank_C0}
for a proof).
\begin{proposition}[Rank equivalence]
\label{prop:Rank_C0}
For a fixed $\Gamma$, $C$ and $C_0$ have the same rank.
\end{proposition}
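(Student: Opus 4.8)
The plan is to exploit the fact that the two measurement sets differ by a rigid transform \emph{within each patch}, and to show that such a per-patch re-coordinatization conjugates the patch-stress matrix by a block-diagonal orthogonal matrix, which leaves its rank unchanged. Concretely, from \eqref{clean_data} we have $\bar x_k = \bar O_i x_{k,i} + \bar t_i$ for $(k,i) \in E$, so the trivial measurements \eqref{original_data} are obtained from the clean measurements via $x^0_{k,i} = R_i x_{k,i} + s_i$ with $R_i = \bar O_i \in \mathbb{O}(d)$ and $s_i = \bar t_i \in \mathbb{R}^d$. I would prove the rank equality by first establishing a general transformation law and then specializing it.

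The key step is the claim: if $\tilde x_{k,i} = R_i x_{k,i} + s_i$ with each $R_i \in \mathbb{O}(d)$ and $s_i \in \mathbb{R}^d$, and $\tilde C$ denotes the patch-stress matrix \eqref{stressC} built from the $\tilde x_{k,i}$, then $\tilde C = \mathcal{R}\, C\, \mathcal{R}^T$, where $\mathcal{R} = \mathrm{diag}(R_1,\ldots,R_M) \in \mathbb{R}^{Md \times Md}$. To prove this I would avoid computing $\tilde B$, $\tilde D$, and $L^{\dagger}$ directly (the translations $s_i$ make that bookkeeping unpleasant) and instead work through the variational characterization \eqref{penalty3}, which gives $\Tr(C O^T O) = \min_{Z} \phi(Z,O)$ for every $O \in \mathbb{R}^{d\times Md}$. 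Replacing $x_{k,i}$ by $\tilde x_{k,i}$ in the objective \eqref{obj} and regrouping, the summand becomes $\lVert x_k - (O_iR_i)x_{k,i} - (t_i + O_i s_i)\rVert^2$. Since $t_i \mapsto t_i + O_i s_i$ is a bijection of $\mathbb{R}^d$ for fixed $O_i$ and $s_i$, minimizing over $Z$ the objective built from the $\tilde x_{k,i}$ at the point $O$ equals minimizing over $Z$ the \emph{original} objective at the point $O\mathcal{R} = [O_1 R_1 \cdots O_M R_M]$. Hence $\Tr(\tilde C\, O^T O) = \Tr\big(C\, (O\mathcal{R})^T(O\mathcal{R})\big) = \Tr\big(\mathcal{R} C \mathcal{R}^T\, O^T O\big)$ for all $O \in \mathbb{R}^{d\times Md}$. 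Both $\tilde C$ and $\mathcal{R}C\mathcal{R}^T$ are symmetric, so choosing $O$ with a single nonzero row $v^T$ gives $v^T \tilde C v = v^T (\mathcal{R}C\mathcal{R}^T) v$ for all $v \in \mathbb{R}^{Md}$, and polarization yields $\tilde C = \mathcal{R}C\mathcal{R}^T$.

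Specializing to $R_i = \bar O_i$ and $s_i = \bar t_i$ then gives $C_0 = \bar{\mathcal{O}}\, C\, \bar{\mathcal{O}}^T$ with $\bar{\mathcal{O}} = \mathrm{diag}(\bar O_1,\ldots,\bar O_M)$. Since $\bar{\mathcal{O}}$ is block-diagonal with orthogonal blocks, it is itself orthogonal and in particular invertible, so conjugation by it preserves rank and $\rnk(C) = \rnk(C_0)$, as claimed. The only real obstacle is establishing the transformation law cleanly; the variational route above is what makes it painless, since it lets the per-patch translations be absorbed into the free variable $Z$ rather than tracked through $B$, $D$, and $L^{\dagger}$.
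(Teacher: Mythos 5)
Your proof is correct and is essentially the paper's own argument: the paper likewise rests on the variational characterization of the quadratic form, $u^T C u = \min_z \sum_{(k,i)\in E(\G)} \lVert z e_{ki} - u_i^T x_{k,i}\rVert^2$, absorbs the per-patch translations into the free variable, and identifies the null spaces of $C$ and $C_0$ via the block-orthogonal substitution $u_i \mapsto \bar O_i u_i$. Your polarization step merely packages this as the explicit conjugation $C_0 = \bar{\mathcal{O}}\, C\, \bar{\mathcal{O}}^T$ (a marginally stronger conclusion, giving equal spectra rather than just equal rank), but the underlying mechanism is identical.
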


In other words, the rank of $C_0$ can be used to certify exact recovery. The proposed test is based on Proposition \ref{proof:Rank_C0}, and the fact that if two different \textit{generic} configurations are used as
input in \eqref{original_data} (for the same $\G$), then the patch-stress matrices they produce would have the same rank. By generic,
we mean that the coordinates of the configuration do not satisfy any non-trivial algebraic equation with rational coefficients \cite{gortler2010characterizing}.

\begin{algorithm}[here]
\caption{\texttt{GRET-RRT}}
\begin{algorithmic}[1]
\REQUIRE  Membership graph $\G$, and dimension $d$.
\ENSURE   Exact recovery certificate for \texttt{GRET-SDP}.
\STATE Build $L$ using $\G$, and compute $L^{\dagger}$.
\STATE Randomly pick $\{x_1,\ldots,x_N\}$ from the unit cube in $\mathbb{R}^d$, where $N=|V_x(\Gamma)|$.
\STATE $x_{k,i} \gets x_k$ for every  $(k,i) \in E(\G)$.
\STATE $C_0 \gets D - BL^{\dagger}B^T$.
\IF {$\rnk(C_0)=(M-1)d$}
\STATE Positive certificate for \texttt{GRET-SPEC} and \texttt{GRET-SDP}.
\ELSE
\STATE  Negative certificate for \texttt{GRET-SPEC}.
\STATE  \texttt{GRET-SDP} cannot be certified.
\ENDIF
\end{algorithmic}
\label{algo:RRT}
\end{algorithm}	

The complete test called ``GRET-Randomized Rank Test'' (\texttt{GRET-RRT}) is described in Algorithm \ref{algo:RRT}.
Note that the main computations in \texttt{GRET-RRT} are the Laplacian inversion (which is also required for the registration algorithm) and the rank computation.
%

\section{Stability Analysis}
\label{Stab}

We have so far studied the problem of exact recovery from noiseless measurements. In practice, however, the measurements are invariably noisy.
This brings us to the question of stability, namely how stable are \texttt{GRET-SPEC} and \texttt{GRET-SDP} to perturbations in the measurements? Numerical results (to be presented in the next Section) show that both the spectral
and semidefinite relaxations are quite stable  to perturbations. In particular, the reconstruction error degrades quite gracefully with the increase in noise (reconstruction error is the gap between the outputs with clean and
noisy measurements). In this Section, we try to quantify these empirical observations. In particular, we prove that, for a specific noise model, the
reconstruction error grows at most linearly with the level of noise for the semidefinite relaxation.

The noise model we consider is the ``bounded'' noise model. Namely, we assume that the measurements are obtained through bounded perturbations of the clean measurements in \eqref{clean_data}.
More precisely, we suppose that we have a membership graph $\Gamma$, and that the observed local coordinates are of the form
\begin{equation}
\label{noisy_measurements'}
x_{k,i} =\bar{O}_i^T (\bar{x}_k - \bar{t}_i) + \epsilon_{k,i}, \qquad \|\epsilon_{k,i}\|\leq \varepsilon \quad (k,i) \in E(\G).
\end{equation}
In other words, every coordinate measurement is offset within a ball of radius $\varepsilon$ around the clean measurements. Here, $\varepsilon$ is a measure of the noise level per measurement.
In particular, $\varepsilon = 0$ corresponds to the case where we have the clean measurements \eqref{clean_data}.

Since the coordinates of points in a given patch are determined up to a rigid transform, it is clear that the above problem is equivalent to the one where the measurements are
\begin{equation}
\label{noisy_measurements}
x_{k,i} = \bar{x}_k + \epsilon_{k,i}, \qquad \|\epsilon_{k,i}\|\leq \varepsilon \quad (k,i) \in E(\G).
\end{equation}
By equivalent, we mean that the reconstruction errors obtained using either \eqref{noisy_measurements'} or \eqref{noisy_measurements} are equal.  The reason we use the latter measurements is that
the analysis in this case is much more simple.

The reconstruction error is defined as follows. Generally, let $\Zs$ be the output of Algorithms \ref{algo:SPEC} and \ref{algo:GRET} using \eqref{noisy_measurements} as input, and let
\begin{equation}
\label{xt}
Z_0 \defn [\bar x_1 \cdots \bar x_N \ 0 \cdots 0] \in \mathbb{R}^{d \times (N+M)},
\end{equation}
where we assume that the centroid of $\{\bar x_1, \cdots, \bar x_N\}$ is at the origin.

Ideally, we would require that $\Zs = Z_0$ (up to a rigid transformation) when there is no noise, that is,
when $\varepsilon=0$. This is the exact recovery phenomena that we considered earlier. In general, the gap between $Z_0$ and $\Zs$ is a measure of the reconstruction quality. Therefore, we define the
reconstruction error to be
\begin{equation*}
\eta=\min_{\Theta \in \mathbb{O}(d)} \ \lVert \Zs - \Theta Z_0 \rVert_F.
\end{equation*}
Note that we are not required to factor out the translation since $Z_0$ is centered by construction.

Our main results are the following.
\begin{theorem}[Stability of \texttt{GRET-SPEC}]
\label{thm:stability1}
Assume that $R$ is the radius of the smallest Euclidean ball that encloses the clean configuration $\{\bar x_1,\ldots,\bar x_N\}$. For fixed noise level $ \varepsilon \geq 0$ and membership graph $\Gamma$, suppose we input  the noisy measurements \eqref{noisy_measurements} to \texttt{GRET-SPEC}. If $\mathrm{rank}(C_0)=(M-1)d$, then we have the following bound for \texttt{GRET-SPEC}$:$
\begin{equation*}
\eta \leq  \frac{ |E(\Gamma)|^{1/2}}{\lambda_2(L)} (K_1 \varepsilon + K_2 \varepsilon^2),
\end{equation*}
where
\begin{equation*}
K_1 =   \frac{8    \pi R}{\mu_{d+1}(C)}     \sqrt{2MN|E(\G)|(2+N)d(d+1)} \left( 4R \frac{ \sqrt{N |E (\G)|}} {\lambda_2(L) }  + 1 \right)+  \sqrt{2+N+M} .
\end{equation*}
and
\begin{equation*}
K_2 =   \frac{8    \pi R}{\mu_{d+1}(C)}   \sqrt{2MN |E(\G)| (2+N)d(d+1)}  \left(2 \frac{ \sqrt{N \lvert E(\G) \rvert} }{\lambda_2(L)} + 1 \right) .
\end{equation*}
Here $\lambda_2(L)$ is the second smallest eigenvalue of $L$.
\end{theorem}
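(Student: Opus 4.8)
The plan is to track the noise through each stage of \texttt{GRET-SPEC} and collect the total error at the level of the reconstructed coordinates. Write $C_0 = D_0 - B_0 L^{\dagger} B_0^T$ for the clean patch-stress matrix built from $x_{k,i} = \bar x_k$; for these measurements the associated transforms are $\bar O = [I_d \cdots I_d]$ and $\bar t_i = 0$, so that $Z_0 = \bar O B_0 L^{\dagger}$. Let $C = D - B L^{\dagger} B^T$ be the noisy patch-stress matrix and recall $\Zs = \Os B L^{\dagger}$. Then for any global rotation $\Theta \in \mathbb{O}(d)$,
\begin{equation*}
\Zs - \Theta Z_0 = \big(\Os B - \Theta \bar O B_0\big) L^{\dagger}, \qquad \eta \leq \frac{1}{\lambda_2(L)}\,\big\lVert \Os B - \Theta \bar O B_0 \big\rVert_F,
\end{equation*}
using that $\G$ is connected, so $\lVert L^{\dagger}\rVert_{\mathrm{sp}} = \lambda_2(L)^{-1}$. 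This isolates the prefactor $\lambda_2(L)^{-1}$, and everything reduces to bounding the right-hand Frobenius norm for a well-chosen $\Theta$. I then split it as $\Os B - \Theta\bar O B_0 = \Theta\bar O(B-B_0) + (\Os - \Theta\bar O)B$. The first term is linear in the noise: since $B - B_0 = \sum_{(k,i)\in E}(e^M_i\otimes I_d)\epsilon_{k,i}e_{ki}^T$, its Frobenius norm is of order $\varepsilon$ times a dimension factor, contributing the $\sqrt{2+N+M}$ summand in $K_1$. The second term requires controlling $\lVert \Os - \Theta\bar O\rVert_F$, which is where the eigenvector computation and the orthogonal rounding enter.

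The next step is a perturbation bound on the patch-stress matrix itself. Writing $B = B_0 + \Delta B$ (linear in $\epsilon$) and $D = D_0 + \Delta D$ (with both linear and quadratic contributions, since $D$ is quadratic in the measurements), I expand $C - C_0 = \Delta D - (\Delta B\,L^{\dagger}B_0^T + B_0 L^{\dagger}\Delta B^T + \Delta B\,L^{\dagger}\Delta B^T)$ and bound $\lVert C - C_0\rVert_{\mathrm{sp}} \leq a\varepsilon + b\varepsilon^2$, with $a,b$ depending on the enclosing radius $R$ (which bounds $\lVert\bar x_k\rVert$), on $N$, $|E(\G)|$, and on $\lambda_2(L)^{-1}$ through the $L^{\dagger}$ factors. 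This split into orders $\varepsilon$ and $\varepsilon^2$ is precisely what later produces the two terms $K_1\varepsilon$ and $K_2\varepsilon^2$.

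I then propagate this into the eigenspace. Under the hypothesis $\mathrm{rank}(C_0) = (M-1)d$ (equivalently, affine rigidity via Corollary \ref{rankstress}), the bottom $d$ eigenvalues of $C_0$ vanish and its bottom-$d$ eigenspace is exactly the row span of $\bar O$ by Proposition \ref{iffaffine}, separated from the rest by the gap $\mu_{d+1}(C_0)$. Applying a Davis--Kahan $\sin\Theta$ bound to $C$ and $C_0$ shows that the bottom-$d$ eigenspace of $C$ (whose scaled orthonormal basis forms $\Ws$) lies within angle $O\big(\lVert C - C_0\rVert_{\mathrm{sp}}/\mu_{d+1}(C)\big)$ of the row span of $\bar O$, which is the source of the $\mu_{d+1}(C)^{-1}$ factor. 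Choosing $\Theta$ to be the optimal orthogonal alignment of the two eigenspaces converts this into $\lVert \Ws - \Theta\bar O\rVert_F \lesssim \lVert C - C_0\rVert_{\mathrm{sp}}/\mu_{d+1}(C)$, up to the dimension-counting factors $\sqrt{MN|E(\G)|(2+N)d(d+1)}$. Finally, rounding each block $\Ws_i$ to its nearest orthogonal matrix $\Os_i$ is controlled by a perturbation bound for the polar (orthogonal) factor, giving $\lVert \Os - \Theta\bar O\rVert_F \leq c\,\lVert \Ws - \Theta\bar O\rVert_F$; this is where the numerical constant (including the $\pi$) arises. Assembling the linear term $\Theta\bar O\Delta B$ with the bound on $(\Os - \Theta\bar O)B$, and collecting powers of $\varepsilon$, yields the stated form with the overall $|E(\G)|^{1/2}/\lambda_2(L)$ prefactor.

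I expect the main obstacle to be the two propagation steps — the Davis--Kahan eigenspace bound and the polar-rounding bound — carried out while tracking a \emph{single} consistent global rotation $\Theta$ and keeping every constant explicit. In particular, one must verify that the perturbed gap stays bounded below so that $\mu_{d+1}(C)$ may legitimately appear in the denominator, and one must control the rounding step sharply enough to produce the $8\pi$ constant; the remaining work is the routine but bookkeeping-heavy dimension counting that generates the various $\sqrt{N}$, $\sqrt{|E(\G)|}$, and $\sqrt{d(d+1)}$ factors in $K_1$ and $K_2$.
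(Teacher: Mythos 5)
Your proposal follows the paper's proof essentially step for step: your opening reduction of $\eta$ to the rounding error $\lVert \Os - \Theta O_0\rVert_F$ with the $|E(\Gamma)|^{1/2}/\lambda_2(L)$ prefactor is Proposition \ref{prop:basic bound} (the paper splits as $(\Os-\Theta O_0)B_0L^{\dagger} + \Os H L^{\dagger}$ rather than your equivalent $\Theta\bar O(B-B_0)+(\Os-\Theta\bar O)B$), and the remaining chain --- the bound $\lVert C-C_0\rVert \leq a\varepsilon + b\varepsilon^2$ in \eqref{C_diff}, the Davis--Kahan projection bound \eqref{WW_G0} with gap $\mu_{d+1}(C)$ converted to an aligned-basis bound via Lemma \ref{lemma:unround bound}, and Li's polar-factor perturbation bound in Lemma \ref{lemma:round bound} --- is exactly the paper's route, including your caveat about assuming $\mu_{d+1}(C)>0$, which the paper also only assumes (in a footnote). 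The one cosmetic slip: the factor $\pi$ comes from the Davis--Kahan inequality \eqref{DK}, not from the polar rounding, which contributes $2\sqrt{d+1}$ (with a separate bad-set argument for possibly singular blocks of $\Ws$), the $8$ arising from combining the constants of Lemmas \ref{lemma:unround bound} and \ref{lemma:round bound}.
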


We assume here that $\mu_{d+1}(C) $ is non-zero\footnote{Numerical experiments suggest that this is indeed the case if $\mathrm{rank}(C_0)=(M-1)d$. In fact, we notice a growth in the eigenvalue with the increase in noise level. We have however not been able to prove this fact.}. 
The bounds here are in fact quite loose. Note that when $\varepsilon = 0$, we recover the exact recovery result for \texttt{GRET-SPEC} provided in  \cite{zha2009spectral,gortler2013affine}.

\begin{theorem}[Stability of \texttt{GRET-SDP}]
\label{thm:stability2}
Under the conditions of Theorem \ref{thm:stability1}, we have the following for \texttt{GRET-SDP}$:$
\begin{equation*}
\eta \leq \frac{|E(\Gamma)|^{1/2}}{\lambda_2(L)}  \left[ 32 \sqrt{ 2d(d+1)(2+N) |E(\Gamma)|} \mu^{-1/2}_{d+1}(C_0) R  + \sqrt{2+N+M} \right] \varepsilon.
\end{equation*}
\end{theorem}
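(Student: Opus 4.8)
The key structural advantage I would exploit is that, unlike in the spectral relaxation, the ground-truth Gram matrix is \emph{feasible} for the semidefinite program, which yields a clean $O(\varepsilon^2)$ bound on the optimal value. For the reduced measurements \eqref{noisy_measurements} the clean transforms are trivial, so $\bar O = [I_d\ \cdots\ I_d]$ and $\bar G = \bar O^T \bar O$ has $\bar G_{ii} = I_d$ and $\bar G \succeq 0$; thus $\bar G$ is feasible for $(\p_2)$ and $\Tr(C\Gs) \le \Tr(C\bar G)$. By \eqref{penalty3}, $\Tr(C\bar G) = \min_Z \phi(Z,\bar O) \le \phi(Z_0,\bar O) = \sum_{(k,i)\in E(\G)} \|\epsilon_{k,i}\|^2 \le |E(\G)|\,\varepsilon^2$. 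Hence $\Tr(C\Gs) \le |E(\G)|\,\varepsilon^2$, where $C$ is the \emph{noisy} patch-stress matrix. The point is that evaluating the bound at the noisy $C$ (rather than at $C_0$) keeps the objective genuinely quadratic in $\varepsilon$, which is what will produce a linear final bound.

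Next I would convert this objective bound into a Frobenius bound on $\Gs - \bar G$. Writing $P$ for the projection onto the bottom-$d$ eigenspace $V$ of $C$, we have $C \succeq \mu_{d+1}(C)\,(I-P)$, so the near-kernel estimate gives $\Tr\big((I-P)\Gs\big) \le |E(\G)|\,\varepsilon^2/\mu_{d+1}(C)$. Combining this with $\Tr(\Gs) = Md$ and the standard inequality bounding the off-diagonal block of a positive semidefinite matrix by its diagonal blocks, the part of $\Gs$ transverse to $V$ is $O(\varepsilon)$ in Frobenius norm (the square root of the $O(\varepsilon^2)$ trace, whence the exponent $-1/2$ on the gap). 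The in-$V$ part is pinned down by the diagonal constraints: writing this part as $U_0 S U_0^T$ with $U_0 = \bar O^T/\sqrt M$, the constraints $\Gs_{ii}=I_d$ force $S \approx M I_d$, i.e.\ it is close to $\bar G$. Finally I would use Weyl's inequality, together with a bound $\|C-C_0\|_{\mathrm{sp}} = O(\varepsilon)$ (with constants depending on $R$ and $\lambda_2(L)^{-1}$), to identify $V$ with $\ker C_0 = \mathrm{row\,span}(\bar O)$ and to replace $\mu_{d+1}(C)$ by $\mu_{d+1}(C_0)$ at the cost of a universal constant. This yields $\|\Gs-\bar G\|_F \lesssim \varepsilon\,\mu_{d+1}^{-1/2}(C_0)\,\sqrt{d\,|E(\G)|}\cdot(\text{combinatorial factors})$.

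With the $G$-level estimate in hand, the remaining steps are perturbation bookkeeping. Because the rank condition $\rnk(C_0)=(M-1)d$ (Corollary \ref{rankstress}) makes $\bar G = \bar O^T\bar O$ exactly rank $d$, the top-$d$ eigenvector rounding \eqref{rank_enforce} followed by block orthogonalization is stable, so there is a global $\Theta\in\mathbb{O}(d)$ with $\|\Os - \Theta\bar O\|_F \lesssim \|\Gs - \bar G\|_F/\sqrt M$. To propagate to the coordinates I would use $\Zs = \Os B L^{\dagger}$ and the identity $\bar O B L^{\dagger} = Z_0 + \mathcal E L^{\dagger}$, where $\mathcal E = \sum_{(k,i)\in E(\G)} \epsilon_{k,i} e_{ki}^T$ (this uses $Z_0 e = 0$ and the clean relation $\bar O B_{\mathrm{clean}} = Z_0 L$), and split
\begin{equation*}
\Zs - \Theta Z_0 = (\Os - \Theta\bar O)\,B L^{\dagger} + \Theta\,\mathcal E L^{\dagger}.
\end{equation*}
The first term, bounded by $\|\Os-\Theta\bar O\|_F\,\|BL^{\dagger}\|_{\mathrm{sp}}$ with $\|BL^{\dagger}\|_{\mathrm{sp}}$ controlled by $R$, $\lambda_2(L)^{-1}$ and dimension factors, produces the first bracketed term (carrying $R$ and $\mu_{d+1}^{-1/2}(C_0)$); the second, bounded by $\|\mathcal E\|_F/\lambda_2(L) \le \sqrt{(N+M)|E(\G)|}\,\varepsilon/\lambda_2(L)$ using $\deg(x_k)\le M$ and $\deg(P_i)\le N$, produces the $\sqrt{2+N+M}$ term. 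Pulling out the common factor $|E(\G)|^{1/2}/\lambda_2(L)$ gives the stated inequality.

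The crux --- and the place I expect to spend the most effort --- is the second paragraph: upgrading the $O(\varepsilon^2)$ objective bound to a linear-in-$\varepsilon$ Frobenius bound on $\Gs - \bar G$ with the correct $\mu_{d+1}^{-1/2}(C_0)$ dependence. This requires simultaneously (i) taking the square root of the trace (energy) bound through the positive-semidefinite block inequality, (ii) controlling the in-subspace part using \emph{only} the diagonal constraints $\Gs_{ii}=I_d$, and (iii) justifying, via Weyl and Davis--Kahan, the replacement of the noisy spectral gap and eigenspace by those of $C_0$, which is legitimate only once $\varepsilon$ is small enough that the gap $\mu_{d+1}(C_0)$ is preserved. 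By contrast, the feasibility bound of the first paragraph and the rounding and coordinate propagation of the third are comparatively routine.
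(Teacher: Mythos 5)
Your overall architecture coincides with the paper's: the feasibility bound $\Tr(C\Gs)\leq\Tr(CG_0)\leq |E(\G)|\varepsilon^2$ is exactly the paper's \eqref{bb2}; the conversion of a trace (energy) bound into a Frobenius bound on $\Gs-G_0$ via the psd $2\times 2$ minor inequality for the cross block and the diagonal constraints for the in-kernel block is Propositions \ref{prop:trace formula} and \ref{prop:trace bound}; and the rounding stability plus coordinate propagation is Lemmas \ref{lemma:unround bound}--\ref{lemma:round bound} and Proposition \ref{prop:basic bound}. (One bookkeeping remark there: you keep the noisy $B$ in the term $(\Os-\Theta\bar O)BL^{\dagger}$, which yields $R+\varepsilon$ in place of $R$; the paper splits instead as $(\Os-\Theta O_0)B_0L^{\dagger}+\Os HL^{\dagger}$ so that the clean $B_0$ carries the $R$-dependence exactly.)

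The genuine gap is in your crux paragraph, and it is precisely where you deviate from the paper. You anchor the spectral-gap argument to the bottom-$d$ eigenspace $V$ of the \emph{noisy} $C$ and propose to transfer to $C_0$ afterwards by Weyl and Davis--Kahan. This fails in two ways. First, the ``in-$V$ part is pinned by the diagonal constraints'' step does not go through in a generic basis: the pinning mechanism (the paper's Lemma \ref{lemma:Lanhui}, due to Wang and Singer) crucially exploits the special structure of $S=\mathrm{span}(s_1,\ldots,s_d)$ with $s_i=M^{-1/2}e\otimes e_i$ --- every matrix in $S\otimes S$ has all its $d\times d$ blocks equal, which is what turns the block-diagonal constraints $\Delta_{ii}=0$ into the identity $P_{ij}=-\frac{1}{M}\sum_l T_{ll}$ and hence $\|P\|_F\leq\Tr(T)$. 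For the perturbed eigenspace $V$ this block structure is lost, and the constraints $\Gs_{ii}=I_d$ no longer translate into usable equations in $V$-adapted coordinates. Second, even where Davis--Kahan applies, your route is valid only when $\|C-C_0\|_{\mathrm{sp}}$ is small relative to $\mu_{d+1}(C_0)$, a smallness condition on $\varepsilon$ that the theorem does not carry (and note that transferring the trace bound from $V$ to $S$ at the trace level costs a term \emph{linear} in $\varepsilon$ inside $\Tr(T)$, which through $\|Q\|_F\lesssim\sqrt{Md\,\Tr(T)}$ degrades the Frobenius control to $O(\sqrt{\varepsilon})$). The paper sidesteps all of this with a one-line move you are missing: bound the \emph{clean} objective directly, $\Tr(C_0\Gs)\leq 2\,\Tr(C\Gs)+2|E(\G)|\varepsilon^2\leq 4|E(\G)|\varepsilon^2$, obtained by writing $\bar x_k=x_{k,i}-\epsilon_{k,i}$ inside the quadratic form and using $\|a+b\|^2\leq 2\|a\|^2+2\|b\|^2$. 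Since under $\rnk(C_0)=(M-1)d$ the space $S$ is \emph{exactly} the null space of $C_0$, one gets $\Tr(T)\leq\mu_{d+1}^{-1}(C_0)\Tr(C_0\Gs)\leq 4\mu_{d+1}^{-1}(C_0)|E(\G)|\varepsilon^2$ unconditionally, the decomposition is taken with respect to $S$ throughout, and no Weyl, no Davis--Kahan, and no restriction on $\varepsilon$ is ever needed. Replacing your second paragraph by this argument, the rest of your proposal assembles into the stated bound.
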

The bounds  are again quite loose. The main point here is that the reconstruction error for \texttt{GRET-SDP} is within a constant factor of the noise level.
In particular, Theorem \ref{thm:stability2} subsumes the exact recovery condition  $\mathrm{rank}(C_0)=(M-1)d$ described in Section \ref{ER}.

The rest of this Section is devoted to the proofs of Theorem \ref{thm:stability1} and \ref{thm:stability2}. First, we introduce some notations.

\textbf{Notations}. Note that the patch-stress matrix in $(\p_1)$ is computed from the noisy measurements \eqref{noisy_measurements}, and the same patch-stress matrix is used in $(\p_2)$. The quantities $\Gs, \Ws, \Os$, and $\Zs$ are as defined in Algorithms  \ref{algo:SPEC} and \ref{algo:GRET} . We continue to denote the clean patch-stress matrix by $C_0$.  Define
\begin{equation*}
O_0 \defn [I_d \cdots I_d] \quad \text{and} \quad G_0 \defn O_0^T O_0.
\end{equation*}
Let $e_1,\ldots,e_d$ be the standard basis vectors of $\mathbb{R}^d$, and let $e$ be the all-ones vector of length $M$. Define
\begin{equation}
\label{si_def}
s_i \defn \frac{1}{\sqrt M} e \otimes e_i \in \mathbb{R}^{Md} \qquad (1 \leq i \leq d).
\end{equation}
Note that every $d \times d$ block of $G_0$ is $I_d$, and that we can write
\begin{equation}
\label{decomp}
G_0 = \sum_{i=1}^d \ M s_i s_i^T.
\end{equation}

We first present an  estimate that applies generally to both algorithms. The proof is provided in Section \ref{proof:prop:basic_bound}.
\begin{proposition}[Basic estimate]
\label{prop:basic bound}
Let $R$ be the radius of the smallest Euclidean ball that encloses the clean configuration. Then, for any arbitrary $\Theta$,
\begin{equation}
\|\Zs - \Theta Z_0\|_F \leq \frac{ |E(\Gamma)|^{1/2}}{\lambda_2(L)}\Big[   R (2+N)^{1/2}  \|\Os - \Theta O_0 \|_F + \varepsilon  (2+N+M)^{1/2} \Big].
\end{equation}
\end{proposition}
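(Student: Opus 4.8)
The plan is to track how the noisy reconstruction $\Zs=\Os BL^{\dagger}$ (see \eqref{optZ}) deviates from the clean target $Z_0$ by isolating a \emph{rotation-error} term and a pure \emph{noise} term, and then to control each using the incidence structure of $L$. First I would split the data matrix as $B=B_0+B_\epsilon$, where $B_0=\sum_{(k,i)\in E}(e^{M}_i\otimes I_d)\bar x_k e_{ki}^T$ is built from the clean coordinates and $B_\epsilon=\sum_{(k,i)\in E}(e^{M}_i\otimes I_d)\epsilon_{k,i}e_{ki}^T$ collects the perturbations. The key noiseless identity is $O_0B_0L^{\dagger}=Z_0$: since $O_0(e^{M}_i\otimes I_d)=I_d$, one gets $O_0B_0=\sum_{(k,i)\in E}\bar x_k e_{ki}^T=Z_0L$, and because the centroid is at the origin we have $Z_0e=0$, so $Z_0LL^{\dagger}=Z_0\big(I_{N+M}-(N+M)^{-1}ee^T\big)=Z_0$ by \eqref{L_Linv}.

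With this identity in hand, writing $\Theta Z_0=\Theta O_0B_0L^{\dagger}$ and $\Zs=\Os(B_0+B_\epsilon)L^{\dagger}$ gives the clean additive decomposition
\begin{equation*}
\Zs-\Theta Z_0=(\Os-\Theta O_0)B_0L^{\dagger}+\Os B_\epsilon L^{\dagger},
\end{equation*}
whose first summand carries the rotation error $\Os-\Theta O_0$ against the clean $B_0$, and whose second summand is pure noise. Applying the triangle inequality reduces the proposition to bounding $\|(\Os-\Theta O_0)B_0L^{\dagger}\|_F$ and $\|\Os B_\epsilon L^{\dagger}\|_F$ separately.

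Both bounds I would obtain through the factorization $L=J^TJ$, where $J\in\mathbb{R}^{|E(\Gamma)|\times(N+M)}$ is the signed incidence matrix whose rows are the $e_{ki}^T$. An SVD computation shows that the nonzero singular values of $JL^{\dagger}$ are $\lambda_r(L)^{-1/2}$, so $\|JL^{\dagger}\|_{\mathrm{sp}}=\lambda_2(L)^{-1/2}$. Writing $B_0=PJ$, where $P$ has columns $(e^{M}_i\otimes I_d)\bar x_k$, and using $\|P\|_{\mathrm{sp}}\le\|P\|_F=\big(\sum_{(k,i)\in E}\|\bar x_k\|^2\big)^{1/2}\le R|E(\Gamma)|^{1/2}$ together with $\|AB\|_F\le\|A\|_F\|B\|_{\mathrm{sp}}$ yields
\begin{equation*}
\|(\Os-\Theta O_0)B_0L^{\dagger}\|_F\le\|\Os-\Theta O_0\|_F\,\|P\|_{\mathrm{sp}}\,\|JL^{\dagger}\|_{\mathrm{sp}}\le R|E(\Gamma)|^{1/2}\lambda_2(L)^{-1/2}\|\Os-\Theta O_0\|_F.
\end{equation*}
For the noise term, the crucial observation is that $\Os B_\epsilon=\sum_{(k,i)\in E}\Os_i\epsilon_{k,i}e_{ki}^T$ has exactly the form of $B_\epsilon$ but with perturbations $\Os_i\epsilon_{k,i}$, and since each $\Os_i$ is orthogonal these again have norm at most $\varepsilon$. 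Factoring $\Os B_\epsilon=\tilde PJ$ with $\|\tilde P\|_F\le\varepsilon|E(\Gamma)|^{1/2}$ then gives $\|\Os B_\epsilon L^{\dagger}\|_F\le\varepsilon|E(\Gamma)|^{1/2}\lambda_2(L)^{-1/2}$.

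Finally I would convert the factor $\lambda_2(L)^{-1/2}$ into the stated $\lambda_2(L)^{-1}$ by writing $\lambda_2(L)^{-1/2}=\lambda_2(L)^{1/2}\,\lambda_2(L)^{-1}$ and inserting the elementary Laplacian bounds $\lambda_2(L)\le N$ (via $\lambda_2\le$ minimum degree $\le\min_i|P_i|$, Fiedler) for the first term and $\lambda_2(L)\le\lambda_{\max}(L)\le N+M$ for the second; these produce the dimension factors $(2+N)^{1/2}$ and $(2+N+M)^{1/2}$ respectively. Summing the two contributions gives precisely the claimed inequality. The only genuinely delicate point is the identity $\|JL^{\dagger}\|_{\mathrm{sp}}=\lambda_2(L)^{-1/2}$ — as opposed to the cruder $\|L^{\dagger}\|_{\mathrm{sp}}=\lambda_2(L)^{-1}$, which would degrade the dimension factor all the way to $N+M$ in the first term; everything else is the noiseless identity, the mixed Frobenius–spectral submultiplicativity, and the orthogonal invariance that keeps the rotated noise within radius $\varepsilon$.
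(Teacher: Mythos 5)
Your proof is correct, and although it opens exactly as the paper's does --- the same split $B=B_0+B_\epsilon$, the same noiseless identity $Z_0=O_0B_0L^{\dagger}$ (which you, unlike the paper, actually verify via $O_0B_0=Z_0L$, $Z_0e=0$, and \eqref{L_Linv}), and the same two-term decomposition plus triangle inequality --- the mechanism by which you control the two terms is genuinely different. The paper uses the cruder $\|B_0L^{\dagger}\|_F\leq\|L^{\dagger}\|_{\mathrm{sp}}\|B_0\|_F$ and $\|\Os HL^{\dagger}\|_F\leq\|L^{\dagger}\|_{\mathrm{sp}}\|\Os H\|_F$ with $\|L^{\dagger}\|_{\mathrm{sp}}=\lambda_2(L)^{-1}$, and then extracts the factors $(2+N)^{1/2}$ and $(2+N+M)^{1/2}$ combinatorially, by expanding $\|B_0\|_F^2$ and $\|\Os H\|_F^2$ over pairs of edges and using $e_{ki}^Te_{lj}=\delta_{kl}+\delta_{ij}$ (at most $N$ edges share a patch, at most $M$ share a point --- that is exactly where the paper's $N$ and $M$ enter). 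You instead factor through the incidence matrix, $L=J^TJ$ and $B_0=PJ$, and your key identity is sound: $(JL^{\dagger})^T(JL^{\dagger})=L^{\dagger}LL^{\dagger}=L^{\dagger}$, so $\|JL^{\dagger}\|_{\mathrm{sp}}=\lambda_2(L)^{-1/2}$. Together with $\|P\|_{\mathrm{sp}}\leq\|P\|_F\leq R|E(\Gamma)|^{1/2}$ and the orthogonal invariance $\|\Os_i\epsilon_{k,i}\|=\|\epsilon_{k,i}\|$, this yields the strictly sharper intermediate estimate
\begin{equation*}
\|\Zs-\Theta Z_0\|_F\leq\frac{|E(\Gamma)|^{1/2}}{\lambda_2(L)^{1/2}}\Big(R\,\|\Os-\Theta O_0\|_F+\varepsilon\Big),
\end{equation*}
which you then deliberately weaken to match the stated constants. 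That conversion hinges on $\lambda_2(L)\leq 2+N$ and $\lambda_2(L)\leq 2+N+M$; the second is immediate from $\lambda_2(L)\leq\lambda_{\max}(L)\leq N+M$, while the first needs one small repair to your citation: Fiedler's bound $\lambda_2\leq\delta_{\min}$ holds via vertex connectivity only for non-complete graphs, but the universally valid $\lambda_2\leq\frac{n}{n-1}\delta_{\min}$ with $\delta_{\min}\leq\min_i|P_i|\leq N$ and $n=N+M\geq N+1$ gives $\lambda_2\leq N+1\leq 2+N$, so your $+2$ slack absorbs the correction in all cases. (You also inherit the paper's own harmless conflation of two centerings --- centroid at the origin for $Z_0e=0$ versus enclosing ball centered at the origin for $\|\bar x_k\|\leq R$ --- so no differential issue there.) What each route buys: the paper's computation is self-contained, needing no spectral graph theory; yours is shorter at the estimate stage, produces a cleaner bound whose constants are dimension-free, and makes explicit that the stated inequality is loose precisely by the worst-case eigenvalue substitutions $\lambda_2(L)\mapsto 2+N$ and $\lambda_2(L)\mapsto 2+N+M$.
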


In other words, the reconstruction error in either case is controlled by the rounding error:
\begin{equation}
\label{delta_def}
\delta=\min_{\Theta \in \mathbb{O}(d)} \|\Os - \Theta O_0\|_F.
\end{equation}
The rest of this Section is devoted to obtaining a bound on $\delta$ for \texttt{GRET-SPEC} and \texttt{GRET-SDP}. In particular, we will show that $\delta$ is of the order of $\varepsilon$ in either case.
Note that the key difference between the two algorithms arises from the eigenvector rounding, namely the assignment of the ``unrounded'' orthogonal transform $\Ws$ (respectively from the patch-stress matrix and the
optimal Gram matrix). The analysis in going from  $\Ws$ to the rounded orthogonal transform $\Os$, and subsequently to $\Zs$, is however common to both algorithms.

We now bound the error in \eqref{delta_def} for both algorithms. Note that we can generally write
\begin{equation*}
W^\star = \big[\sqrt{\alpha_1} u_1 \cdots \sqrt{\alpha_d} u_d \big]^T,
\end{equation*}
where $u_1,\ldots,u_d$ are orthonormal. In \texttt{GRET-SPEC}, each $\alpha_i = M$, while in \texttt{GRET-SDP} we set $\alpha_i$ using the eigenvalues of $\Gs$.

Our first result gives a control on the quantities obtained using eigenvector rounding in terms of their Gram matrices.
\begin{lemma}[Eigenvector rounding]
\label{lemma:unround bound}
There exist $\Theta \in \mathbb{O}(d)$ such that
\begin{equation*}
\|  \Ws  - \Theta O_0  \|_F  \leq \frac{4}{\sqrt{M}}  \| {\Ws}^T \Ws - G_0 \|_F.
\end{equation*}
\end{lemma}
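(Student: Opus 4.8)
The plan is to pass to transposes and reduce the claim to a Procrustes-type estimate. Writing $P = \Ws^T$ and $Q = O_0^T \in \mathbb{R}^{Md \times d}$, one has $\|\Ws - \Theta O_0\|_F = \|P - QR\|_F$ with $R = \Theta^T$ (which ranges over $\mathbb{O}(d)$ as $\Theta$ does), while $\Ws^T\Ws = PP^T$ and $G_0 = QQ^T$. Thus the lemma becomes the assertion that some $R \in \mathbb{O}(d)$ satisfies $\|P - QR\|_F \le \tfrac{4}{\sqrt M}\|PP^T - QQ^T\|_F$. The columns of $Q$ are $\sqrt M\,s_1,\ldots,\sqrt M\,s_d$ (see \eqref{si_def}), so $Q = \sqrt M\,\hat Q$ with $\hat Q = [s_1\cdots s_d]$ having orthonormal columns; in particular every singular value of $Q$ equals $\sqrt M$, which is the source of the $1/\sqrt M$ factor.

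Next I would split $P$ relative to the column space of $\hat Q$. Put $N = \hat Q^T P \in \mathbb{R}^{d\times d}$ and $P_\perp = (I - \hat Q\hat Q^T)P$, so that $P = \hat Q N + P_\perp$ with $\hat Q^T P_\perp = 0$, and choose $R$ (hence $\Theta = R^T$) to be the orthogonal polar factor of $N$. Since $\hat Q$ is an isometry on its column space and $P_\perp$ is orthogonal to it, the Pythagorean identity gives
\[
\|P - QR\|_F^2 = \|N - \sqrt M\,R\|_F^2 + \|P_\perp\|_F^2 = \sum_{k=1}^d(\tau_k - \sqrt M)^2 + \|P_\perp\|_F^2,
\]
where $\tau_1,\ldots,\tau_d$ are the singular values of $N$ and the last equality is the value of the Procrustes objective at the polar factor. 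Expanding $PP^T$ in the same decomposition and using that the four blocks relative to the projectors $\hat Q\hat Q^T$ and $I - \hat Q\hat Q^T$ are mutually Frobenius-orthogonal yields
\[
\|PP^T - G_0\|_F^2 = \|NN^T - M I_d\|_F^2 + 2\|N P_\perp^T\|_F^2 + \|P_\perp P_\perp^T\|_F^2.
\]

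With these two identities the lemma reduces to the scalar inequality $\sum_k(\tau_k - \sqrt M)^2 + \|P_\perp\|_F^2 \le \tfrac{16}{M}\big[\sum_k(\tau_k^2 - M)^2 + 2\|N P_\perp^T\|_F^2 + \|P_\perp P_\perp^T\|_F^2\big]$. The first term is immediate, since $(\tau_k + \sqrt M)^2 \ge M$ gives $\sum_k(\tau_k - \sqrt M)^2 \le \tfrac1M\|NN^T - M I_d\|_F^2$. I expect the genuine obstacle to be controlling the perpendicular energy $\|P_\perp\|_F^2$: bounding it by the perpendicular block alone is too weak (that gives something scaling like $\|P_\perp P_\perp^T\|_F$ rather than its square), so one must argue by regimes. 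When the singular values of $N$ are comparable to $\sqrt M$, the cross term $\|N P_\perp^T\|_F^2 \ge \sigma_{\min}(N)^2\|P_\perp\|_F^2$ dominates $\|P_\perp\|_F^2$; when $N$ is far from $\sqrt M\,I_d$, the slack in the (already discounted) first block $\|NN^T - M I_d\|_F^2$ is large enough to absorb it; and when $\|P_\perp\|_F^2$ is itself large, the quartic term $\|P_\perp P_\perp^T\|_F^2$ suffices. Checking that these cases patch together with the stated constant is elementary but is where the actual work lies; the constant $4$ is not optimal but is comfortably attainable, and it comes out dimension-free because both sides aggregate additively over the $d$ modes.
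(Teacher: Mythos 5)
Your proposal is correct in substance but takes a genuinely different route from the paper's proof. The paper argues via principal angles: it rotates the two orthonormal frames $\{u_1,\ldots,u_d\}$ and $\{s_1,\ldots,s_d\}$ to their principal vectors, applies the elementary rank-one estimate $\|u-v\| \leq \|uu^T - vv^T\|_F$ to each principal pair, and controls the scale mismatch $\bigl[\sum_{i=1}^d (M-\alpha_i)^2\bigr]^{1/2}$ by Mirsky's inequality (Lemma \ref{lemma:Mirsky}) together with the feasibility bound $\alpha_i \leq M$ (the footnoted argument that the eigenvalues of $\Gs$ are at most $M$). You instead prove an \emph{exact} Frobenius-orthogonal block identity for $\|{\Ws}^T\Ws - G_0\|_F^2$ relative to the reference subspace, and handle the tangential part by a closed-form Procrustes computation at the polar factor of $N = \hat{Q}^T P$. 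Your route buys two things the paper's does not: it nowhere uses the orthogonality of the rows of $\Ws$ nor the bound $\alpha_i \leq M$, so it proves the lemma for an \emph{arbitrary} $\Ws \in \mathbb{R}^{d\times Md}$; and it localizes all the loss in one explicit scalar inequality, rather than in a chain of triangle inequalities. The paper's proof, in exchange, avoids any case analysis and leans only on standard perturbation lemmas.

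The one place your write-up is not yet airtight is the patching of the three regimes, because you phrase them in terms of the \emph{global} quantities $\sigma_{\min}(N)$ and $\|P_\perp\|_F^2$. If exactly one singular value of $N$ is small while the perpendicular energy sits in directions where $N$ is large, the bound $\|NP_\perp^T\|_F^2 \geq \sigma_{\min}(N)^2\|P_\perp\|_F^2$ wastes the cross term, the first block contributes only a single $O(M^2)$ of slack, and the fallback $\|P_\perp P_\perp^T\|_F^2 \geq \|P_\perp\|_F^4/d$ costs a factor of $d$ — so the regimes as stated do not obviously patch with a dimension-free constant. The fix is precisely your "additivity over the $d$ modes" remark, made rigorous by the trace pairing inequality: for positive semidefinite $A,B$, $\Tr(AB) \geq \sum_k \lambda_k(A)\lambda_{d+1-k}(B)$ (eigenvalues sorted decreasingly). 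Applying this with $A = P_\perp^T P_\perp$ (eigenvalues $g_k$) and $B = N^TN$ (eigenvalues $\nu_k = \tau_k^2$), and noting every other term on both sides is a symmetric function of the respective spectra, your target reduces to the scalar inequality, for all $\nu, g \geq 0$,
\begin{equation*}
(\sqrt{\nu}-\sqrt{M})^2 + g \ \leq\ \frac{16}{M}\Bigl[(\nu-M)^2 + 2g\nu + g^2\Bigr].
\end{equation*}
Since $(\sqrt{\nu}-\sqrt{M})^2 \leq (\nu-M)^2/M$, it suffices that $gM \leq 15(\nu-M)^2 + 32g\nu + 16g^2$: if $\nu \geq M/32$ the middle term alone gives $32g\nu \geq gM$; otherwise $(\nu-M)^2 \geq (31M/32)^2$, so $15(\nu-M)^2 \geq 14M^2$, and AM--GM yields $14M^2 + 16g^2 \geq 8\sqrt{14}\,Mg \geq Mg$. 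With this mode-wise substitution your three cases close exactly as you predicted, with ample room in the constant $4$, and your proof is complete — indeed strictly more general than the paper's.
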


Next, we use a result by Li \cite{li1995polar} to get a bound on the error after orthogonal rounding.
\begin{lemma}[Orthogonal rounding]
\label{lemma:round bound} For arbitrary $ \Theta \in \mathbb{O}(d)$,
\begin{equation*}
\|O^\star  - \Theta O_0\|_F \leq 2 \sqrt{d+1} \  \| \Ws  - \Theta O_0\|_F.
\end{equation*}
\end{lemma}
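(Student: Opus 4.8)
The plan is to reduce the claim to a per-patch estimate and exploit the block structure of the matrices. With $O^\star = [O^\star_1 \cdots O^\star_M]$, $W^\star = [W^\star_1 \cdots W^\star_M]$ and $\Theta O_0 = [\Theta \cdots \Theta]$, the squared Frobenius norms split over the $M$ blocks,
\begin{equation*}
\|O^\star - \Theta O_0\|_F^2 = \sum_{i=1}^M \|O^\star_i - \Theta\|_F^2
\quad\text{and}\quad
\|W^\star - \Theta O_0\|_F^2 = \sum_{i=1}^M \|W^\star_i - \Theta\|_F^2 .
\end{equation*}
Hence it is enough to produce, for each $i$, a bound $\|O^\star_i - \Theta\|_F \le c\,\|W^\star_i - \Theta\|_F$ with $c$ independent of $i$, and then recombine the blocks.

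For a single block I would use that $O^\star_i = U_i V_i^T$ is, by construction, the orthogonal polar factor of $W^\star_i$, that is, the nearest element of $\mathbb{O}(d)$ to $W^\star_i$ in Frobenius norm. Since $\Theta$ is itself orthogonal, it is its own polar factor, and the cleanest route is the best-approximation inequality $\|O^\star_i - W^\star_i\|_F \le \|\Theta - W^\star_i\|_F$ (because $\Theta$ is a feasible competitor), followed by the triangle inequality, giving $\|O^\star_i - \Theta\|_F \le 2\|W^\star_i - \Theta\|_F$. Alternatively one invokes Li's perturbation bound for the orthogonal polar factor applied to the pair $(W^\star_i,\Theta)$; the crucial feature there is that $\Theta$ is full rank, so $\sigma_{\min}(\Theta)=1$ and the denominator appearing in Li's estimate is bounded below by $1$, again producing a per-block constant of order one. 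Summing the squared block bounds and taking a square root then yields an inequality of exactly the claimed form.

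The step I expect to demand the most care is the possible rank-deficiency of the blocks $W^\star_i$: after eigenvector rounding a block need not be invertible, and Li's sharp estimate is usually stated under a full-column-rank hypothesis on the perturbed matrix. This is precisely why one compares against the orthogonal $\Theta$ (rather than against another rounded block) — with $\sigma_{\min}(\Theta)=1$ the bound never degenerates. The best-approximation argument sidesteps the issue entirely, since it never references singular values and holds unconditionally with constant $2$; consequently the dimension factor $\sqrt{d+1}$ appearing in the statement is not essential but merely a loose constant coming from the particular route through Li's bound, consistent with the authors' own remark that these bounds are deliberately crude.
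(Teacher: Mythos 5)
Your proof is correct, but it takes a genuinely different and more elementary route than the paper. The paper invokes Li's perturbation bound for the orthogonal polar factor, which requires nonsingularity, and therefore splits the patch indices into a ``bad'' set $\mathcal{B}$ where $\|\Ws_i - \Theta\|_F \geq \beta$ (handled by the trivial bound for a pair of orthogonal matrices, with $|\mathcal{B}|$ controlled by the global error) and its complement, where Mirsky's inequality guarantees $\sigma_{\min}(\Ws_i) > 1-\beta$ so that Li's bound applies with constant $2/(2-\beta)$; optimizing $\beta = 1/\sqrt{2}$ produces the constant $2\sqrt{d+1}$. You instead observe that $\Os_i$ is \emph{defined} as the Frobenius-nearest element of $\mathbb{O}(d)$ to $\Ws_i$ (a fact the paper itself records when introducing the rounding step), so that $\Theta$ is a feasible competitor and
\begin{equation*}
\|\Os_i - \Theta\|_F \leq \|\Os_i - \Ws_i\|_F + \|\Ws_i - \Theta\|_F \leq 2\,\|\Ws_i - \Theta\|_F,
\end{equation*}
which holds unconditionally — even for singular blocks — and sums over blocks to give $\|\Os - \Theta O_0\|_F \leq 2\,\|\Ws - \Theta O_0\|_F$, a dimension-free constant that is strictly sharper than the stated $2\sqrt{d+1}$. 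Your diagnosis is also accurate: the $\sqrt{d+1}$ factor in the lemma is an artifact of the route through Li's theorem (specifically of the crude $2d$ bound on the bad set), not an intrinsic feature, and your best-approximation argument sidesteps the rank-deficiency issue that forced the paper's case split in the first place. The only part of your proposal that should be dropped is the secondary suggestion of applying Li's bound directly to the pair $(\Ws_i, \Theta)$: as you yourself note, the hypothesis there is nonsingularity of \emph{both} matrices, and $\sigma_{\min}(\Theta)=1$ alone does not license the application when $\Ws_i$ is singular — which is exactly the gap the paper's $\mathcal{B}$/$\mathcal{B}^c$ decomposition exists to repair. Since your primary argument never touches singular values, this does not affect the validity of your proof.
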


The proofs of Lemma  \ref{lemma:unround bound} and \ref{lemma:round bound} are provided in Appendices \ref{proof:lemma:unround bound} and \ref{proof:lemma:round bound}.
At this point, we record a result from \cite{mirsky1960symmetric} which is repeatedly used in the proof of these lemmas and elsewhere.
\begin{lemma}[Mirsky, \cite{mirsky1960symmetric}]
\label{lemma:Mirsky}
Let $\vertiii{\cdot}$ be some unitarily invariant norm, and let $A,B \in \mathbb{R}^{n\times n}$. Then
\begin{equation*}
\vertiii{ \ \mathrm{diag}( \sigma_1(A)- \sigma_1(B) ,\cdots, \sigma_n(A)- \sigma_n(B)) \ } \leq \vertiii{A - B}.
\end{equation*}
In particular, the above result holds for the Frobenius and spectral norms.
\end{lemma}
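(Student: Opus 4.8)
The plan is to prove the result by reducing a general unitarily invariant norm to a weak majorization of singular values, a strategy that makes the two named special cases (Frobenius and spectral) fall out immediately. Throughout I order singular values decreasingly, $\sigma_1(\cdot) \geq \cdots \geq \sigma_n(\cdot)$. The first observation is that the singular values of the diagonal matrix on the left-hand side are precisely the numbers $|\sigma_i(A) - \sigma_i(B)|$, so by von Neumann's theorem every unitarily invariant norm is a symmetric gauge function $\Phi$ evaluated on this vector of absolute values; likewise $\vertiii{A-B} = \Phi(\sigma_1(A-B), \ldots, \sigma_n(A-B))$. Since symmetric gauge functions are monotone under weak majorization, it suffices to prove
\begin{equation*}
\big( |\sigma_i(A) - \sigma_i(B)| \big)_{i=1}^n \prec_w \big( \sigma_i(A-B) \big)_{i=1}^n,
\end{equation*}
that is, that the decreasing rearrangement of the left-hand vector is dominated, partial sum by partial sum, by the right-hand vector.

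The heart of the argument is this weak majorization, which I would obtain through the Hermitian dilation trick. Form the Jordan--Wielandt matrices
\begin{equation*}
\widehat{A} = \begin{bmatrix} 0 & A \\ A^T & 0 \end{bmatrix}, \qquad \widehat{B} = \begin{bmatrix} 0 & B \\ B^T & 0 \end{bmatrix},
\end{equation*}
which are symmetric of size $2n$ and have eigenvalues $\pm \sigma_i(A)$ and $\pm \sigma_i(B)$ respectively; moreover $\widehat{A} - \widehat{B} = \widehat{A-B}$ has eigenvalues $\pm \sigma_i(A-B)$. Lidskii's theorem for Hermitian matrices states that the vector of eigenvalue differences $\big(\lambda_j(\widehat A) - \lambda_j(\widehat B)\big)_j$ is majorized by the eigenvalue vector of $\widehat A - \widehat B$. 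Reading off the top $n$ coordinates, which are exactly the differences $\sigma_i(A) - \sigma_i(B)$, and exploiting the $\pm$ symmetry of all three spectra, this majorization descends to the desired weak majorization of the absolute singular-value differences by the singular values of $A-B$.

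With the weak majorization in hand the conclusion is immediate: applying the symmetric gauge function $\Phi$ to both sides and using its monotonicity yields $\vertiii{\mathrm{diag}(\sigma_1(A) - \sigma_1(B), \ldots, \sigma_n(A) - \sigma_n(B))} \leq \vertiii{A - B}$. The Frobenius norm corresponds to the Euclidean gauge $\Phi(x) = \|x\|$ and the spectral norm to $\Phi(x) = \max_i |x_i|$, so both named cases follow at once. Alternatively, one may bypass the gauge-function language by invoking Ky Fan's dominance theorem, which reduces the claim for all unitarily invariant norms to the family of Ky Fan $k$-norms $\sum_{i=1}^k \sigma_i$, and these are precisely the partial sums governed by the weak majorization.

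The main obstacle is the weak majorization step itself: Lidskii's theorem is nontrivial, so if one wishes to be self-contained the real work lies in proving it, for instance via the Ky Fan variational principle for the sum of the $k$ largest eigenvalues of a Hermitian matrix. Once that is granted, the dilation bookkeeping that converts eigenvalue statements into singular-value statements, and the monotonicity of $\Phi$ under weak majorization, are routine.
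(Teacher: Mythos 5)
Your proof is correct. Note, however, that the paper does not prove this lemma at all: it is quoted verbatim from Mirsky's 1960 paper and used as a black box in the proofs of Lemmas \ref{lemma:unround bound} and \ref{lemma:round bound} and of Proposition \ref{prop:trace bound}, so there is no in-paper argument to compare against. What you have written is the standard modern proof (essentially the one in Bhatia's \emph{Matrix Analysis}): the Jordan--Wielandt dilation converts singular values into the symmetric spectra $\pm\sigma_i$, Lidskii's majorization $\lambda^{\downarrow}(X)-\lambda^{\downarrow}(Y)\prec\lambda(X-Y)$ applied to the dilations yields the weak majorization of $\bigl(|\sigma_i(A)-\sigma_i(B)|\bigr)$ by $\bigl(\sigma_i(A-B)\bigr)$, and Ky Fan dominance (equivalently, monotonicity of symmetric gauge functions under weak majorization) upgrades this to all unitarily invariant norms. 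Each step checks out, including the bookkeeping you only sketch: the top $k\leq n$ entries of the symmetric difference vector $(d,-d)$ do sum to the $k$ largest values $|d_i|$, so the full majorization from Lidskii genuinely descends to the claimed weak majorization. Your closing remark is also fair: the only nontrivial input is Lidskii's theorem, and if self-containedness mattered one would prove it via the Ky Fan variational principle; for the paper's purposes (which uses only the Frobenius and spectral cases) the citation-plus-standard-proof route you give is entirely adequate.
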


By combining Lemma  \ref{lemma:unround bound} and \ref{lemma:round bound}, we have the following bound for \eqref{delta_def}:
\begin{equation}
\label{delta_bound}
\delta \leq  8 \sqrt{\frac{d+1 }{M} } \ \|{\Ws}^T \Ws - G_0\|_F.
\end{equation}

We now bound the quantity on the right in \eqref{delta_bound} for \texttt{GRET-SPEC} and \texttt{GRET-SDP}.

\subsection{Bound for \texttt{GRET-SPEC}} For the spectral relaxation, this can be done using the Davis-Kahan theorem \cite{bhatia1997matrix}. Note that from \eqref{O_spec}, we can write
\begin{equation}
\label{proj_diff}
\frac{1}{M} ({\Ws}^T\Ws - G_0) =\sum_{i=1}^{d} \! \!  \!r_i r_i^T-   \sum_{j=1}^d  s_j s_j^T .
\end{equation}
Following \cite[Ch. 7]{bhatia1997matrix}, let $A$ be some symmetric matrix and $S$ be some subset of the real line. Denote $P_A(S)$ to be the orthogonal projection onto the subspace spanned by the eigenvectors of $A$ whose eigenvalues are in $S$. A particular implication of the Davis-Kahan theorem is that
\begin{equation}
\label{DK}
\|  \ P_{A}(S_1) - P_{B}(S_2) \ \|_{\mathrm{sp}} \leq \frac{\pi}{2 \rho(S^c_1,S_2)} \| A - B \|_{\mathrm{sp}},
\end{equation}
where $S_1^c$ is the complement of $S_1$, and $ \rho(S_1,S_2)=\min \{|u-v| : u \in S_1, v \in S_2\}$.

In order to apply \eqref{DK} to \eqref{proj_diff}, set $A=C, B=C_0, S_1 = [\mu_1(C),  \ \mu_d(C)],$ and $S_2=\{0\}$.
If $\mathrm{rank}(C_0)=(M-1)d$, then $P_{B}(S_2) = \sum_{j=1}^d  s_j s_j^T$. Applying \eqref{DK}, we get
\begin{equation}
\label{WW_G0}
\|{\Ws}^T\Ws - G_0 \|_{\mathrm{sp}}  \leq \frac{M \pi}{2\mu_{d+1}(C)} \| C -C_0\|_F.
\end{equation}
Now, it is not difficult to verify that for the noise model \eqref{noisy_measurements},
\begin{equation}
\label{C_diff}
\| C -C_0\|_F \leq 2\sqrt{N \lvert E (\G)\rvert} \left[ \Big( 4R\frac{ \sqrt{N \lvert E(\G) \rvert}}{\lambda_2(L) }  + 1\Big ) \varepsilon + \Big(2 \frac{ \sqrt{N \lvert E(\G) \rvert} }{\lambda_2(L)} + 1 \Big)\varepsilon^2 \right].
\end{equation}
Combining Proposition \ref{prop:basic bound} with \eqref{delta_bound},\eqref{WW_G0}, and \eqref{C_diff}, we arrive at Theorem \ref{thm:stability1}.

\subsection{Bound for \texttt{GRET-SDP}} To analyze the bound for \texttt{GRET-SDP}, we require further notations. Recall \eqref{si_def}, and let $S$ be the space spanned by $\{s_1,\ldots,s_d\} \subset \mathbb{R}^{Md}$, and let $\bar S$ be the orthogonal
complement of $S$ in $\mathbb{R}^{Md}$. In the sequel, we will be required to use matrix spaces arising from tensor products of vector spaces. In particular, given two subspaces $U$ and $V$ of $\mathbb{R}^{Md}$, denote by $U \otimes V$ the space spanned by the rank-one matrices $\{uv^T  :  u \in U, v \in V\}$.
In particular, note that $G_0$ is in $S \otimes S$.

Let $A \in \mathbb{R}^{Md \times Md}$ be some arbitrary matrix. We can decompose it into
\begin{equation}
\label{PQT}
A = P + Q + T
\end{equation}
where
\begin{equation*}
P \in S\otimes S, \ Q \in (S\otimes \overline{S}) \cup (\overline{S}\otimes S), \ \text{and} \ T \in \overline{S}\otimes \overline{S}.
\end{equation*}
We  record a result about this decomposition from Wang and Singer \cite{LUDLanhui}.
\begin{lemma} [\cite{LUDLanhui}, pg. 7]
\label{lemma:Lanhui}
Suppose $G_0 + \Delta \succeq 0$ and $\Delta_{ii} = 0 \ (1 \leq i \leq M)$. Let $\Delta=P+Q+T$ as in \eqref{PQT}. Then
\begin{equation*}
T \succeq  0, \quad  \text{and} \quad P_{ij} = - \frac{1}{M}\sum_{l=1}^M T_{ll} \quad (1 \leq i,j \leq M).
\end{equation*}
\end{lemma}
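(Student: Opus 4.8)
The plan is to reduce everything to the single clean observation that, since $s_1,\dots,s_d$ are orthonormal, the decomposition \eqref{decomp} says exactly $G_0 = M\,\Pi$, where $\Pi \defn \sum_{i=1}^d s_i s_i^T$ is the orthogonal projection onto $S$. Equivalently $\Pi = \tfrac1M (ee^T)\otimes I_d$, so every $d\times d$ block of $\Pi$ equals $\tfrac1M I_d$, and $I-\Pi$ is the projection onto $\bar S$. With this in hand the three pieces of \eqref{PQT} are simply $P = \Pi\Delta\Pi$, $Q = \Pi\Delta(I-\Pi) + (I-\Pi)\Delta\Pi$, and $T = (I-\Pi)\Delta(I-\Pi)$, and both claims become short consequences of the block structure of $\Pi$.

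For the semidefiniteness of $T$, I would argue by restricting the quadratic form to $\bar S$. Take any $v\in\bar S$. Since $G_0 v = M\Pi v = 0$, the hypothesis $G_0+\Delta\succeq 0$ gives $0 \le v^T(G_0+\Delta)v = v^T\Delta v$. On the other hand, for $v\in\bar S$ the $P$- and $Q$-terms each contain a factor $\Pi$ acting on $v$ on one side, and therefore vanish, so that $v^T\Delta v = v^T T v$. Since $T=(I-\Pi)\Delta(I-\Pi)$ also annihilates $S$, we have $w^T T w = \big((I-\Pi)w\big)^T \Delta \big((I-\Pi)w\big)\ge 0$ for every $w\in\mathbb{R}^{Md}$, which is exactly $T\succeq 0$.

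For the block identity I would compute the blocks directly. Because every block of $\Pi$ is $\tfrac1M I_d$, the $(i,j)$ block of $P=\Pi\Delta\Pi$ equals $\tfrac{1}{M^2}\sum_{a,b}\Delta_{ab}$, which is \emph{independent} of $(i,j)$; call this common value $\tilde P$. The analogous computation for $Q$ gives $Q_{ll} = \tfrac1M\big(r_l + c_l - \tfrac{2}{M}\Sigma\big)$, where $r_l = \sum_b \Delta_{lb}$, $c_l = \sum_a \Delta_{al}$, and $\Sigma = \sum_{a,b}\Delta_{ab}$. Summing over $l$ and using $\sum_l r_l = \sum_l c_l = \Sigma$ shows $\sum_l Q_{ll} = 0$. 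Finally I invoke the hypothesis $\Delta_{ii}=0$: summing the diagonal blocks of $\Delta = P+Q+T$ yields $0 = \sum_{l=1}^M \Delta_{ll} = M\tilde P + 0 + \sum_{l=1}^M T_{ll}$, hence $P_{ij} = \tilde P = -\tfrac1M \sum_{l=1}^M T_{ll}$, as required.

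The conceptual crux is the first identity $G_0 = M\Pi$, after which the semidefiniteness of $T$ is essentially forced. The only genuinely computational step — and the one I expect to require the most care — is the explicit block evaluation of $Q$ and the cancellation $\sum_l Q_{ll}=0$; everything else is bookkeeping that follows from each block of $\Pi$ being a scalar multiple of $I_d$.
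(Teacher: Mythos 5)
Your proof is correct. Note that the paper itself does not prove this lemma --- it is quoted from Wang and Singer \cite{LUDLanhui} --- so there is no in-paper argument to compare against; your write-up is a valid self-contained verification, and it follows essentially the same route as the cited source. The key identifications all check out: $s_1,\dots,s_d$ are orthonormal, so $G_0 = M\Pi$ with $\Pi = \frac{1}{M}(ee^T)\otimes I_d$, the pieces of \eqref{PQT} are indeed $P=\Pi\Delta\Pi$, $Q=\Pi\Delta(I-\Pi)+(I-\Pi)\Delta\Pi$, $T=(I-\Pi)\Delta(I-\Pi)$ (the four spaces $S\otimes S$, $S\otimes\bar S$, $\bar S\otimes S$, $\bar S\otimes\bar S$ are orthogonal, so the decomposition is unique and given by these projections), and your restriction argument for $T\succeq 0$ and the block bookkeeping for $P_{ij}$ are sound --- the block algebra is legitimate precisely because every block of $\Pi$ is the scalar multiple $\frac{1}{M}I_d$, as you observe, and you correctly use only $\sum_l \Delta_{ll}=0$ from the hypothesis. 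One small simplification you might note: the cancellation $\sum_{l}Q_{ll}=0$, which you obtain by explicit computation of $r_l$, $c_l$, $\Sigma$, follows structurally from $Q \in S\otimes\bar S + \bar S\otimes S$ alone, since any $w\in\bar S$ satisfies $\sum_l w_l = 0$ (its blocks sum to zero, by orthogonality to each $s_j$), whence the diagonal blocks of each term $s_j w^T$ or $w s_j^T$ sum to zero; this avoids the row/column-sum computation entirely.
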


It is not difficult to verify that $\Tr(C_0 G_0)=0$ and that $C_0 \succeq 0$. From \eqref{decomp}, we have
\begin{equation*}
0 =  \Tr(C_0G_0) = \sum_{i=1}^d s_i^T C_0 s_i \geq 0.
\end{equation*}
Since each term in the above sum is non-negative, $C_0 s_i = 0$ for $1 \leq i \leq d$. In other words, $S$ is contained in the null space of $C_0$.
Moreover, if  $\mathrm{rank}(C_0)=(M-1)d$, then $S$ is exactly the null space of $C_0$.
Based on this observation, we give a bound on the residual $T$.

\begin{proposition}[Bound on the residual]
\label{prop:trace formula}
Suppose that $\mathrm{rank}(C_0)=(M-1)d$. Decompose $\Delta = P + Q + T$ as in \eqref{PQT}. Then
\begin{equation}
\Tr(T) \leq 4 \mu_{d+1}^{-1}(C_0) |E(\Gamma)| \varepsilon^2.
\end{equation}
\end{proposition}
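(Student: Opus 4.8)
The plan is to reduce the bound on $\Tr(T)$ to a bound on the single scalar $\Tr(C_0 \Gs)$, where $\Gs$ solves $(\p_2)$ and $\Delta = \Gs - G_0$. First I would note that $\Gs = G_0 + \Delta \succeq 0$ and $\Delta_{ii} = \Gs_{ii} - (G_0)_{ii} = I_d - I_d = 0$, so the hypotheses of Lemma \ref{lemma:Lanhui} hold and $T \succeq 0$ with $T \in \overline{S}\otimes\overline{S}$. Since $\mathrm{rank}(C_0) = (M-1)d$ and $S \subseteq \ker C_0$ with $\dim S = d = Md - (M-1)d$, the null space of $C_0$ is exactly $S$; hence $w^T C_0 w \ge \mu_{d+1}(C_0)\lVert w\rVert^2$ for all $w \in \overline{S}$. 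Spectrally decomposing $T = \sum_j \sigma_j w_j w_j^T$ with $\sigma_j \ge 0$ and $w_j \in \overline{S}$ then gives $\Tr(C_0 T) \ge \mu_{d+1}(C_0)\Tr(T)$. Moreover, decomposing $\Gs = G_0 + P + Q + T$, the contributions $\Tr(C_0 G_0)$, $\Tr(C_0 P)$, $\Tr(C_0 Q)$ all vanish — the first by \eqref{zeroCost} (indeed $C_0 G_0 = 0$), the other two because $P \in S\otimes S$ and $Q \in (S\otimes\overline S)\cup(\overline S\otimes S)$ each carry a factor in $S = \ker C_0$. Thus $\Tr(C_0\Gs) = \Tr(C_0 T)$, and it remains to prove the $\mu$-free estimate $\Tr(C_0\Gs) \le 4|E(\G)|\varepsilon^2$.

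This last estimate is the heart of the argument, and I expect it to be the main obstacle, precisely because it must be $O(\varepsilon^2)$ with no dependence on $\mu_{d+1}(C_0)$ — a naive Cauchy--Schwarz bound $\lvert\Tr(C_0\Delta)\rvert \le \lVert C - C_0\rVert_F\lVert\Delta\rVert_F$ only yields $O(\varepsilon)$. The idea is to read $\Tr(C_0\Gs)$ as a least-squares registration cost in higher dimension. Factor $\Gs = V^T V$ with block columns $V = [V_1 \cdots V_M]$; the constraint $\Gs_{ii} = V_i^T V_i = I_d$ says each $V_i$ has orthonormal columns. The identity \eqref{penalty3} was derived without using $\mathrm{rank}(O) = d$, so it applies to such rectangular $V$ and yields
\[
\Tr(C_0\Gs) = \min_Z \sum_{(k,i)\in E(\G)} \lVert Z e_{ki} - V_i \bar x_k\rVert^2, \quad \Tr(C\Gs) = \min_Z \sum_{(k,i)\in E(\G)} \lVert Z e_{ki} - V_i(\bar x_k + \epsilon_{k,i})\rVert^2.
\]
By optimality of $\Gs$ and feasibility of $G_0$, $\Tr(C\Gs) \le \Tr(CG_0) \le \sum_{(k,i)}\lVert\epsilon_{k,i}\rVert^2 \le |E(\G)|\varepsilon^2$. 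Letting $Z^\star$ attain the minimum defining $\Tr(C\Gs)$ and evaluating the clean objective at $Z^\star$, the triangle inequality on the stacked residual vectors gives
\[
\sqrt{\Tr(C_0\Gs)} \le \sqrt{\Tr(C\Gs)} + \Big(\sum_{(k,i)} \lVert V_i\epsilon_{k,i}\rVert^2\Big)^{1/2}.
\]
The decisive point is that each $V_i$ has orthonormal columns, so $\lVert V_i\epsilon_{k,i}\rVert = \lVert\epsilon_{k,i}\rVert \le \varepsilon$; both terms on the right are therefore at most $\sqrt{|E(\G)|}\,\varepsilon$, whence $\Tr(C_0\Gs) \le 4|E(\G)|\varepsilon^2$.

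Assembling the pieces, $\mu_{d+1}(C_0)\Tr(T) \le \Tr(C_0 T) = \Tr(C_0\Gs) \le 4|E(\G)|\varepsilon^2$, which is exactly the claimed bound. The one genuinely delicate ingredient is the norm preservation $\lVert V_i\epsilon_{k,i}\rVert = \lVert\epsilon_{k,i}\rVert$ coming from the orthonormality of the columns of $V_i$; this is what converts the perturbation into an $O(\varepsilon^2)$, $\mu$-independent quantity and supplies the constant $4 = (1+1)^2$. Everything else is bookkeeping with the orthogonal splitting $\mathbb{R}^{Md} = S \oplus \overline{S}$ and the positive semidefiniteness supplied by Lemma \ref{lemma:Lanhui}.
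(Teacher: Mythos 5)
Your proof is correct and follows essentially the same route as the paper's: the same reduction $\mu_{d+1}(C_0)\Tr(T) \leq \Tr(C_0 T) = \Tr(C_0 \Gs)$ via Lemma \ref{lemma:Lanhui} and the identification of $S$ as the null space of $C_0$ under the rank hypothesis, and the same bound $\Tr(C_0 \Gs) \leq 4|E(\Gamma)|\varepsilon^2$ obtained by factoring $\Gs = V^TV$ into blocks with orthonormal columns (your $V$ is exactly the paper's $\Oss$), comparing against the feasible point $G_0$ to get $\Tr(C\Gs) \leq |E(\Gamma)|\varepsilon^2$, and exploiting $\lVert V_i \epsilon_{k,i}\rVert = \lVert \epsilon_{k,i}\rVert$. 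The only cosmetic difference is that you combine the two error terms by a triangle inequality on stacked residuals, $\bigl(\sqrt{|E|}\varepsilon + \sqrt{|E|}\varepsilon\bigr)^2$, where the paper uses $\lVert x+y\rVert^2 \leq 2\lVert x\rVert^2 + 2\lVert y\rVert^2$; both yield the constant $4$.
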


\begin{proof}  The main idea here is to compare the objective in $(\p_0)$ with the trace of $T$. To do so, we introduce the following notations. Let $\lambda_1,  \cdots, \lambda_{Md}$ be the full set of eigenvalues of $\Gs$ sorted in non-increasing order, and $q_1,\ldots,q_{Md}$
be the corresponding eigenvectors. Define
\begin{equation*}
\Oss \defn \big[ \sqrt{\lambda_1} q_1 \  \cdots \ \sqrt{\lambda_{Md}} q_{Md} \big]^T \in \mathbb{R}^{Md \times Md},
\end{equation*}
and $\Oss_i$ to be the $i$-th $Md \times d$ block of $\Oss$, that is, $\Oss \defn [ \Oss_1 \ \cdots \  \Oss_M ]$.

By construction, $\Gs={\Oss}^T\Oss$. Moreover, by feasibility,
\begin{equation*}
 G_{ii}^{\star}={\Oss_i}^T \Oss_i = I_d \quad (1 \leq i \leq M).
\end{equation*}
Thus the $d$ columns of $\Oss_i$ form an orthonormal system in $\mathbb{R}^{Md}$. Now define
\begin{equation*}
\Zss \defn \Oss B L^{\dagger} \in \mathbb{R}^{Md \times (N+M)}.
\end{equation*}
 In particular, we will use the fact that $(\Zss, \Oss)$ are the minimizers of the unconstrained program
\begin{equation}
\label{Q0}
\min_{(Z,O)}  \sum_{(k,i)\in E(\Gamma)} \|Z e_{ki} - O_i x_{k,i}\|^2 \quad \text{s.t.} \quad Z \in \mathbb{R}^{Md \times (N+M)}, \ O \in \mathbb{R}^{Md \times Md}.
\end{equation}

Note that $\Tr(C_0 \Gs)=\Tr(C_0 (G_0+\Delta))= \Tr(C_0 T)$. Now, by Lemma \eqref{lemma:Lanhui}, $T \succeq 0$. Therefore, writing
\begin{equation*}
 T = \sum_i v_i v_i^T \quad (v_i \in \bar{S}),
\end{equation*}
we get
\begin{equation*}
\Tr(C_0 T) = \sum_i v_i^T C_0 v_i \geq  \mu_{d+1}(C_0) \sum_i v_i^T  v_i =  \mu_{d+1}(C_0) \Tr(T).
\end{equation*}
Therefore,
\begin{equation}
\label{tr_comp}
 \Tr(T) \leq \mu_{d+1}^{-1}(C_0) \ \Tr(C_0 \Gs).
\end{equation}
We are done if we can bound the term on the right. To do so, we note from \eqref{Q0} that
\begin{equation*}
\Tr(C_0 \Gs)=\Tr(C_0 {\Oss}^T \Oss) = \min_{Z \in \mathbb{R}^{Md \times N+M}} \ \sum_{(k,i)\in E(\Gamma)} \| Z e_{ki} - \Oss_i \bar x_k \|^2.
\end{equation*}
Therefore,
\begin{equation*}
\Tr(C_0 \Gs) \leq \sum_{(k,i)\in E(\Gamma)} \|\Zss e_{ki} - \Oss_i \bar x_k \|^2.
\end{equation*}
To bring in the error term, we write
\begin{equation*}
 \Zss e_{ki} - \Oss_i \bar x_k = \Zss e_{ki} - \Oss_i x_{k,i} + \Oss_i \epsilon_{k,i},
\end{equation*}
and use $\lVert x + y \rVert^2 \leq 2(\lVert x \rVert^2 + \lVert y \rVert^2)$ to get
\begin{equation}
\label{bb1}
  \Tr(C_0 \Gs) \leq 2 \sum_{(k,i)\in E}  \| \Zss e_{ki} - \Oss_ix_{k,i}\|^2 +  2|E(\Gamma)|\varepsilon^2.
\end{equation}
Finally, using the optimality of $(\Zss, \Oss)$ for \eqref{Q0}, we have
\begin{equation}
\label{bb2}
\sum_{(k,i)\in E(\Gamma)}  \| \Zss e_{ki} - \Oss_ix_{k,i}\|^2 \leq  \sum_{(k,i)\in E(\Gamma)} \| Z_0 e_{ki} - I_d x_{k,i}\|^2 \leq |E(\Gamma)|\varepsilon^2.
\end{equation}
The desired result follows from \eqref{tr_comp}, \eqref{bb1}, and \eqref{bb2}.
\end{proof}

Finally, we note that $\Tr(T)$ can be used to bound the difference between the Gram matrices.
\begin{proposition}[Trace bound]
\label{prop:trace bound}
$\|{W^\star}^T \Ws -G_0\|_F \leq 2 \sqrt{2Md \Tr(T)} $.
\end{proposition}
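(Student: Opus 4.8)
The plan is to recognize that ${\Ws}^T\Ws$ is exactly the best rank-$d$ Frobenius approximation of $\Gs$. From \eqref{rank_enforce} we have ${\Ws}^T\Ws = \sum_{i=1}^d \lambda_i q_i q_i^T$, the truncation of the eigendecomposition $\Gs = \sum_{i=1}^{Md}\lambda_i q_i q_i^T$ retaining the top $d$ eigenpairs. Writing $\Gs = G_0 + \Delta$ and applying the triangle inequality gives
\begin{equation*}
\|{\Ws}^T\Ws - G_0\|_F \le \|{\Ws}^T\Ws - \Gs\|_F + \|\Delta\|_F.
\end{equation*}
Since $G_0 = M\sum_{i=1}^d s_i s_i^T$ has rank $d$ by \eqref{decomp}, the Eckart--Young theorem yields $\|{\Ws}^T\Ws - \Gs\|_F \le \|G_0 - \Gs\|_F = \|\Delta\|_F$, so that $\|{\Ws}^T\Ws - G_0\|_F \le 2\|\Delta\|_F$. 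The task thus reduces to bounding $\|\Delta\|_F$ by $\sqrt{2Md\,\Tr(T)}$.

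To bound $\|\Delta\|_F$, I would exploit the orthogonal decomposition $\Delta = P + Q + T$ of \eqref{PQT}. Because $S\otimes S$, $(S\otimes\overline{S})\cup(\overline{S}\otimes S)$, and $\overline{S}\otimes\overline{S}$ are mutually orthogonal in the Frobenius inner product, Pythagoras gives $\|\Delta\|_F^2 = \|P\|_F^2 + \|Q\|_F^2 + \|T\|_F^2$, so it suffices to control each summand by $\Tr(T)$. Lemma \ref{lemma:Lanhui} supplies two of them directly: since $T\succeq 0$, we get $\|T\|_F\le\Tr(T)$; and since every block of $P$ equals $-\frac1M\sum_{l}T_{ll}$, setting $\bar T=\frac1M\sum_l T_{ll}\succeq 0$ gives $P=-(ee^T)\otimes\bar T$, whence $\|P\|_F = M\|\bar T\|_F \le M\,\Tr(\bar T)=\Tr(T)$. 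The same structure provides the homogenising bound $\Tr(T)=M\,\Tr(\bar T)\le Md$, because the diagonal block $G_{SS}=M(I_d-\bar T)$ of $\Gs$ is positive semidefinite and hence forces $\bar T\preceq I_d$.

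The cross term $\|Q\|_F$ is the crux, and it is where the semidefiniteness of $\Gs$ must be used globally rather than block by block. Partitioning $\mathbb{R}^{Md}$ as $S\oplus\overline{S}$, the matrix $\Gs$ has diagonal blocks $G_{SS}=M(I_d-\bar T)$ and $G_{\overline{S}\,\overline{S}}=T$ and off-diagonal block $G_{S\overline{S}}$, which coincides with the $S\otimes\overline{S}$ component of $Q$; by symmetry $\|Q\|_F^2 = 2\|G_{S\overline{S}}\|_F^2$. Invoking the standard inequality for the off-diagonal block of a positive semidefinite matrix, $\|G_{S\overline{S}}\|_F^2 \le \|G_{SS}\|_{\mathrm{sp}}\,\Tr(G_{\overline{S}\,\overline{S}}) \le M\,\Tr(T)$, where $\|G_{SS}\|_{\mathrm{sp}}\le M$ follows from $0\preceq\bar T\preceq I_d$, we obtain $\|Q\|_F^2\le 2M\,\Tr(T)$. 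Combining the three estimates gives $\|\Delta\|_F^2 \le 2\,\Tr(T)^2 + 2M\,\Tr(T)$, and homogenising with $\Tr(T)\le Md$ collapses this to a bound of the form $2Md\,\Tr(T)$ up to the (admittedly loose) constant; the Eckart--Young factor of $2$ then delivers the asserted $2\sqrt{2Md\,\Tr(T)}$. I expect the cross-term step to be the main obstacle: establishing $\|G_{S\overline{S}}\|_F^2\le M\,\Tr(T)$ requires the positive-semidefinite off-diagonal block inequality together with the spectral-norm control $\|G_{SS}\|_{\mathrm{sp}}\le M$, both of which rest on the structural identities of Lemma \ref{lemma:Lanhui}.
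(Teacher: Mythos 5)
Your proposal follows the paper's proof in all structural respects: the reduction $\|{\Ws}^T\Ws - G_0\|_F \le 2\|\Delta\|_F$ (the paper obtains the first term via Lemma \ref{lemma:Mirsky} applied to the discarded eigenvalues $\lambda_{d+1}(\Gs),\ldots,\lambda_{Md}(\Gs)$, which is exactly the quantity your Eckart--Young step bounds, since for the PSD matrix $\Gs$ eigenvalue truncation is a best rank-$d$ Frobenius approximation), then the orthogonal decomposition $\Delta = P + Q + T$ with $\|\Delta\|_F^2 = \|P\|_F^2 + \|Q\|_F^2 + \|T\|_F^2$, and the estimates $\|T\|_F \le \Tr(T)$ and $\|P\|_F = M\|\bar T\|_F \le \Tr(T)$, where $\bar T = \frac{1}{M}\sum_{l=1}^M T_{ll}$, all resting on Lemma \ref{lemma:Lanhui}. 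These steps are correct, as are your identities for the compression $G_{SS}$ of $\Gs$ to $S$, namely $G_{SS} = M(I_d - \bar T)$ and $0 \preceq \bar T \preceq I_d$.

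The divergence --- and the genuine gap --- is in the cross term. Your estimate $\|Q\|_F^2 \le 2\|G_{SS}\|_{\mathrm{sp}}\Tr(T) \le 2M\Tr(T)$ is valid, but it leads to $\|\Delta\|_F^2 \le 2\Tr(T)^2 + 2M\Tr(T)$, and homogenising with $\Tr(T) \le Md$ gives only $2M(d+1)\Tr(T)$; to reach $2Md\Tr(T)$ along this route you would need $\Tr(T) \le M(d-1)$, which is not available. So as written your argument proves the proposition with $d+1$ in place of $d$, and no rearrangement of your three bounds closes that factor. The paper closes it by an exact cancellation that your spectral-norm bound discards: apply the elementwise PSD inequality $\Gs(p,q)^2 \le \Gs(p,p)\,\Gs(q,q)$ to get $\|Q\|_F^2 \le 2\,\Tr(G_{SS})\Tr(T)$, and use $0 = \Tr(\Delta) = \Tr(P) + \Tr(T)$, which gives $\Tr(G_{SS}) = Md - \Tr(T)$; then $\|Q\|_F^2 \le 2Md\Tr(T) - 2\Tr(T)^2$, and the term $-2\Tr(T)^2$ exactly absorbs $\|P\|_F^2 + \|T\|_F^2 \le 2\Tr(T)^2$, yielding $\|\Delta\|_F^2 \le 2Md\Tr(T)$ on the nose. (For $A \succeq 0$ one has $\|A\|_{\mathrm{sp}} \le \Tr(A)$, so the trace bound on $G_{SS}$ is in itself weaker than yours, but it is precisely what makes the cancellation possible.) With that one substitution your argument proves Proposition \ref{prop:trace bound} as stated; your weaker constant would still suffice for the admittedly loose Theorem \ref{thm:stability2}, but it does not establish the inequality actually claimed.
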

\begin{proof}
We will heavily use decomposition \eqref{PQT} and its properties. Let $\Gs = G_0 + \Delta$. By triangle inequality,
\begin{eqnarray*}
\|{W^\star}^T \Ws - G_0\|_F &\leq&  \|\sum_{i=d+1}^{Md} \lambda_{i}(G^\star) \ u_i u_i^T\|_F + \|\Delta \|_F  \\ \nonumber
&=& \| \ \mathrm{diag}(\lambda_{d+1}(G^\star),\ldots,\lambda_{Md}(G^\star)) \ \|_F +\|\Delta \|_F .
\end{eqnarray*}
Moreover, since the bottom eigenvalues of $G_0$ are zero, it follows from Lemma \ref{lemma:Mirsky} that the norm of the diagonal matrix is bounded by $ \|\Delta\|_F$.
Therefore,
\begin{equation}
\label{bound_Delta}
 \|{W^\star}^T \Ws - G_0\|_F \leq 2 \|\Delta\|_F.
\end{equation}
Fix $\{s_{d+1},\ldots,s_{Md}\}$ to be some orthonormal basis of $\bar S$. For arbitrary $A \in \mathbb{R}^{Md}$, let
\begin{equation*}
 A(p,q) = s^T_p A s_q \qquad (1 \leq p,q \leq Md).
\end{equation*}
That is, $(A(p,q))$ are the coordinates of  $A$ in the basis $\{s_1,...,s_d\}\cup\ \{s_{d+1},\ldots,s_{Md}\}$.

Decompose $\Delta = P + Q + T$ as in \eqref{PQT}. Note that $P,Q,$ and $T$ are represented in the above basis as follows: $P$ is supported  on the upper $d \times d$ diagonal block,
$T$ is supported on the lower $(M-1)d\times (M-1)d$ diagonal block, and $Q$ on the off-diagonal blocks. The matrix $G_0$ is diagonal  in this representation.

We can bound $\|P\|_F$ using Lemma \ref{lemma:Lanhui},
\begin{equation}
\label{bound_P}
\|P\|_F^2=M^2\|P_{11}\|_F^2 = \|\sum_{l=1}^M T_{ll}\|_F^2 \leq \Big[\Tr \ \big(\sum_{l=1}^M T_{ll}\big) \Big]^2 = \Tr(T)^2,
\end{equation}
where we have used the properties $T\succeq 0$ and $T_{ll}\succeq 0 \ (1 \leq l \le M)$. In particular,
\begin{equation}
\label{bound_T}
\|T\|_F \leq \Tr(T).
\end{equation}
On the other hand, since $G_0+ \Delta\succeq 0$, we have $ (G_0+ \Delta)(p,q)^2 \leq (G_0+ \Delta)(p,p) (G_0+ \Delta)(q,q)$. Therefore,
\begin{equation*}
\|Q\|_F^2 = 2\sum_{p = 1}^d\sum_ {q=d+1}^{Md}  Q(p,q)^2  \leq 2\sum_{p = 1} ^d (G_0+ \Delta)(p,p) \sum_{q=d+1}^{Md} T(q,q).
\end{equation*}
Notice that $0 = \Tr(\Delta) = \Tr(T) + \Tr(P)$. Therefore,
 \begin{equation}
 \label{bound_Q}
 \|Q\|_F^2 \leq 2Md \Tr(T) - 2 \Tr(T)^2.
 \end{equation}
Combining  \eqref{bound_Delta}, \eqref{bound_P},  \eqref{bound_Q}, and \eqref{bound_T}, we get the desired bound.
\end{proof}

Putting together \eqref{delta_bound} with Propositions \eqref{prop:basic bound},\eqref{prop:trace formula}, and \eqref{prop:trace bound}, we arrive at Theorem
\eqref{thm:stability2}.

\section{Numerical Experiments}
\label{Sim}

We now present some numerical results on multipatch registration using \texttt{GRET-SPEC} and \texttt{GRET-SDP}. In particular, we study the exact recovery and stability properties of the algorithm.
We define the reconstruction error in terms of the root-mean-square deviation (RMSD) given by
\begin{equation}
\label{rmsd}
\mathrm{RMSD} = \min_{\Omega \in \mathbb{O}(d), t \in \mathbb{R}^d} \ \left[\frac{1}{N} \sum_{k=1}^N \ \lVert \Zs_k - \Omega \bar x_k - t \rVert^2 \right]^{1/2}.
\end{equation}
In other words, the RMSD is calculated after registering (aligning) the original and the reconstructed configurations. We use the SVD-based algorithm \cite{arun1987svd} for this purpose. \newline

\textbf{Experiment 1}. We first consider a few examples concerning the registration of three patches in $\mathbb{R}^2$, where we vary $\G$ by controlling the number of points in the intersection
of the patches. We work with the clean data in \eqref{clean_data} and demonstrate exact recovery (up to numerical precision) for different $\Gamma$.

In the left plot in Figure ~\ref{fig:3examples}, we consider a patch system with $N=10$ points.
The points that belong to two or more patches are marked red, while the rest are marked black. The patches taken in the order $P_1,P_2,P_3$ form a lateration in this case.
As predicted by Corollary \ref{rankstress} and Theorem \ref{thm:LaterationRecovery}, the rank of the patch-stress matrix $C_0$ for this system must be $2(3-1)=4$. This is indeed confirmed by our experiment.
We expect  \texttt{GRET-SPEC} and \texttt{GRET-SDP} to recover the exact configuration. Indeed, we get a very small RMSD of the order of 1e-7 in this case. As shown in the figure,
the reconstructed coordinates obtained using \texttt{GRET-SDP}  perfectly match the original ones after alignment.

We next consider the example shown in the center plot in Figure ~\ref{fig:3examples}. The patch system is not laterated in this case, but the rank of $C_0$ is $4$.
Again we obtain a very small RMSD of the order 1e-7 for this example. This example demonstrates that lateration is not necessary for exact recovery.

\begin{figure}[here]
\centering
\includegraphics[width=0.32\linewidth]{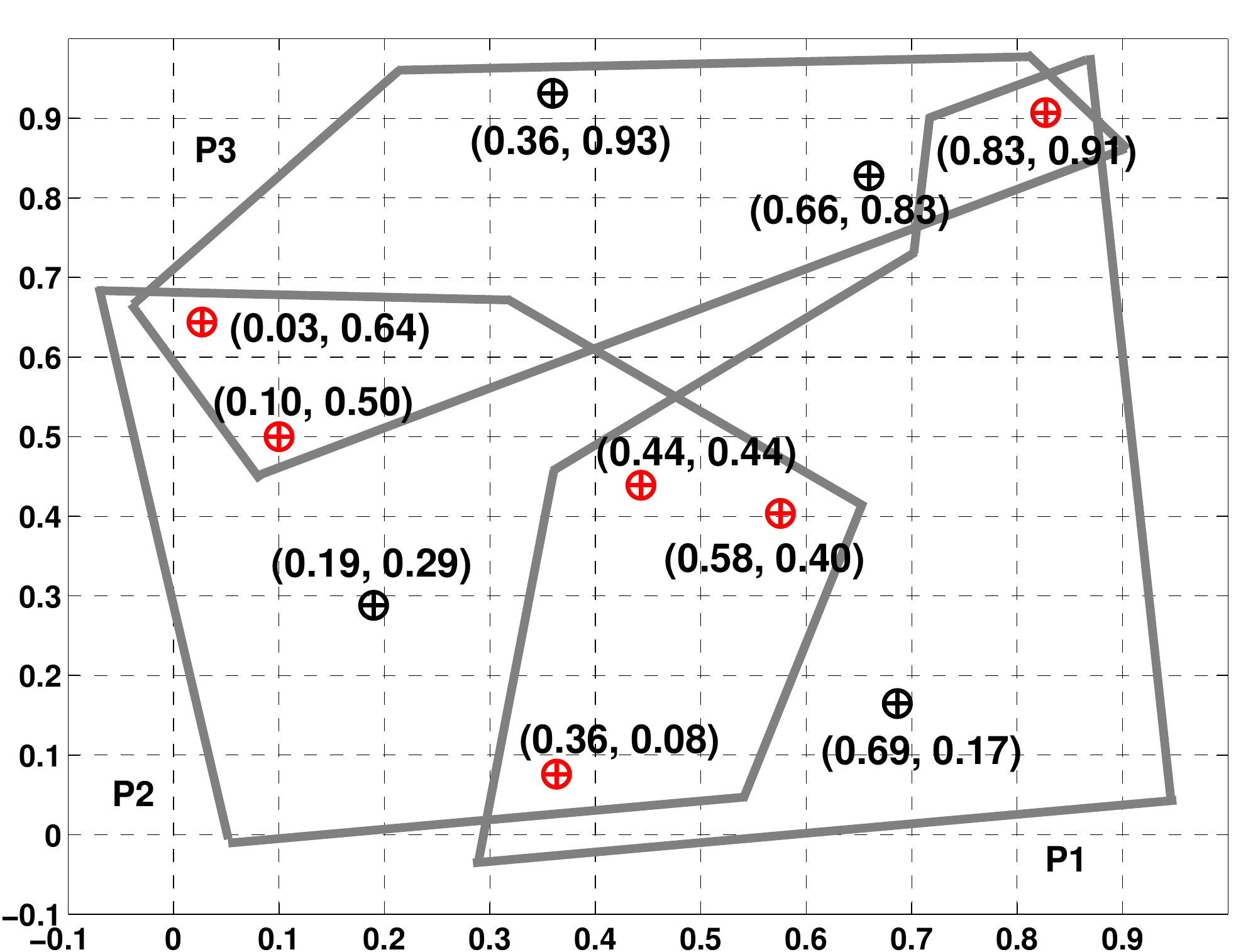}
\includegraphics[width=0.32\linewidth]{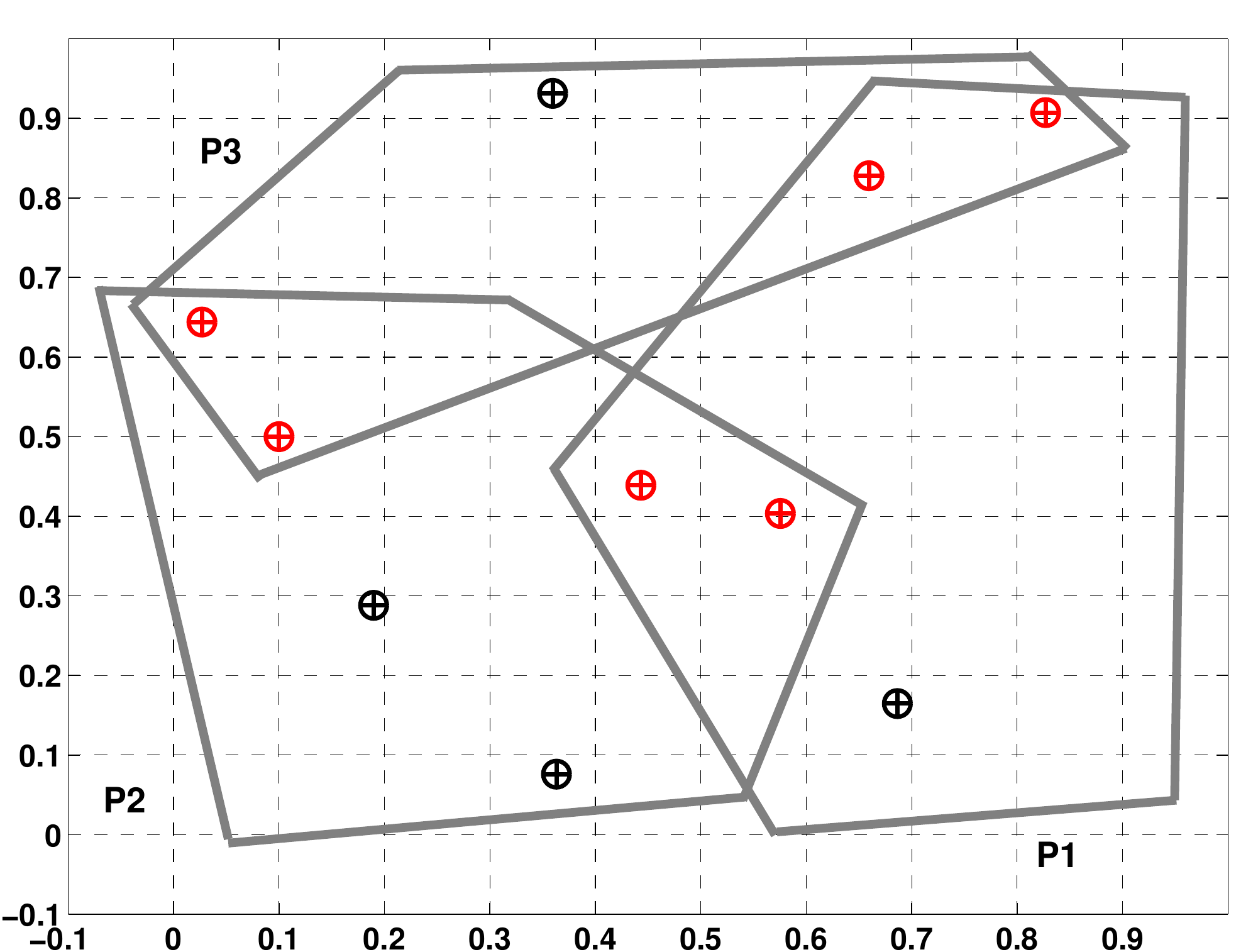}
\includegraphics[width=0.32\linewidth]{./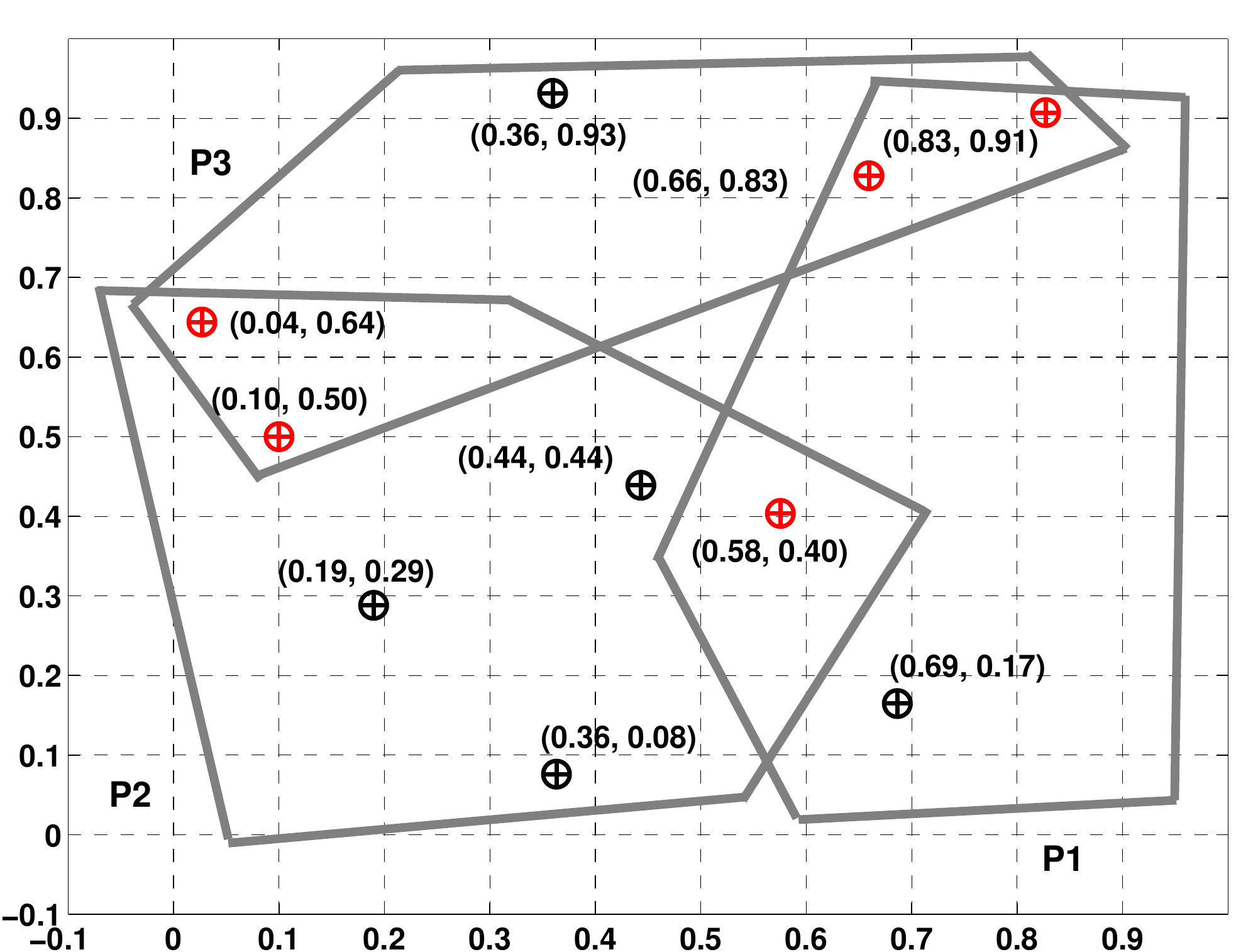}
\caption{Instances of a three-patch systems in $\mathbb{R}^2$. Left: Patch system is laterated. Center: Patch system is not laterated but for which $C_0$ has rank $4$.
Right: The body graph is universally rigid but rank$(C_0) = 3$. The original coordinates are marked with $\circ$, and the  coordinates reconstructed by \texttt{GRET-SDP}
with $+$.}
\label{fig:3examples}
\end{figure}

In the next example, we show that the condition $\rnk(C_0)=(M-1)d$ is not necessary for exact recovery using \texttt{GRET-SDP}. To do so, we use the fact that the
universal rigidity of the body graph
is both necessary and sufficient for exact recovery. Consider the example shown in the right plot in Figure ~\ref{fig:3examples}. This has barely enough points in the patch intersections to make
the  body graph universally rigid.
Experiments confirm that we have exact recovery in this case. However, it can be shown that $\rnk(C_0)<(M-1)d=4$. \\

\begin{figure}[here]
\centering
\includegraphics[width=0.4\linewidth]{./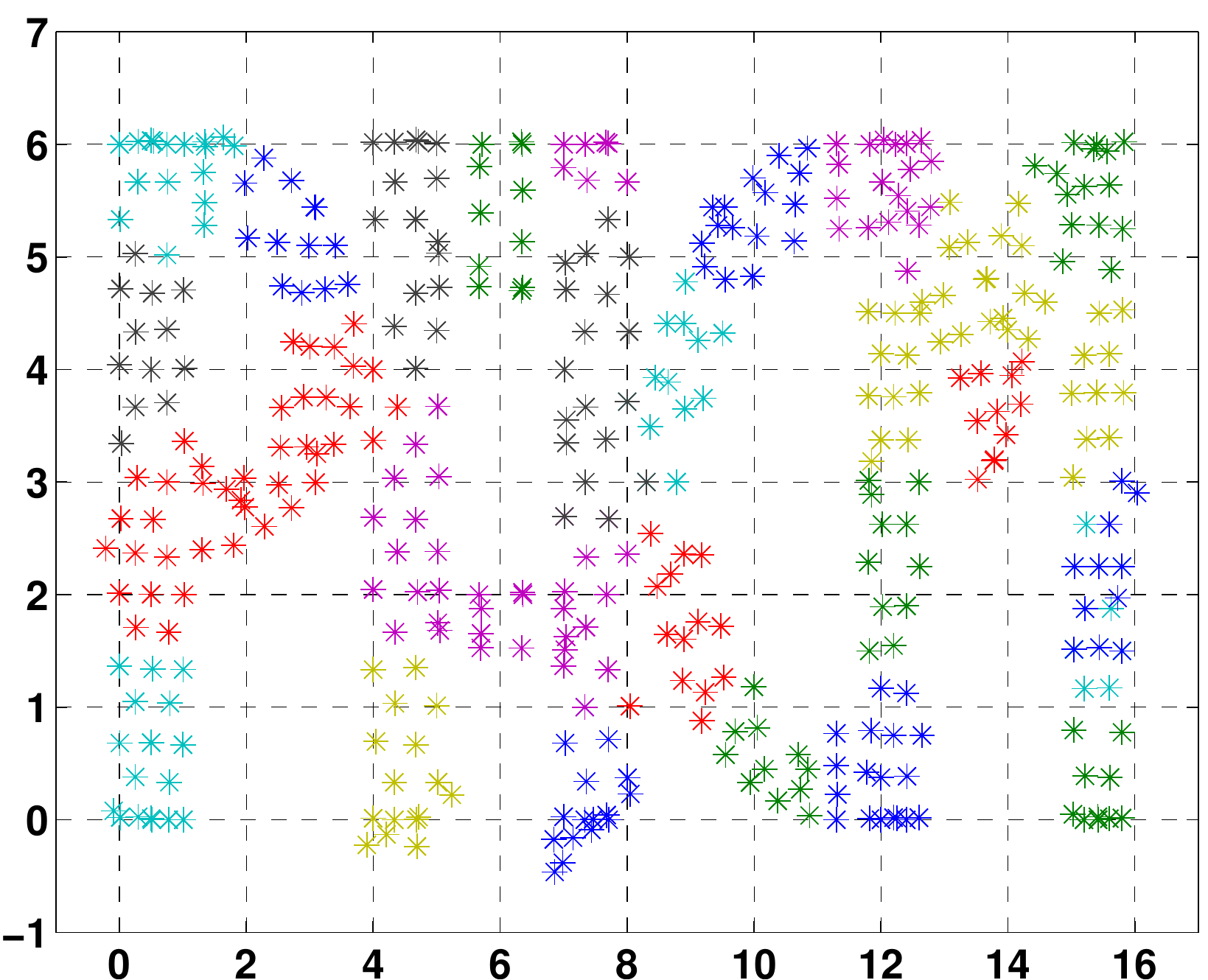}
\includegraphics[width=0.48\linewidth]{./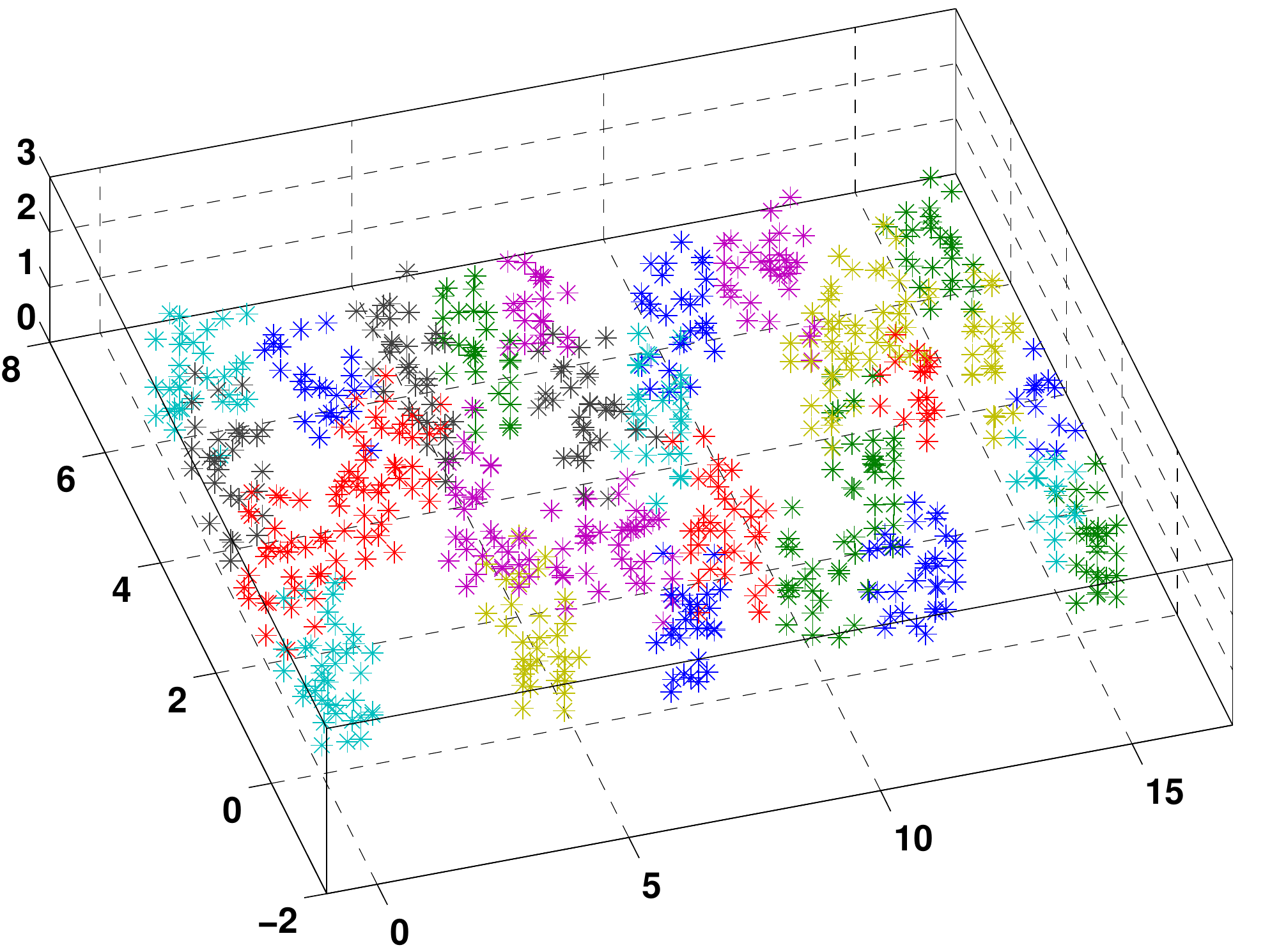}
\caption{Disjoint clusters for the PACM point cloud. Each cluster is marked with a different color. The clusters are augmented to form overlapping patches which are then registered using \texttt{GRET-SDP}.}
\label{fig:pacmcluster}
\end{figure}

\textbf{Experiment 2}.  We now consider the structured PACM data  in $\mathbb{R}^3$ shown in Figure \ref{fig:pacmcluster}. The are a total of $799$ points in this example that are obtained by sampling the 3-dimensional PACM logo \cite{cucuringu2013asap,fang2013disco}.
To begin with, we divide the point cloud into $M=30$ disjoint pieces (clusters) as shown in the figure.
We augment each cluster into a patch by adding points from neighboring clusters. We ensure that there are sufficient common points in the patch system so that $C_0$ has rank $(M-1)d=87$. We generate the measurements using the bounded noise model in \eqref{noisy_measurements}. In particular, we perturb the clean
coordinates using uniform noise over the hypercube $[-\varepsilon,\varepsilon]^d$.
For the noiseless setting, the RMSD's obtained using \texttt{GRET-SPEC} and \texttt{GRET-SDP} are  3.3e-11 and 1e-6.
The respective RMSD's when $\varepsilon = 0.5$ are $1.4743$ and $0.3823$. The results are shown in Figure \ref{fig:pacmrecon}.

\begin{figure}[here]
\centering
\includegraphics[width=0.48\linewidth]{./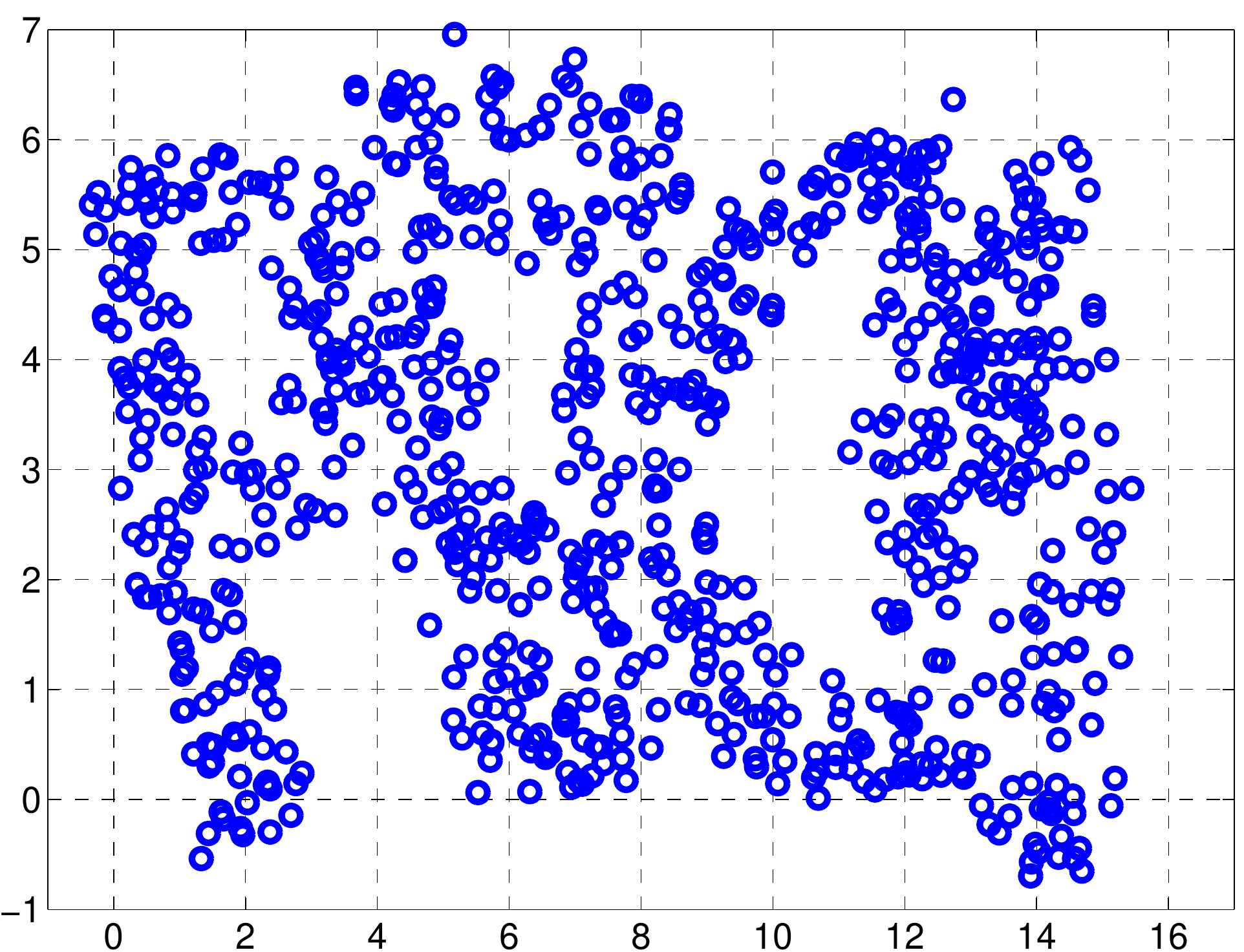}
\includegraphics[width=0.48\linewidth]{./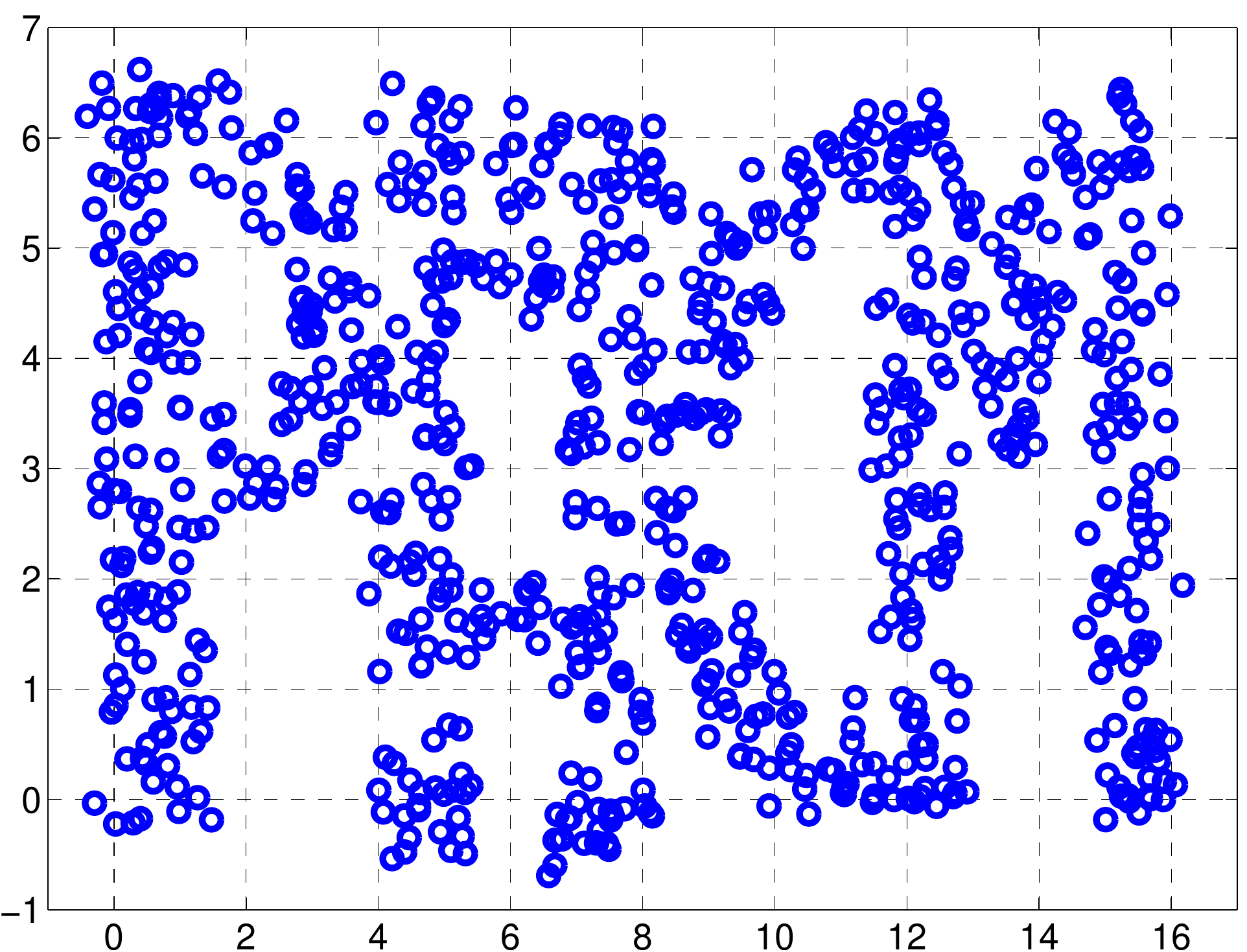}
\caption{Reconstruction of the PACM data from corrupted patch coordinates ($\varepsilon = 0.5$). \textbf{Left}:  \texttt{GRET-SPEC}, $\textrm{RMSD}$ =  1.4743. \textbf{Right}: \texttt{GRET-SDP}, $\textrm{RMSD}$ = 0.3823. The measurements were generated using the noise model in \eqref{noisy_measurements}.}
\label{fig:pacmrecon}
\end{figure}

\textbf{Experiment 3}.  In the final experiment, we demonstrate the stability of \texttt{GRET-SDP} and \texttt{GRET-SPEC} by plotting the RMSD against the noise level for the PACM data.
We use  the noise model in  \eqref{noisy_measurements} and vary $\varepsilon$ from $0$ to $2$ in steps of $0.1$. For a fixed noise level, we average the RMSD over $20$ noise realizations.
The results are reported in the bottom plot in Figure ~\ref{fig:stability1}. We see that the RMSD increases gracefully with the noise level. The result also shows that the semidefinite relaxation is more stable than spectral relaxation, particularly at large noise levels. Also shown in the figure are the RMSD obtained using \texttt{GRET-MANOPT}  with the solutions of \texttt{GRET-SPEC} and \texttt{GRET-SDP} as initialization.
In particular, we used the trust region method provided in the \texttt{Manopt} toolbox \cite{manopt2013} for solving the manifold optimization $(\p_0)$.
For either initialization, we notice  some improvement from the plots. It is clear that the manifold method relies heavily on the initialization, which is not surprising.

\begin{figure}[here]
\centering
\includegraphics[width=0.48\linewidth]{./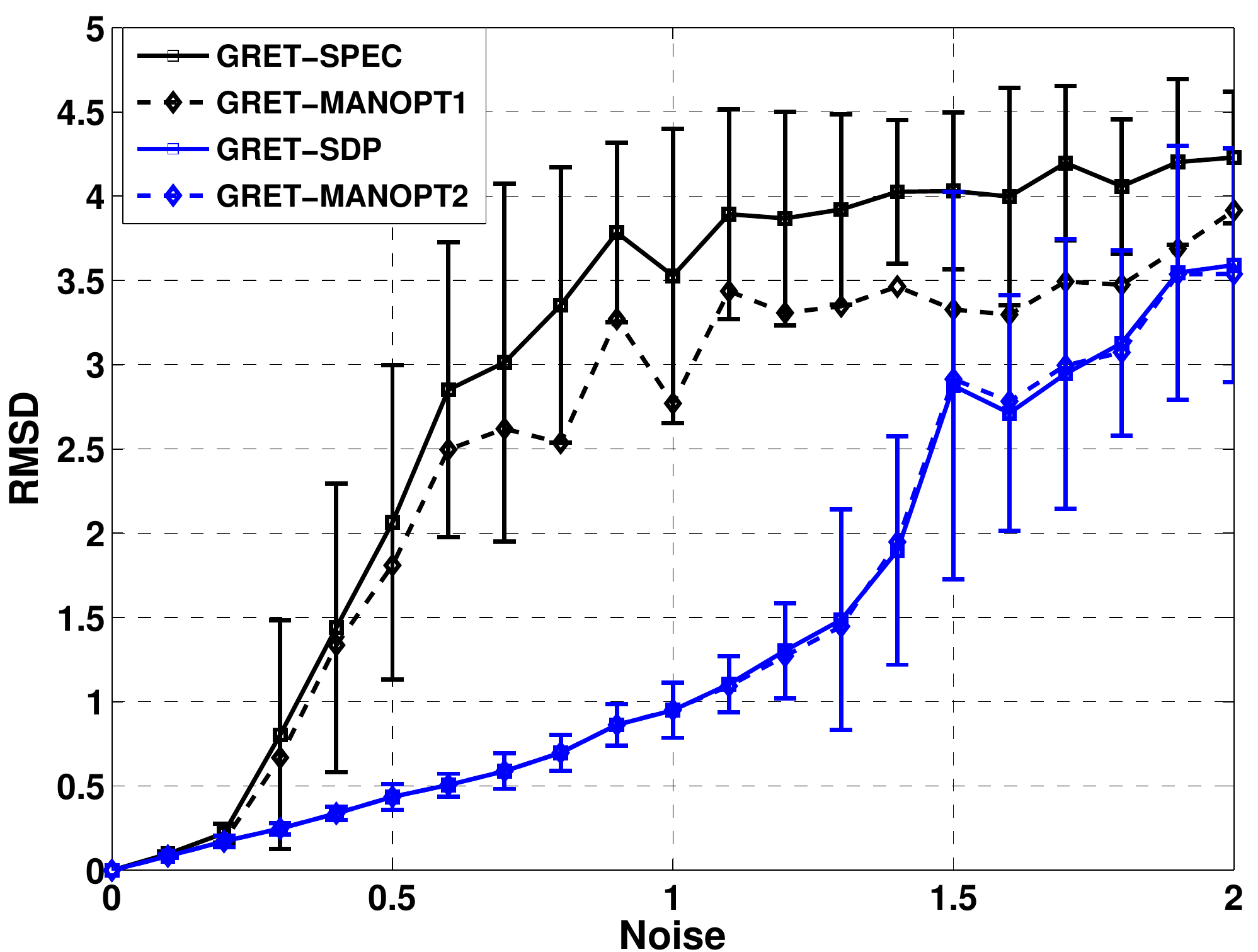}
\includegraphics[width=0.48\linewidth]{./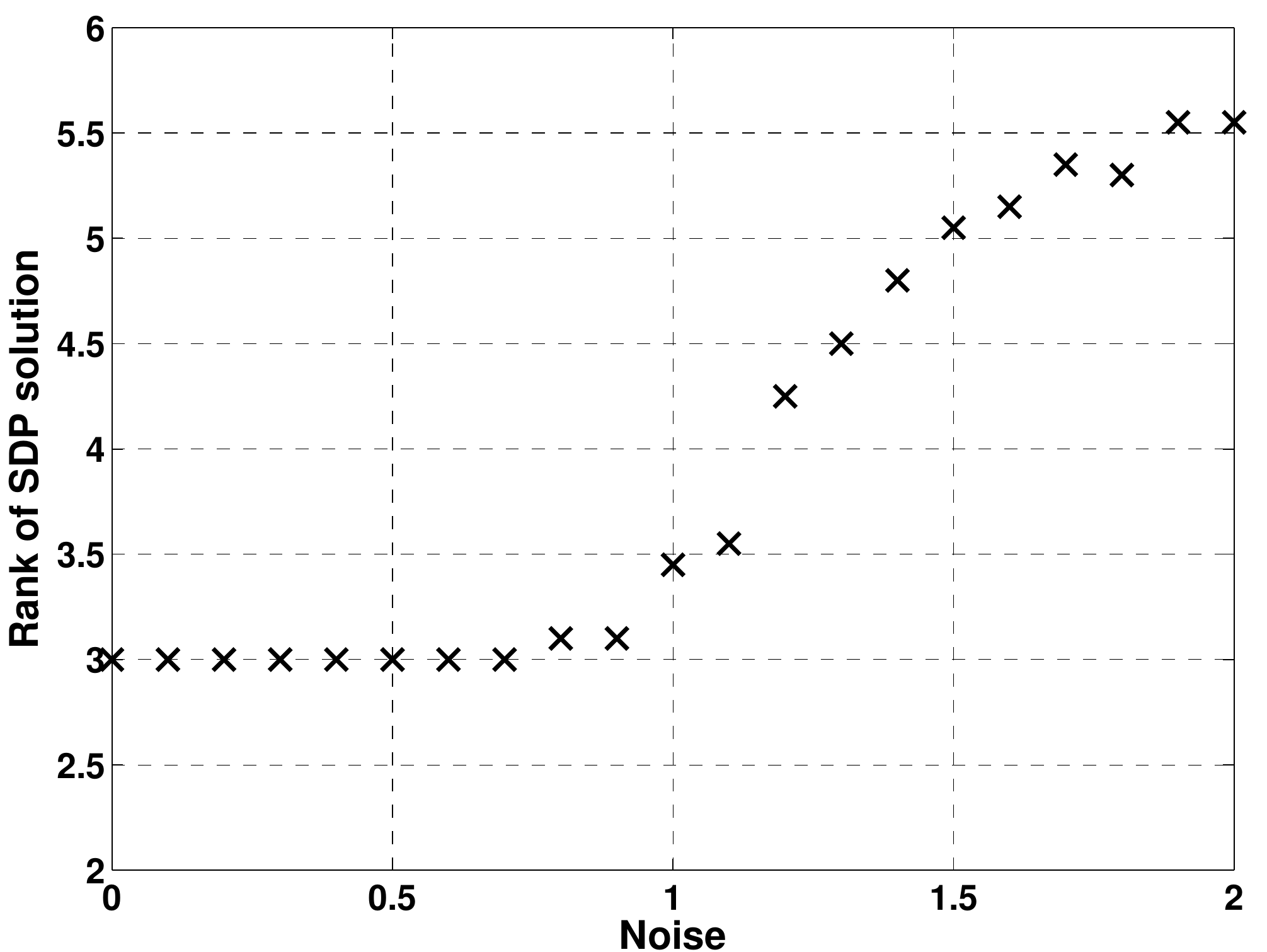}
\caption{\textbf{Left}:  RMSD versus noise level $\varepsilon$. \texttt{GRET-MANOPT1} (resp. \texttt{GRET-MANOPT2}) is the result obtained by refining the output of \texttt{GRET-SPEC} (resp. \texttt{GRET-SDP}) using
manifold optimization. \textbf{Right}: Rank of $\Gs$ in \texttt{GRET-SDP}.}
\label{fig:stability1}
\end{figure}

Finally, we plot the rank of the SDP solution $\Gs$ and notice an interesting phenomenon. Up to a certain noise level, $G^\star$ has the desired rank and rounding is not required.
This means that the relaxation gap is zero for the semidefinite relaxation, and that we can solve the original non-convex problem using \texttt{GRET-SDP} up to a certain noise threshold.
It is therefore not surprising that the RMSD shows no improvement after we refine the SDP solution using manifold optimization. We have noticed that the rank of the SDP solution is stable with
respect to noise for other numerical experiments as well (not reported here).

\section{Discussion}
\label{Conc}

There are several directions along which the present work could be extended and refined. We summarize some of these below.
\begin{enumerate}

\item \textbf{Rank Recovery}. Exhaustive numerical simulations (see, for example, Figure \ref{fig:stability1}) show us that the proposed program is quite stable as far as rank recovery is concerned.
By rank recovery, we mean that $\rnk(G^{\star})=d$. In this case, the relaxation gap is zero -- we have actually solved the original non-convex problem.
We have performed numerical experiments in which we fix some $\Gamma$ for which $\mathrm{rank}(C)=(M-1)d$, and gradually increase the noise in the measurements as per the
model in \eqref{noisy_measurements}.
When the noise is zero, we recover the exact Gram matrix that has rank $d$. What is interesting is that the program keeps returning a rank-$d$ solution up to a certain noise level.
In other words, we observe a \textit{phase transition} phenomenon in which $\rnk(\Gs)$ is consistently $d$ up to a certain noise threshold. This threshold seems to depend on the number of points in the intersection
of the patches, which is perhaps not surprising. A precise understanding of this phase transition in terms of the properties of $\Gamma$ would be an interesting study.

\item \textbf{Conditions on $\G$}. We have seen that the universal rigidity of the body graph (derived from $\G$) is both necessary and sufficient for exact recovery using \texttt{GRET-SDP}. However, to test
unique rigidity, we need to run a semidefinite program \cite{so2007theory}. Unfortunately, the complexity of this program is much more than \texttt{GRET-SDP} itself.
This led us to consider the rank criteria that could be tested efficiently. The rank test is nonetheless not necessary for exact recovery, and
weaker conditions can be found. In particular, an interesting question is whether we could find an efficiently-testable condition that would hold true
for the extreme example in Figure \ref{fig:3examples}, in which $\Gamma$ fails the rank test?

\item \textbf{Tighter Bounds}. The stability in Theorem \ref{thm:stability2} was for the bounded noise model, which made the subsequent analysis quite straightforward.
The goal was to establish that the reconstruction error is within $C\varepsilon$ for some constant $C$ independent of the noise. In particular, the bounds in Theorem \ref{thm:stability2} are quite loose.
One possible direction would be to consider a stochastic noise model with statistically independent perturbations  to tighten the bound.

\item \textbf{Anchor Points}. In sensor network localization, one has to infer the coordinates of sensors from the knowledge of  distances between sensors and its geometric neighbors. In distributed approaches to
sensor localization \cite{cucuringu2012sensor,biswas2008distributed}, one is faced exactly with the multipatch registration problem described in this paper.
Besides the distance information, one often has the added knowledge of the precise positions of selected sensors known as \textit{anchors} \cite{biswas2006snl}. This is often by design and is used to improve the localization
accuracy. The question is can we incorporate the anchor constraints into the present registration algorithm? One possible way of leveraging the existing framework is to introduce an additional
patch (called \textit{anchor patch}) for the anchor points. The anchor coordinates  are assigned to the points in the anchor patch (treating them as local coordinates). This gives us an augmented
bipartite graph $\Gamma_a$ which has one more patch vertex than $\Gamma$, and extra edges connecting the anchor patch
to the anchor vertices. We then proceed exactly as before, that is, we solve for the global coordinates of both the anchor and non-anchor points given the measurements on $\Gamma_a$.
\end{enumerate}

\section*{Acknowledgements}

K. N. Chaudhury was partially supported by the Swiss National Science Foundation under Grant PBELP2-135867 and by Award Number R01GM090200 from the National Institute of General Medical Sciences while he was at Princeton University, and more recently by a Startup Grant from the Indian Institute of Science.

Y. Khoo was partially supported by Award Number R01 GM090200-01 from the National Institute for Health.

A. Singer was partially supported by Award Numbers FA9550-13-1-0076 and FA9550-12-1-0317 from AFOSR, Award Number R01GM090200
from the National Institute of General Medical Sciences, and Award Number LTR DTD 06-05-2012 from the Simons Foundation.

The authors are grateful to the anonymous referees for their comments and suggestions. They particularly thank the referees for pointing out references \cite{zha2009spectral,gortler2013affine}
and their relation to the present work.
The authors also thank Mihai Cucuringu, Lanhui Wang, and Afonso Bandeira for useful discussions, and Nicolas Boumal for advice on the usage of the \texttt{Manopt} toolbox.

\bibliographystyle{siam}
\bibliography{bibref}

\section{Technical proofs}

In this Section, we give the proof of Propositions \ref{prop:Rank_C0} and \ref{prop:basic bound}, and Lemmas  \ref{lemma:unround bound} and \ref{lemma:round bound},.

\subsection{Proof of Proposition \ref{prop:Rank_C0}}
\label{proof:Rank_C0}

We are done if we can show that there exists a bijection between the nullspace of $C$ and that of $C_0$.
To do so, we note that the associated quadratic forms can be expressed as
\begin{equation*}
u^T C u = \min_{z \in \mathbb{R}^{1 \times N+M}} \ \sum_{(k,i) \in E(\Gamma)} \ \lVert z e_{ki} - u_i^T x_{k,i}  \rVert^2,
\end{equation*}
and
\begin{equation*}
v^T C_0 v = \min_{z \in \mathbb{R}^{1 \times N+M}} \ \sum_{(k,i) \in E(\Gamma)} \ \lVert z e_{ki} - v_i^T \bar x_k  \rVert^2.
\end{equation*}
Here $u_1,\ldots,u_M$ are the $d \times 1$ blocks of the vector $u \in \mathbb{R}^{Md \times 1}$.

Now, it follows from \eqref{clean_data} that there is a one-to-one correspondence between $u$ and $v$, namely
\begin{equation*}
u_i = \bar{O}_i v_i \qquad (1 \leq i \leq M),
\end{equation*}
such that $u^T C u = v^T C_0 v$. In other words, the null space of $C$ is related to the null space of $C_0$ through an orthogonal transform, as was required to be shown.

\subsection{Proof of Proposition \ref{prop:basic bound}}
\label{proof:prop:basic_bound}

Without loss of generality, we assume that the smallest Euclidean ball that encloses the clean configuration $\{\bar x_1,\ldots,\bar x_N\}$ is centered at the origin, that is,
\begin{equation}
\label{centering}
\| \bar x_k \| \leq R \qquad (1 \leq k \leq N).
\end{equation}
Let $B_0$ be the matrix $B$ in \eqref{coeff} computed from the clean measurements, i.e., from \eqref{noisy_measurements} with $\varepsilon=0$.
Let $B_0 + H$ be the same matrix obtained from \eqref{noisy_measurements} for some $\varepsilon > 0$.

Recall that $Z_0 = O_0 B_0 L^\dagger$ (by the centering assumption in \eqref{xt}). Therefore,
\begin{equation*}
\|\Zs- \Theta Z_0\|_F  =  \|\Os(B_0+H) L^\dagger - \Theta O_0 B_0 L^\dagger\|_F = \|(\Os - \Theta  O_0) B_0 L^\dagger + \Os H L^\dagger\|_F.
\end{equation*}
By triangle inequality,
\begin{equation}
\label{Zb}
\|\Zs- \Theta Z_0\|_F  \leq \|\Os - \Theta  O_0\|_F  \|B_0 L^\dagger\|_F + \|\Os H L^\dagger\|_F,
\end{equation}
Now
\begin{equation*}
 \|B_0 L^\dagger\|_F \leq  \| L^\dagger \|_{\mathrm{sp}} \|B_0\|_F = \frac{1}{\lambda_2(L)} \|B_0\|_F,
\end{equation*}
where $\lambda_2(L)$ is the smallest non-zero eigenvalue of $L$. On the other hand,
\begin{equation*}
 B_0 = \sum_{(k,i)\in E(\Gamma)} (e^{M}_i \otimes I_d){\bar x_k}e_{ki}^T .
\end{equation*}
Using  Cauchy-Schwarz and \eqref{centering}, we get
\begin{eqnarray*}
 \|B_0\|^2_F &=& \sum_{(k,i)\in E(\Gamma)}  \sum_{(l,j)\in E(\Gamma)} \Tr \left( e_{ki} \bar x_k^T (e^{M}_i \otimes I_d)^T(e^{M}_{j} \otimes I_d)\bar x_{l}e_{lj}^T \right) \\
  &=&         \sum_{(k,i)\in E(\Gamma)}  \sum_{(l,i)\in E(\Gamma)}   \bar x_k^T \bar x_l \ e_{ki}^T  e_{li}. \\
  &\leq &   \sum_{(k,i)\in E(\Gamma)} 2R^2 + \sum_{(k,i)\in E(\Gamma)}  \sum_{(l,i)\in E(\Gamma)} R^2.
 \end{eqnarray*}
Therefore,
\begin{equation}
\label{temp1}
 \|B_0 L^\dagger\|_F \leq  \lambda_2(L)^{-1} \sqrt{2+N} |E(\Gamma)|^{1/2} R.
\end{equation}
As for the other term in \eqref{Zb}, we can write
\begin{equation*}
\|\Os H L^{\dagger}\|_F \leq \|L^\dagger\|_{\mathrm{sp}} \|\Os H \|_F  = \lambda_2(L)^{-1}  \|\Os H\|_F.
\end{equation*}
Now
\begin{equation*}
 \Os H = \Os (B-B_0) = \sum_{(k,i)\in E(\Gamma)} \Os_i \epsilon_{k,i} e_{ki}^T .
\end{equation*}
Therefore, using Cauchy-Schwarz, the orthonormality of the columns of $\Os_i$'s, and the noise model \eqref{noisy_measurements}, we get
\begin{eqnarray*}
\| \Os H \|_F^2  &=& \sum_{(k,i)\in E(\Gamma)}  \sum_{(l,j)\in E(\Gamma)}   (\Os_i \epsilon_{k,i} )^T  (\Os_j \epsilon_{l,j})  e_{ki}^T  e_{li} \\
&\leq&  \sum_{(k,i)\in E(\Gamma)} 2\varepsilon^2 + \sum_{(k,i)\in E(\Gamma)}  \sum_{(l,i)\in E(\Gamma)} \varepsilon^2 + \sum_{(k,i)\in E(\Gamma)}  \sum_{(k,j)\in E(\Gamma)} \varepsilon^2.
\end{eqnarray*}
This gives us
\begin{equation}
\label{temp2}
\|\Os H L^{\dagger}\|_F  \leq \sqrt{2+N+M} |E(\Gamma)|^{1/2} \lambda_2(L)^{-1} \varepsilon.
\end{equation}
Combining \eqref{Zb},\eqref{temp1}, and \eqref{temp2}, we get the desired estimate.

\subsection{Proof of Lemma \ref{lemma:unround bound}}
\label{proof:lemma:unround bound}

The proof is mainly based on the observation that if $u$ and $v$ are unit vectors and $0 \leq u^Tv \leq 1$, then
\begin{equation}
\label{basic_Gram}
\| u - v \| \leq \|uu^T- vv^T\|_F.
\end{equation}
Indeed,
\begin{equation*}
 \|uu^T- vv^T\|_F^2 = \Tr \left(uu^T + vv^T - 2 (u^Tv)^2 \right) \geq  \Tr (uu^T + vv^T - 2 u^Tv) =\| u - v \|^2.
\end{equation*}

To use this result in the present setting, we use the theory of principal angles \cite[Ch. 7.1]{bhatia1997matrix}. This tells us that, for the orthonormal systems $\{u_1,\ldots,u_d\}$ and $\{s_1,\ldots,s_d\}$, we can find $\Omega_1, \Omega_2 \in \mathbb{O}(Md)$ such that
\begin{enumerate}
\item $\Omega_1  [u_1 \cdots u_d] =   [u_1 \cdots u_d] \Theta_1^T$ where  $\Theta_1 \in \mathbb{O}(d)$,
\item $\Omega_2  [s_1 \cdots s_d] =   [s_1 \cdots s_d] \Theta_2^T$ where  $\Theta_2 \in \mathbb{O}(d)$,
\item $(\Omega_1 s_i)^T ( \Omega_2 u_j) = 0$ for $i \neq j$, and $0 \leq  (\Omega_1 s_i)^T( \Omega_2 u_i ) \leq 1$ for $1\leq i \leq d$.
\end{enumerate}
Here $\Theta_1$ and $\Theta_2$ are the orthogonal transforms that map $\{u_1,\ldots,u_d\}$ and $\{s_1,\ldots,s_d\}$ into the corresponding principal vectors.

Using properties $1$ and $2$ and the fact\footnote{To see why the eigenvalues of $\Gs$ are at most $M$ (the authors thank Afonso Bandeira for suggesting this), note that by the SDP constraints, for every block $G_{ij}$,
$$u^TG_{ij}v \leq (\|u\|^2+\|v\|^2)/2 \quad (u,v \in \mathbb{R}^d).$$
Let $x = (x_1,\ldots,x_M)$ where each $x_i  \in \mathbb{R}^d$. Then
$$x^TGx = \sum_{i,j} x_i^TG_{ij}x_j  \leq \sum_{i,j}  (\|x_i\|^2+\|x_j\|^2)/2 = M \|x\|^2.$$
} that $\alpha_i \leq M$, we can write
\begin{equation*}
\sqrt{M}  \  \|  \Theta_1  \Ws - \Theta_2 O_0 \|_F \leq   \| \Omega_1  [\alpha_1 u_1 \cdots \alpha_d u_d] -   M \Omega_2 [ s_1 \cdots  s_d]   \|_F + \Big[ \sum_{i=1}^d (M - \alpha_i )^2 \Big]^{1/2}.
\end{equation*}
Moreover, by triangle inequality,
\begin{equation*}
\|  \Omega_1  [\alpha_1 u_1 \cdots \alpha_d u_d]  - M \Omega_2 [s_1 \cdots s_d]  \|_F \leq M \|  \Omega_1 [u_1 \cdots u_d] -  \Omega_2 [ s_1 \cdots s_d] \|_F + \Big[ \sum_{i=1}^d (M - \alpha_i )^2 \Big]^{1/2}.
\end{equation*}
Therefore,
\begin{equation*}
\label{BB2}
\sqrt{M}  \  \|  \Theta_1  \Ws - \Theta_2 O_0 \|_F  \leq M \|  \Omega_1 [u_1 \cdots u_d] -  \Omega_2 [ s_1 \cdots s_d] \|_F + 2\Big[ \sum_{i=1}^d (M - \alpha_i )^2 \Big]^{1/2}.
\end{equation*}

Now, using \eqref{basic_Gram} and the principal angle property $3$, we get
\begin{equation*}
\| \Omega_1[  u_1 \cdots  u_d] - \Omega_2 [s_1 \cdots s_d]  \|_F \leq \| \sum_{i=1}^d \Omega_1 u_i(\Omega_1 u_i)^T -  \sum_{i=1}^d \Omega_2 s_i ( \Omega_2 s_i )^T \|_F .
\end{equation*}
Moreover, using triangle inequality and properties $1$ and $2$, we have
\begin{equation*}
\label{BB1}
M \| \sum_{i=1}^d \Omega_1 u_i  (\Omega_1 u_i)^T -  \sum_{i=1}^d \Omega_2 s_i ( \Omega_2 s_i )^T \|_F \leq  \|{\Ws}^T \Ws - G_0 \|_F +  \Big[ \sum_{i=1}^d (M - \alpha_i )^2 \Big]^{1/2}.
\end{equation*}
Finally, note that by Lemma \ref{lemma:Mirsky},
\begin{equation}
\label{BB3}
 \Big[ \sum_{i=1}^d (M - \alpha_i )^2 \Big]^{1/2} \leq  \|{\Ws}^T \Ws - G_0 \|_F.
\end{equation}
Combining the above relations, and setting $\Theta = \Theta_1^T \Theta_2$, we arrive at Lemma \ref{lemma:unround bound}.

\subsection{Proof of Lemma \ref{lemma:round bound}}
\label{proof:lemma:round bound}

This is done by adapting the following result by Li \cite{li1995polar}: If $A,B$ are square and non-singular, and if  $\mathcal{R}(A)$ and $\mathcal{R}(B)$ are their orthogonal
rounding (obtained from their polar decompositions  \cite{higham1986}), then
\begin{equation}
\label{round_Li}
\| \mathcal{R}(A) - \mathcal{R}(B) \|_F \leq \frac{2}{\sigma_{\min}(A) + \sigma_{\min}(B)} \|A - B\|_F.
\end{equation}
We recall that if $A=U\Sigma V^T$ is the SVD of $A$, then $\mathcal{R}(A)=UV^T$.

Note that it is possible that some of the blocks of $\Ws$ are singular, for which the above result does not hold. However, the number of such blocks can be controlled
by the global error.
More precisely, let $\mathcal{B} \subset \{1,2,\ldots,M\}$ be the index set such that,  for $i \in \mathcal{B}$, $\|\Ws_i -  \Theta\|_F \geq \beta$. Then
\begin{equation*}
\| \Ws - \Theta O_0\|_F ^2 \geq \sum_{i \in \mathcal{B}} \|\Ws_i -  \Theta\|_F^2  = \lvert \mathcal{B} \rvert \beta^2.
\end{equation*}
This gives a bound on the size of $\mathcal{B}$. In particular, the rounding error for this set can trivially be bounded as
\begin{equation}
\label{BL1}
\sum_{i \in \mathcal{B}} \|O^{\star}_i -  \Theta\|_F ^2 \leq \sum_{i \in \mathcal{B}} 2d = \frac{2 d} {\beta^2} \| \Ws - \Theta O_0\|_F ^2.
\end{equation}
On the other hand, we known that, for $i \in \mathcal{B}^c$, $\|\Ws_i -  \Theta\|_F < \beta$.  From Lemma \ref{lemma:Mirsky}, it follows that
\begin{equation*}
\lvert 1-\sigma_{\min} (\Ws_i)\rvert  \leq \|{\Ws_i} - \Theta\|_{\mathrm{sp}} < \beta.
\end{equation*}
Fix $\beta \leq 1$. Then $\sigma_{\min}(W^{\star}_i)>1-\beta$, and we have from \eqref{round_Li},
\begin{equation}
\label{BL2}
 \|O^{\star}_i -  \Theta\|_F \leq  \frac{2}{2-\beta} \|{W^{\star}_i} -  \Theta\|_F\qquad (i \in \mathcal{B}^c)
\end{equation}
Fixing $\beta=1/\sqrt{2}$ and combining \eqref{BL1} and \eqref{BL2}, we get the desired bound.

\end{document}